\documentclass{article}

\usepackage{amsmath}
\usepackage{amsthm}
\usepackage{amssymb}
\usepackage{amsfonts}
\usepackage{bm}
\usepackage{dsfont}
\usepackage{mathtools}
\usepackage{pifont}
\usepackage{xspace}
\usepackage{xcolor}
\usepackage{siunitx}
\usepackage{enumitem}
\usepackage{float}
\usepackage{wrapfig}

\usepackage{microtype}
\usepackage{graphicx}
\usepackage{subfigure}
\usepackage{booktabs} 

\usepackage{hyperref}
\hypersetup{
    colorlinks      = true,     
    urlcolor        = blue,     
    linkcolor       = purple,   
    citecolor       = violet    
}


\newcommand{\ntau}{N_{\tau}}
\newcommand{\valtau}[1]{v_{#1}^{(\tau)}}
\newcommand{\phitau}[1]{\phi_{#1}^{(\tau)}}
\newcommand{\wtau}{w^{(\tau)}}
\newcommand{\valtaudot}{\valtau{(\cdot)}}
\newcommand{\valo}[1]{v_{o|#1}}

\renewcommand{\phi}{\varphi}
\renewcommand{\epsilon}{\varepsilon}

\newcommand{\lp}{\left(}
\newcommand{\rp}{\right)}



\newcommand{\squishstart}{
  \begin{list}{$\bullet$}{
    \setlength{\itemsep}{1pt}
    \setlength{\parsep}{0pt}
    \setlength{\topsep}{1pt}
    \setlength{\partopsep}{0pt}
    \setlength{\leftmargin}{1em}
    \setlength{\labelwidth}{1.5em}
    \setlength{\labelsep}{0.5em} 
  } 
}
\newcommand{\squishend}{
  \end{list}
}
\newcommand{\squishstartappendix}{
  \begin{list}{$\bullet$}{
    \setlength{\itemsep}{1pt}
    \setlength{\parsep}{2pt}
    \setlength{\topsep}{1pt}
    \setlength{\partopsep}{1pt}
    \setlength{\leftmargin}{1em}
    \setlength{\labelwidth}{1.5em}
    \setlength{\labelsep}{0.5em} 
  } 
}


\DeclarePairedDelimiterX{\infdivx}[2]{(}{)}{%
  #1\;\delimsize|\delimsize|\;#2%
}



\def\vtheta{{\bm{\theta}}}

\def\vlambda{{\bm{\lambda}}}

\def\ve{{\bm{e}}}
\def\vf{{\bm{f}}}

\def\vt{{\bm{t}}}

\def\vx{{\bm{x}}}
\def\vy{{\bm{y}}}


\def\mK{{\bm{K}}}

\def\mX{{\bm{X}}}


\newcommand{\bI}{\mathbb{I}}

\newcommand{\bR}{\mathbb{R}}

\newcommand{\bZ}{\mathbb{Z}}

\theoremstyle{plain}
\newtheorem{theorem}{Theorem}[section]
\newtheorem{proposition}[theorem]{Proposition}
\newtheorem{lemma}[theorem]{Lemma}
\newtheorem{corollary}[theorem]{Corollary}
\theoremstyle{definition}

\theoremstyle{remark}
\newtheorem{remark}[theorem]{Remark}

\PassOptionsToPackage{numbers}{natbib}
\AtBeginDocument{\let\citet\citep}



\usepackage[final]{neurips2025}


\usepackage[utf8]{inputenc} 
\usepackage[T1]{fontenc}    
\usepackage{hyperref}       
\usepackage{url}            
\usepackage{booktabs}       
\usepackage{amsfonts}       
\usepackage{nicefrac}       
\usepackage{microtype}      
\usepackage{xcolor}         

\title{Incentivizing Time-Aware Fairness in Data Sharing}

%

\author{%
  Jiangwei Chen$^{1\,2}$
  \quad
  Kieu Thao Nguyen Pham$^1$
  \quad
  Rachael Hwee Ling Sim$^1$ \\
  \textbf{Arun Verma}$^3$
  \quad
  \textbf{Zhaoxuan Wu}$^3$
  \quad
  \textbf{Chuan-Sheng Foo}$^2$
  \quad
  \textbf{Bryan Kian Hsiang Low}$^1$ \\
  $^1$Department of Computer Science, National University of Singapore, Singapore \\
  $^2$Institute for Infocomm Research, Agency for Science, Technology and Research, Singapore \\
  $^3$Singapore-MIT Alliance for Research and Technology, Singapore \\
  \texttt{\{chenj,nguyen.pkt,rachael.sim\}@u.nus.edu} \\
  \texttt{\{arun.verma,zhaoxuan.wu\}@smart.mit.edu} \\
  \texttt{foo\_chuan\_sheng@i2r.a-star.edu.sg} \\
  \texttt{lowkh@comp.nus.edu.sg}
}

\begin{document}

\maketitle

\begin{abstract}
In collaborative data sharing and machine learning, multiple parties aggregate their data resources to train a machine learning model with better model performance. 
However, as the parties incur data collection costs, they are only willing to do so when guaranteed incentives, such as fairness and individual rationality. 
Existing frameworks assume that all parties join the collaboration simultaneously, which does not hold in many real-world scenarios. 
Due to the long processing time for data cleaning, difficulty in overcoming legal barriers, or unawareness, the parties may join the collaboration at different times. 
In this work, we propose the following perspective: 
As a party who joins earlier incurs higher risk and encourages the contribution from other wait-and-see parties, that party should receive a reward of higher value for sharing data earlier. 
To this end, we propose a fair and time-aware data sharing framework, including novel time-aware incentives.
We develop new methods for deciding reward values to satisfy these incentives. 
We further illustrate how to generate model rewards that realize the reward values and empirically demonstrate the properties of our methods on synthetic and real-world datasets.
\end{abstract}

\section{Introduction}
\emph{Collaborative machine learning}~(CML) presents an appealing paradigm where multiple parties can aggregate their data to train a \emph{machine learning}~(ML) model with better model performance~\citep{Sim2020, Xu2021-oa}. 
This paradigm has been widely adopted in various real-world applications, including clinical trials~\citep{drazen2016importance, lo2017sharing, Hopkins2018}, data marketplaces~\citep{8656952, HUANG2021586}, precision agriculture~\citep{SPANAKI2021102350, spokesman2021agriculture}.
However, parties, who incur non-trivial data collection and processing costs~\citep{Karimireddy2022flmachnism}, may be unwilling to collaborate without fair reward. 
For example, hospitals may be reluctant to share their data with academic institutions due to the substantial investment required to conduct clinical trials and generate medical data~\citep{hulsen2020sharing}.
To incentivize such parties to collaborate, existing frameworks~\citep{ohrimenko2019collaborative, Sim2020} have proposed two main steps: 
In \emph{data valuation}, a party is assigned a value for its shared data.
In \emph{reward realization}, a party will receive money, synthetic data, or an ML model as a reward whose value satisfies incentives like \emph{fairness} and \emph{individual rationality} adapted from \emph{cooperative game theory}~(CGT).
These incentives respectively entail that a party's reward value should be higher than that of others sharing less valuable data~\citep{seb, NEURIPS2022_c50c42f8} and higher than what it can achieve alone.

In this work, we consider how data valuation and reward realization should change if parties join the data sharing process at different times due to the long processing time for data cleaning, delay in overcoming legal barriers, or unawareness of the collaboration opportunities~\citep{lo2017sharing} (see App.~\ref{app:justify-setting} for more justification and description of our setting).
As a concrete example, hospitals sharing their data from clinical trials often take heterogeneous time to convert these medical data into a standardized format and secure informed consent from their patients~\citep{lo2017sharing}.
Additionally, a data marketplace mediator, who wishes to encourage participation, may allow parties to freely/asynchronously join the collaboration and continuously update the ML model with new data until a target accuracy is met~\citep{10.1145/3328526.3329589}.
Then, the mediator closes the collaboration and rewards the contributing parties.\footnote{
This time-sensitive setting is realistic as e-commerce marketplaces have used similar techniques (e.g., threshold discounting~\citep{threshold-discount}, offering discounts to early customers~\citep{online-retail}) to encourage prompt actions~\citep{real-time-urgency}.\label{fn:setting}}
In these examples, \textbf{how should a party's reward value change if it joins the collaboration earlier? Should parties contributing data of the same value at different joining times receive equal rewards?}

\begin{wrapfigure}{r}{0.45\textwidth}
    \centering
    \vspace{-3mm}
    \includegraphics[width=\linewidth]{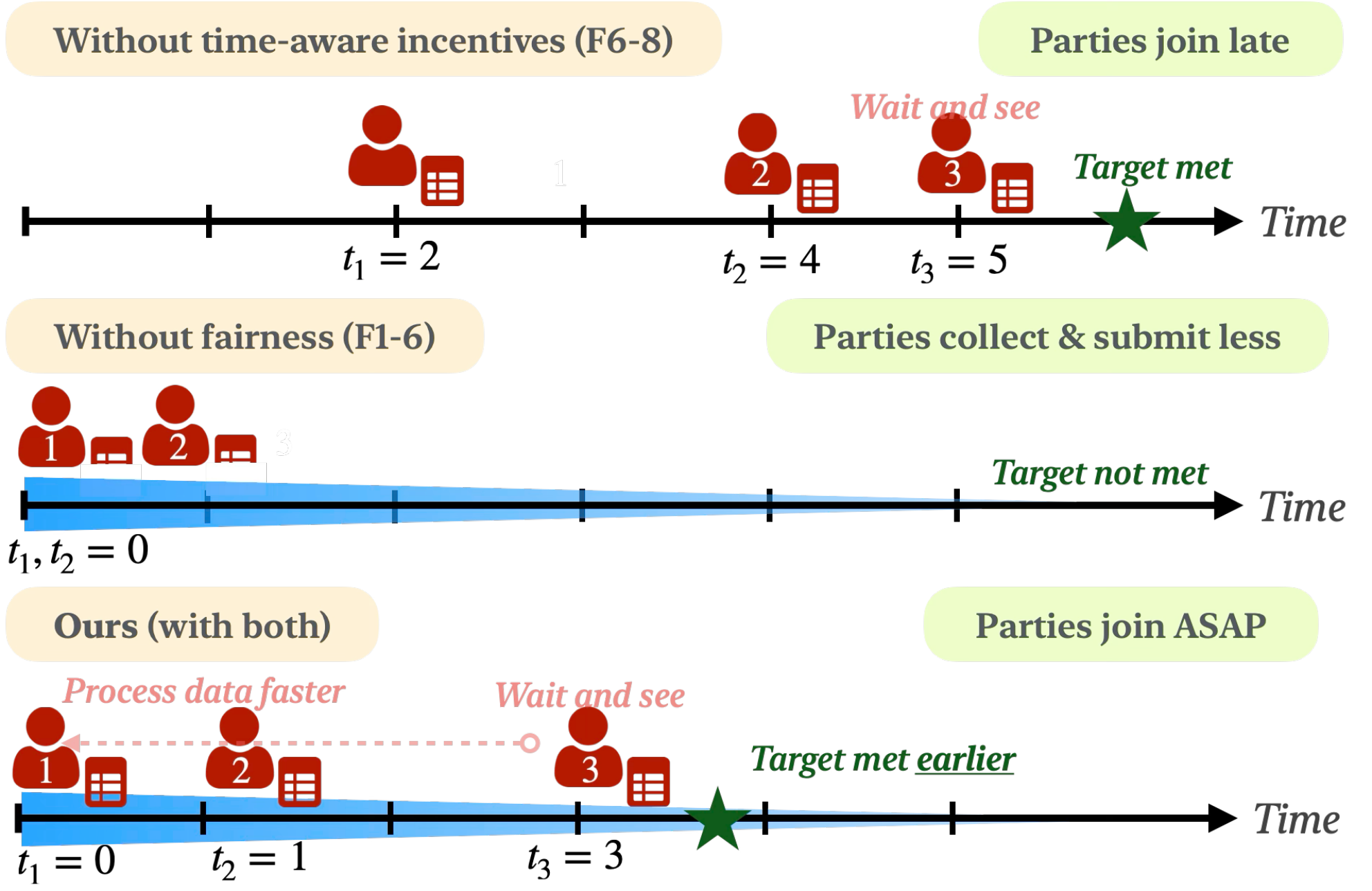}
    \caption{Overview of our data sharing problem setting (App.~\ref{app:justify-setting}) and the impact of fairness and our time-aware incentives~(Sec.~\ref{sec:incentives}).}
    \label{fig:overview}
\end{wrapfigure}

We propose the following perspective as illustrated by Fig.~\ref{fig:overview}:
When parties take different times to join the collaboration, \textbf{a party should receive a reward of \emph{higher} value for contributing data earlier to incentivize early contribution}.\footnote{
We consider settings where data retain value over time (e.g., medical data), but earlier access enables better utility. 
We exclude time series data (e.g., stock prices), where recent observations are inherently more predictive.}
Firstly, parties who join earlier incur higher risks and hence require rewards of higher value.
In the data marketplace example, a party who contributes data early risks receiving late or no reward, unlike another party who contributes just before the target accuracy is met.
Next, the early party's contribution may also encourage the contribution from other wait-and-see parties who assess the likelihood of reaching the target accuracy before committing~\citep{CASON202178}.
Our perspective also aligns with the socio-psychological equity theory~\citep{homans, Selten1978} and the signaling effect~\citep{Beal2022-up} observed in economics:
Given identical financial contributions, the one contributing earlier is entitled to a reward of higher value. 

Our work addresses two key challenges.
The first challenge is to \emph{mathematically formalize time-aware incentives that realize our above perspective}.
Na{\"i}vely, our perspective would conflict with fairness incentives which dictates that parties contributing data of equal value receive equal rewards \emph{regardless} of their joining times.
In Sec.~\ref{sec:incentives}, we resolve these conflicts by
(i) adding a pre-condition (e.g., equal joining times) such that existing incentives (e.g., fairness) should hold and 
(ii) only requiring a party's reward value to not decrease (instead of increase) from contributing earlier in some cases.
The second challenge is to \emph{decide the reward values that satisfy the existing and new time-aware incentives}. 
At first glance, one can just divide data value, such as the Shapley value \citep{shapley:book1952, Ghorbani2019-lb}, by the joining time.
However, this simple solution may overly reduce the reward value of a party who joins late, thus violating \emph{individual rationality}.
To tackle this challenge, we introduce two new methods in Sec.~\ref{sec:rewards}.
Our first \textbf{time-aware reward cumulation} method divides the entire duration of collaboration into multiple time intervals and considers each time interval as a separate collaboration among the parties present during that interval.
A party's final reward value is a sum of the reward values assigned in each interval, weighted by the time interval importance.
Our second method leverages a novel \textbf{time-aware data valuation function} and the Shapley value to determine rewards that satisfy all incentives.
We then realize the rewards by likelihood tempering~\citep{sim2023incentives} and training models only partially on the aggregate data.
We empirically validate our proposed time-aware methods in Sec.~\ref{sec:experiments}.

\section{Related Work}
\textbf{Time Consideration in FL.} 
\emph{Federated learning}~(FL) \citep{McMahan2017-vb} is a form of collaborative machine learning (and an alternative to data sharing) that involves training on data from multiple parties (clients) without centralizing them.
Instead, in each round, collaborating parties share parameters' updates for improving the model (e.g., gradients).
The bulk of the FL works~\citep{LI2020106854, ZHANG2021106775} has focused on improving the model's utility and operated under the assumption that all parties are fully cooperative and do not require external incentives for contributing their data.
While there is some research on providing incentives in FL, such as encouraging collaborative fairness~\citep{Lin2023, DBLP:conf/iclr/WuARL24} and egalitarian fairness~\citep{Li2020Fair, pmlr-v139-li21h}, these works do not guarantee that a party's reward (across rounds) improves from contributing data earlier.
To provide such guarantees, we consider the data sharing setting as a first step.

\textbf{Data Valuation.}
While existing data valuation methods have employed a range of metrics to value data, such as validation accuracy~\citep{Jia2019-th, Ghorbani2019-lb}, diversity~\citep{NEURIPS2021_59a3adea}, generalization performance~\citep{pmlr-v162-wu22j}, and cost~\citep{heckman2015pricing}, they lack consideration of the parties' joining times (and hence when they contributed their data).
\citet{liu2023data} can incorporate the parties' joining times based on the assumption that the permutation of parties affects the value of data, but there is no guarantee that a party would receive a reward of higher value for contributing its data earlier.
Although data valuation methods in FL can integrate sequential information by considering the weighted average of the parties' contributions in each round (e.g., utilizing uniform weights~\citep{Wang2020-av} or decaying weights~\citep{Song2019-ck}), they do not explicitly address nor theoretically guarantee individual rationality \ref{item:IR} and time-aware incentives \ref{item:necessity}~to~\ref{item:strict-order-fairness}.

\textbf{Incentives in CML.} 
\citet{Sim2020, seb, Xu2021-oa} have prescribed how to give out data or model rewards that satisfy incentives (such as fairness) based on the popular cooperative game theory concept, Shapley value~\citep{shapley:book1952}. 
However, these works have also assumed that all parties are present at the start of the collaboration and hence do not realize the perspective that a party should receive a reward of higher value for contributing data earlier.
Additionally, while these works suggest how to give out model rewards, our focus is on deciding the reward values for fair and time-aware incentives.
\citet{ge2024early} is the only other work that studies how to incentivize early arrival in cooperative games. 
However, our settings, incentives and solutions differ significantly. 
The key distinctions are discussed at length in App.~\ref{appendix:diff-aamas}.

\section{Problem Formulation} 
\label{sec:problem}
We consider a problem setting where a trusted mediator~(e.g., data-sharing frameworks implemented by government agencies~\citep{imda-data-sharing})
identifies a common ML task (e.g., disease prediction) of interest to multiple parties (e.g., hospitals).
The mediator invites parties to freely/asynchronously join the collaboration by contributing data and continuously updating the ML model with the newly shared data. 
The mediator closes the collaboration after a target model performance/accuracy is met~\citep{10.1145/3328526.3329589}.

Let $n$ denote the number of parties who have joined the collaboration. 
We denote the set of all parties (i.e., \emph{grand coalition}) as $N \triangleq \{1, \ldots, n\}$ and their respective datasets as $D_1, \ldots, D_n$. 
Any subset of parties, $C \subseteq N$, is a \emph{coalition} of parties with an aggregated dataset $D_C \triangleq \bigcup_{i \in C} D_i$. 
In \emph{time-aware} data sharing, we further consider that each party $i$ joins at a different time value $t_i \in \bZ_{\geq 0}$ due to differences in processing times, urgency, and risk levels (see App.~\ref{app:justify-setting} for a detailed justification).
A larger time value indicates a later joining time, and the party joining earliest is always assigned a time value of $0$.\footnote{
For clarity, we discretize time into non-negative integers: 
Given any earliest joining time $t'$ and positive interval length $\ell$, time $t$ maps to $\lfloor (t - t') / \ell \rfloor$.
For instance, Windows systems~\citep{ms-dtyp-filetime} use $t'=1$ January 1601 00:00 UTC and $\ell=$ \SI{100}{\ns}.
}
Collectively, the joining time values are denoted by the vector $\vt \triangleq (t_1, \ldots, t_n)$.
After the collaboration closes, the trusted mediator values the shared data, decides the reward value $r_i$, and trains an ML model as a reward (in short, \emph{model reward}) valued at $r_i$ for every party $i \in N$.

A party's data should be valued relative to the data contributed by the other parties~\citep{Sim2022-zg}.
So, to value data, the mediator uses a data valuation function $v$ that maps every coalition $C$ to its value $v_C$.
For example, $v_C$ can be the model performance (e.g., validation accuracy) achieved by training on the aggregated dataset $D_C$.
We also denote $v_i \triangleq v_{\{i\}}$.
As in the works of \citet{Sim2020, Lin2023}, we do not check the truthfulness of the parties' data and value datasets as-is.
Based on the value $v_C$ for each coalition $C \subseteq N$ and the joining time values $\vt$, the mediator must decide the reward value $r_i$ for every party $i \in N$ that satisfies existing fairness and our time-aware incentives to encourage early contribution.
We will outline the incentives in Sec.~\ref{sec:incentives}, the necessary characteristics for data valuation function $v$ in Sec.~\ref{sec:valuation}, and two new methods to decide reward values $(r_i)_{i \in N}$ that satisfy our incentives in Sec.~\ref{sec:rewards}.

\section{Incentives in Time-Aware Data Sharing} 
\label{sec:incentives}
In our time-aware setting, we encourage the parties to share their data as early as possible. 
To this end, we incorporate existing incentives \ref{item:non-negativity} to \ref{item:uselessness} (e.g., fairness) from prior works~\citep{Sim2020} and novelly consider the impact of joining time values when defining time-aware incentives \ref{item:necessity} to \ref{item:strict-order-fairness}. 
In this section, we formally define the incentive conditions that should hold for reward values $(r_i)_{i \in N}$ based on the value $v_C$ of aggregated dataset $D_C$ for all coalitions $C \subseteq N$ and the joining time values $\vt$.
We also justify how these incentives encourage parties to join early instead of withholding their data with a wait-and-see attitude.
We use $^*$ to mark incentives where we introduce time-aware considerations and $^{\#}$ to mark our \emph{new} incentives that specify how $(r_i)_{i \in N}$ should vary for \emph{different} joining times.

\begin{enumerate}[label=F\arabic*,leftmargin=*,itemsep=5pt,topsep=5pt,parsep=0pt,partopsep=0pt]

\item \label{item:non-negativity} \textbf{Non-negativity.} 
The value of reward each party receives should be non-negative: 
$\forall i \in N\ \ r_i \geq 0\ .$

\item \label{item:IR} \textbf{Individual Rationality.} 
Each party should receive a reward whose value is at least that of its own data: 
$\forall i \in N\ \ r_i \geq v_i\ .$

\item \label{item:sym-equal-time} \textbf{Equal-Time Symmetry$^*$.} 
If parties $i$ and $j$ enter the collaboration simultaneously, and the inclusion of party $i$'s data results in the same improvement to the model quality as that of party $j$ when trained on the aggregated data of any coalition, then both parties should receive rewards of equal value: 
$\forall i, j \in N$ s.t.~$i \neq j$, 
\begin{equation*}
    (t_i = t_j) \land (\forall C \subseteq N \setminus \{i, j\}\ \ v_{C \cup\{i\}} = v_{C \cup \{j\}}) \implies r_i = r_j\ .
\end{equation*}

\item \label{item:desirability-equal-time} \textbf{Equal-Time Desirability$^*$.} 
If parties $i$ and $j$ enter the collaboration simultaneously, and the inclusion of party $i$'s data improves the model quality more than that of party $j$ when trained on the aggregated data of at least one coalition, without the reverse being true, then party $i$ should receive a reward of higher value than party $j$: 
$\forall i, j \in N$ s.t.~$i \neq j$,
\begin{align*}
    &(t_i = t_j) \land (\exists B \subseteq N \setminus \{i, j\}\ \ v_{B \cup \{i\}} > v_{B \cup \{j\}}) \\ 
    &{\land (\forall C \subseteq N \setminus \{i, j\}\ \ v_{C \cup \{i\}} \geq v_{C \cup \{j\}})}  
    \implies r_i > r_j.
\end{align*}

\item \label{item:uselessness} \textbf{Uselessness$^*$.} 
If the inclusion of party $i$'s data fails to improve the model quality when trained on the aggregated data of any coalition, then party $i$ is \emph{useless} and should be assigned a reward with no value: 
$\forall i \in N$, 
\begin{equation*}
    (\forall C \subseteq N \setminus \{i\}\ \ v_{C \cup \{i\}} = v_C) \implies r_i = 0\ .
\end{equation*}

\end{enumerate}

In \ref{item:sym-equal-time} and \ref{item:desirability-equal-time}, we add a pre-condition of equal joining time values to accommodate our perspective that a party who joins earlier may receive a higher reward value.
However, our adaptation of \ref{item:uselessness} ignores the joining time values to prevent a useless party from unfairly increasing its reward to $>0$ by joining earlier, which will disincentivize other parties from sharing data.

\begin{enumerate}[label=F\arabic*,leftmargin=*,itemsep=5pt,topsep=5pt,parsep=0pt,partopsep=0pt, resume]

\item \label{item:necessity} \textbf{Necessity$^{\#}$.} 
If the exclusion of data from either party $i$ or party $j$ renders the model trained on the aggregated data of any remaining coalitions useless, then both parties are \emph{necessary} and should receive rewards of equal value: 
$\forall i,j \in N$ s.t.~$i \ne j$,
\begin{equation*}
(\forall C \subseteq N\ \ \{i,j\} \nsubseteq C \implies v_C=0) \implies r_i = r_j\ .
\end{equation*}

\ref{item:necessity} guarantees that parties essential to the success of the collaboration are treated equally despite having different joining time values and datasets.

\emph{Justification.}
A party (e.g., a specialist hospital) may uniquely possess high-quality medical data such that the ML model trained without its data cannot achieve the required accuracy threshold and is untrustworthy and impractical to use~\citep{Farajollahi2023-ed, Kufel2023-xm}.
Necessity ensures that a necessary party $j$ is not penalized for joining later. 
Without necessity, $j$'s reward could decrease to $0$ over time, disincentivizing $j$ from curating high-quality data, joining the collaboration, causing the collaboration to only have value $v_{N \setminus j} = 0$.

\item \label{item:order-fairness} \textbf{Time-based Monotonicity$^{\#}$.} 
When a party $i$ joins the collaboration earlier,\footnote{
The earliest party $j$ is always assigned $t_j=0$. 
Thus, its reward value need not improve from joining earlier.} 
\emph{ceteris paribus}, the value of its reward should not decrease. 
Let $r_i'$ be the new reward received by party $i$ upon the new joining time values $\vt' \triangleq (t'_1, \ldots, t'_n)$. 
Then, $\forall i \in N$,
\begin{equation*}
(t_i' < t_i) \land (\forall j \in N \setminus \{i\}\ \ t_j' = t_j) \implies r_i' \geq r_i\ .
\end{equation*}

\begin{remark}[Time-based Equal-Value Desirability$^{\#}$] \label{remark:desirability}
    A natural consequence of \ref{item:sym-equal-time} and \ref{item:order-fairness} is time-based desirability. 
    That is, if party $i$ joins the collaboration earlier than party $j$, and party $i$'s data yields the same improvement in model quality as that of party $j$, then the value of reward received by party $i$ should not be less than party $j$: 
    $\forall i, j \in N$ s.t.~$i \neq j$,
    \begin{equation*}
    (t_i < t_j) \land (\forall C \subseteq N \setminus \{i, j\}\ \ v_{C \cup\{i\}} = v_{C \cup \{j\}}) \implies r_i \geq r_j\ .
    \end{equation*}
    To see this, suppose that $t_j' = t_i < t_j$.
    Then, $r_j' = r_i$ by \ref{item:sym-equal-time}, and $r_i = r_j' \geq r_j$ by \ref{item:order-fairness}.
    This incentive complements and contrasts with the equal-time symmetry \ref{item:sym-equal-time}.
\end{remark}

\begin{remark}[Reason for \emph{weak} inequality in \ref{item:order-fairness} and Remark~\ref{remark:desirability}]
    \ref{item:uselessness} requires that any useless party $i$ receive a reward of value $0$ despite their joining times, i.e., $r'_i = r_i = 0$.
    \ref{item:necessity} requires that necessary parties $i, j$ receive equal rewards despite $j$ joining earlier. 
    Hence, $r'_i > r_i$ and $r_i > r_j$ would not always hold in \ref{item:order-fairness} and Remark~\ref{remark:desirability}.
    Instead, we need an additional pre-condition on party $i$ to define time-based \emph{strict} monotonicity~\ref{item:strict-order-fairness}:
\end{remark}

\item \label{item:strict-order-fairness} \textbf{Time-based Strict Monotonicity$^{\#}$.} 
Let $\mathbb{I}_i$ indicate if party $i$'s data yields additional improvement in model quality to that of some coalition comprising only $i$'s predecessors who joined earlier (i.e., $\{j: t_j < t_i\}$).
Formally, 
$
\mathbb{I}_i \triangleq (\exists C \subseteq \{j: t_j < t_i\}\ \ v_{C \cup \{i\}} > v_C + v_i)
$.
Any party $i$ who joins the collaboration earlier can receive a reward of higher value if $\mathbb{I}_i$ holds:
$\forall i \in N$, 
\begin{equation*}
    \mathbb{I}_i \land (t_i' < t_i) \land (\forall j \in N \setminus \{i\}\ \ t_j' = t_j) \implies r_i' > r_i\ .
\end{equation*}

\begin{remark}[Significance of $\mathbb{I}_i$]
    $\mathbb{I}_i$ guarantees that party $i$ is not useless.
    When the data valuation function is superadditive~(Sec.~\ref{sec:valuation}), $\bI_i$ guarantees that party $i$'s Shapley value (at an earlier joining time) would be positive, facilitating the guarantee of \ref{item:strict-order-fairness} in Sec.~\ref{sec:rewards}.
\end{remark}

\end{enumerate}

In this section, we have designed time-aware incentive conditions \ref{item:necessity} to \ref{item:strict-order-fairness} while resolving conflicts (e.g., introduce equal time pre-condition in \ref{item:sym-equal-time} and \ref{item:desirability-equal-time}, only require time-based strict monotonicity in certain cases). 
\textbf{A party may get a higher reward from taking time to curate a higher quality dataset instead of sharing a less valuable dataset as early as possible}. 
This holds in two key scenarios: 
(i) when a greater emphasis is placed on data quality relative to joining time in the reward schemes; and 
(ii) when the parties are necessary parties, i.e., their data are essential for achieving non-zero value collaboration.
The next step is to design reward schemes~(Sec.~\ref{sec:rewards}) that satisfy all these incentives. 
This is non-trivial as existing frameworks fail to satisfy all incentives.

\textbf{Failure of the Shapley Value.}
The Shapley value~\citep{shapley:book1952} is a popular CGT concept that rewards each party $i$ with its average marginal contribution ($v_{C \cup\{i\}}-v_C$) across every possible coalition $C  \subseteq N \backslash\{i\}$.
Given a utility function $v: 2^N \to \bR$ that measures the value of a coalition, 
\begin{equation} \label{eq:shapley}
   \phi_i(v, N) \triangleq \sum_{C \subseteq N \backslash\{i\}} \frac{|C| !(|N|-|C|-1) !}{|N| !}(v_{C \cup\{i\}}-v_C)
\end{equation}
is the Shapley value of $i$ within the grand coalition $N$.
When $v$ is time-agnostic (e.g., the accuracy of a model trained on coalition $C$), the Shapley value (and CML frameworks~\citep{Sim2020, seb} that use it directly) and other weighted sum of marginal contributions~\citep{Kwon2021BetaSA}, fails to satisfy the time-aware incentive \ref{item:strict-order-fairness}.

Although dividing the Shapley value by the joining time, 
i.e., $\phi_i(v, N) / (t_i + 1)$ satisfies \ref{item:strict-order-fairness}, 
other incentives may be violated. 
To illustrate, consider a two-party collaboration with $v_1 = v_2 = .2, v_{\{1,2\}} = 1$ and $t_1 = 4$ and $t_2 = 0$.
Party $1$'s time-weighted Shapley value would be $.5/5 = .1 < v_1$, violating \ref{item:IR}. 
As another example, consider $v_1 = v_2 = 0$ instead.
The Shapley values are both $.5$
(satisfying \ref{item:necessity})
but the time-weighted Shapley values are $r_2 = .5 \neq r_1 = .1$, thus violating \ref{item:necessity}.

\section{Data Valuation in Time-Aware Data Sharing}
\label{sec:valuation}
To achieve the incentives \ref{item:non-negativity} to \ref{item:strict-order-fairness}, it is \textbf{sufficient} for the data valuation function $v$ to satisfy: 

\begin{enumerate}[label=A\arabic*,leftmargin=*,itemsep=5pt,topsep=4pt,parsep=0pt,partopsep=0pt]
    \item \label{item:char_non-negativity} \textbf{Non-negativity.}
    The value of aggregated data in any coalition is non-negative: 
    ${\forall C \subseteq N}\ \ v_C \geq 0\ .$
    \item \label{item:char_monotonicity} \textbf{Monotonicity.}
    The inclusion of more parties into a coalition does not decrease the value of the aggregated data:
    $\forall B \subseteq C \subseteq N\ \ v_C \geq v_B\ .$
    \item \label{item:char_superadditivity} \textbf{Superadditivity.} 
    The value of the aggregated data from two non-overlapping coalitions is no less than the sum of their individual 
    values when the two coalitions are not collaborating:
    $\forall B, C \subseteq N$ s.t.~$B \cap C = \emptyset$, 
    $v_{B \cup C} \geq v_B + v_C\ .$
\end{enumerate}

\ref{item:char_non-negativity} and \ref{item:char_monotonicity} align with the norms~\citep{Sim2022-zg} in CGT and ML, and hold when non-malicious parties contribute valuable data.
Crucially, \ref{item:char_superadditivity} ensures 
(i) the formation of the grand coalition with \emph{all} parties~\citep{shapley-properties},
(ii) the Shapley value satisfies individual rationality~(Lemma~\ref{lem:shapley-ir}) and 
(iii) our methods~(Sec.~\ref{sec:rewards}) satisfy the time-aware incentives in Sec.~\ref{sec:incentives}. 
A superadditive data valuation function also makes intuitive sense:
When parties with less diverse datasets jointly train and deploy a model, the revenue generated by their model with much-improved performance may exceed the total revenue achievable by the parties operating independently.
Examples of data valuation function that satisfy~\ref{item:char_non-negativity} to \ref{item:char_superadditivity} include:

\squishstart
\item \textbf{Conditional Information Gain (IG).}
The conditional IG measures the informativeness of a coalition's data $D_C$, conditioned upon the aggregated data from other parties $D_{-C} \triangleq D_{N \setminus C}$. 
App.~\ref{appendix:proof-cig-characteristics} proves that the conditional IG (Eq.~\ref{eq:cig}) satisfies~\ref{item:char_non-negativity} to \ref{item:char_superadditivity}.
The IG \cite{Sim2020} (Eq.~\ref{eq:ig}) is the reduction in the uncertainty/entropy of the model parameters $\vtheta$ after observing $D_C$.
\begin{align} 
    & v_C \triangleq I(\vtheta; D_C | D_{-C}) = I(\vtheta; D_C, D_{-C}) - I(\vtheta; D_{-C}) \label{eq:cig} \\
    & I(\vtheta; D_C) \triangleq H(\vtheta) - H(\vtheta | D_C) \label{eq:ig}
\end{align}

\item \textbf{Dual of Submodular Valuation Function.}
Submodular\footnote{
A valuation function is submodular if 
$\forall i \in N \ \ \forall C \subseteq C' \subseteq N \setminus \{i\} \ \ v_{C' \cup \{i\}} - v_{C'} \leq v_{C \cup \{i\}} - v_{C} \ .$} 
functions are prevalent in ML~\citep{4430118, 9186137} and data valuation~\citep{Sim2022-zg}. 
For example, IG~\citep{Sim2020}~\eqref{eq:ig} is monotone submodular.
The dual function $v$ of a (submodular) function $v'$ is defined by
\begin{equation} \label{eq:dual-def}
    v(C) \triangleq v'(N) - v'(N \setminus C),\; \forall C \subseteq N \ ,
\end{equation}
and measures the \emph{importance/contribution of coalition $C$ to the overall collaboration}.
In App.~\ref{appendix:dual-properties}, we show that \textbf{the dual of any monotone submodular valuation function satisfies~\ref{item:char_non-negativity} to \ref{item:char_superadditivity}}.
This connection is significant as the Shapley value for a valuation function and its dual are equal, 
i.e., $\phi_i(v', N) = \phi_i(v, N)$ --- 
our proposed time-aware framework can be used to decide reward values (e.g., monetary payments) and satisfy time-aware incentives for submodular functions.

\begin{remark}[Unlearning Data Valuation]
    The dual valuation function also admits an interpretation from a machine unlearning perspective.
    Machine unlearning~\citep{Xu2023-th} aims to obtain a model trained on subset of the original training data, excluding the data to be unlearned (e.g., harmful or privacy-sensitive data).
    The dual data value of $C$ is then the difference in performance between the original model and the model obtained after unlearning $C$.
    This \textbf{performance-drop notion of data value can better capture the indispensability of certain data points}, as these data are often crucial for pushing model performance beyond a high baseline (e.g., from 95\% to 98\%), whereas most data suffice to reach a decent performance (e.g., from 10\% to 70\%).
    Although the dual data value differs from the original data value, as we show in App.~\ref{appendix:dual-equivalence}, \textbf{the Shapley values computed using the dual valuation function and the original valuation function are identical}.
    This equivalence makes it possible to utilize efficient machine unlearning methods~\citep{Neel2021-bv} to approximate Shapley-based data values without retraining.
\end{remark}

\begin{remark}[Incentives for the Dual]
    The dual valuation function provides alternative insights into the incentives proposed in Sec.~\ref{sec:incentives}. 
    For example, instead of receiving a more valuable reward than a party's original data, individual rationality~\ref{item:IR} can be interpreted as a party's reward must be more indispensable (to the grand coalition with value $v'(N)$) than its original data.
    We defer the relevant proofs and interpretations regarding the dual valuation function to App.~\ref{appendix:dual-incentive}.
\end{remark}
\squishend

\section{Time-Aware Reward Schemes} 
\label{sec:rewards}
This section introduces two time-aware methods~(Fig.~\ref{fig:rewards}) that consider the joining time values $\vt$. 
The two methods: \emph{time-aware reward cumulation}~(Sec.~\ref{sec:weight-with-time}) and \emph{time-aware data valuation}~(Sec.~\ref{sec:value-with-time}) differ on whether the time consideration is introduced after or before using the Shapley value. 

\subsection{Time-Aware Reward Cumulation} 
\label{sec:weight-with-time}
To incorporate the time information into the reward distribution stage, we consider each time interval as a separate collaboration among parties present during that interval. 
Specifically, let $N_{\tau} \triangleq \{i \in N: t_i \leq \tau\}$ denote the set of parties who join the collaboration before time value $\tau \in \bZ_{\geq 0}$.
At time value $\tau$, any party $j \notin  N_{\tau}$ is assumed to train a model alone and the collaboration is described by the valuation function restricted to the coalitions in $N_{\tau}$, that is, 
$\smash{v_{(\cdot)}^{(\tau)}}: 2^{N_{\tau}} \to \bR$ such that $\smash{v_{C}^{(\tau)}} = v_C, \; \forall C \subseteq N_{\tau}$. 

\begin{theorem} \label{thm:oarc} 
For each party $i \in N$, its Shapley value at time value $\tau$, 
$\smash{\phi_{i}^{(\tau)}} \triangleq \phi_i(\smash{v_{(\cdot)}^{(\tau)}}, N_{\tau}) \text{ if } i \in N_{\tau} \text{ and } v_i \text{ otherwise}$.
Let the weight of time interval $t$ be $w^{(t)} \triangleq \beta^{t} / \sum_{\tau = 0}^T \beta^{\tau}$ where $\beta > 0$ is a tunable parameter and  $T \triangleq \max_{i\in N} t_i$.
The reward value $r_i \triangleq \sum_{\tau = 0}^{T} \smash{w^{(\tau)}}\smash{\phi_{i}^{(\tau)}}$ satisfies \ref{item:non-negativity} to \ref{item:strict-order-fairness}.
\end{theorem}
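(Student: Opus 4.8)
The plan is to reduce Theorem~\ref{thm:oarc} to pointwise-in-$\tau$ statements about the Shapley values $\phitau{i}$ of the restricted games $(\valtaudot,\ntau)$ and then sum them against the weights. Two observations drive everything. First, each $\valtaudot$ is just $v$ restricted to $2^{\ntau}$, so it inherits \ref{item:char_non-negativity}--\ref{item:char_superadditivity}; in particular it is superadditive, so Lemma~\ref{lem:shapley-ir} applies to it. Second, since $\wtau>0$ and $\sum_{\tau=0}^{T}\wtau=1$, the reward $r_i=\sum_{\tau}\wtau\phitau{i}$ is a convex combination of the $\phitau{i}$, so any statement ``$\phitau{i}\in\{\ge 0,\ \ge v_i,\ =\phitau{j},\ \ge\phitau{j}\}$ for all $\tau$'' transfers to $r_i$, and a single index $\tau$ with a strict version and $\wtau>0$ upgrades it to strict. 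With this, \ref{item:non-negativity}--\ref{item:uselessness} are the usual Shapley properties applied inside each restricted game: \ref{item:non-negativity} because every marginal contribution $v_{C\cup\{i\}}-v_C\ge0$ by \ref{item:char_monotonicity} (and $v_i\ge0$ by \ref{item:char_non-negativity} when $i\notin\ntau$); \ref{item:IR} because Lemma~\ref{lem:shapley-ir} gives $\phitau{i}\ge\valtau{\{i\}}=v_i$ for $i\in\ntau$, with equality otherwise; \ref{item:uselessness} because a useless $i$ has all marginal contributions $0$ so $\phitau{i}=0$ when $i\in\ntau$, while $v_i=v_\emptyset=0$ is forced by \ref{item:char_non-negativity} together with the $B=C=\emptyset$ instance of \ref{item:char_superadditivity} ($v_\emptyset\ge2v_\emptyset$), handling $i\notin\ntau$.

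For \ref{item:sym-equal-time}, \ref{item:desirability-equal-time}, and \ref{item:necessity}, the hypothesis $t_i=t_j$ forces $i\in\ntau\Leftrightarrow j\in\ntau$. When both are in $\ntau$, the respective hypotheses say that $i$ and $j$ are symmetric in $\valtaudot$ / that $i$ is at least as desirable as $j$ in $\valtaudot$ / that the pair is necessary and hence (since then $v_{C\cup\{i\}}=v_{C\cup\{j\}}=0$ for all $C\subseteq\ntau\setminus\{i,j\}$) symmetric in $\valtaudot$; so the symmetry and desirability properties of the Shapley value give $\phitau{i}=\phitau{j}$, $\phitau{i}\ge\phitau{j}$, $\phitau{i}=\phitau{j}$ respectively. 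When $i,j\notin\ntau$, each case reduces to the $C=\emptyset$ instance comparing $v_i$ and $v_j$. The strict inequality in \ref{item:desirability-equal-time} comes from the $\tau=T$ summand, where $\ntau=N$ and the full hypothesis (weak domination plus a strictly improving coalition $B$) yields $\phi_i(v,N)>\phi_j(v,N)$ by strict desirability, and $w^{(T)}>0$.

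The work is in \ref{item:order-fairness} and \ref{item:strict-order-fairness}. Write $\vt'$ for the perturbed profile ($t_i'<t_i$, $t_j'=t_j$ otherwise) and let $\phi_i'^{(\tau)}$, $N_\tau'$, $\tilde w^{(\tau)}$, $T'=\max_k t_k'$ be the corresponding objects. The pivotal pointwise claim is $\phi_i'^{(\tau)}\ge\phitau{i}$ for every $\tau$: if $\tau\notin[t_i',t_i)$ then $N_\tau=N_\tau'$ and $i$ is in both or in neither, so the values coincide; if $\tau\in[t_i',t_i)$ then $i\notin\ntau$ gives $\phitau{i}=v_i$ while $i\in N_\tau'$ gives $\phi_i'^{(\tau)}\ge v_i$ by Lemma~\ref{lem:shapley-ir}. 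If $i$ is \emph{not} the unique latest party then $T'=T$, the weights are unchanged, summing the pointwise claim proves \ref{item:order-fairness}, and for \ref{item:strict-order-fairness} one takes a witness $C^\ast\subseteq\{j:t_j<t_i\}$ of $\bI_i$ (so $v_{C^\ast\cup\{i\}}>v_{C^\ast}+v_i$; necessarily $C^\ast\ne\emptyset$) and sets $\tau^\ast\triangleq\max(\{t_j:j\in C^\ast\}\cup\{t_i'\})\in[t_i',t_i)$, noting $C^\ast\cup\{i\}\subseteq N_{\tau^\ast}'$; then the decomposition proving Lemma~\ref{lem:shapley-ir} is strict at $\tau^\ast$ because its $S=C^\ast$ term $v_{C^\ast\cup\{i\}}-v_{C^\ast}-v_i$ is $>0$, so $\phi_i'^{(\tau^\ast)}>v_i=\phi_i^{(\tau^\ast)}$ and $w^{(\tau^\ast)}>0$ finishes it.

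The expected main obstacle is the leftover case where $i$ \emph{is} the unique latest party, so the perturbation strictly shrinks $\max_k t_k$ from $T$ to $T'<T$ and rescales every weight, breaking the ``common-weights'' argument. Here I would compute both rewards in closed form. In this configuration $\phitau{i}=v_i$ for $\tau<T$ and $\phi_i^{(T)}=\phi_i(v,N)$, so $r_i=v_i+w^{(T)}\bigl(\phi_i(v,N)-v_i\bigr)$; on the other side $\phi_i'^{(\tau)}\ge v_i$ for all $\tau$ and $\phi_i'^{(T')}=\phi_i(v,N)$ (since $N_{T'}'=N$), so $r_i'\ge v_i+\tilde w^{(T')}\bigl(\phi_i(v,N)-v_i\bigr)$. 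Because $\phi_i(v,N)\ge v_i$ by Lemma~\ref{lem:shapley-ir} and $S\mapsto\beta^{S}/\sum_{\tau=0}^{S}\beta^{\tau}=\bigl(\sum_{k=0}^{S}\beta^{-k}\bigr)^{-1}$ is non-increasing in $S$ (hence $\tilde w^{(T')}\ge w^{(T)}$), we get $r_i'\ge r_i$, proving \ref{item:order-fairness}; and \ref{item:strict-order-fairness} follows since either $\tau^\ast<T'$ contributes an extra strictly positive summand, or $\tau^\ast=T'$ forces $\phi_i(v,N)>v_i$, which with the then-strict $\tilde w^{(T')}>w^{(T)}$ makes $w\mapsto v_i+w\bigl(\phi_i(v,N)-v_i\bigr)$ strictly increasing and yields $r_i'>r_i$.
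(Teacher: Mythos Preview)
Your overall approach mirrors the paper's: verify each incentive termwise for $\phitau{i}$ in the restricted game $(\valtaudot,\ntau)$, then aggregate via the positive weights $\wtau$. The verifications of \ref{item:non-negativity}--\ref{item:uselessness}, \ref{item:order-fairness}, and \ref{item:strict-order-fairness} are correct, and your explicit treatment of the case where $i$ is the unique latest party (so $T'<T$ and the weights rescale) is in fact more careful than the paper's own proof, which tacitly writes $r_i'=\sum_{\tau=0}^{T}\wtau\phi_i^{(\tau)\prime}$ with the \emph{old} $T$ and weights.

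There is one genuine gap. You group \ref{item:necessity} with \ref{item:sym-equal-time} and \ref{item:desirability-equal-time} under ``the hypothesis $t_i=t_j$'', but \ref{item:necessity} carries no such hypothesis: necessary parties must receive equal rewards \emph{regardless} of their joining times. Your two-case split (both in $\ntau$ / both out of $\ntau$) therefore omits the range $\min(t_i,t_j)\le\tau<\max(t_i,t_j)$, where exactly one of $i,j$ is present. The repair is easy and is what the paper does: if, say, $i\in\ntau$ but $j\notin\ntau$, then every $C\subseteq\ntau$ excludes $j$, so the necessity hypothesis forces $v_C=0$ for all such $C$, hence $\phitau{i}=\phi_i(\valtaudot,\ntau)=0$; meanwhile $\phitau{j}=v_j=0$. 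Thus $\phitau{i}=\phitau{j}$ still holds at every $\tau$, and the convex combination gives $r_i=r_j$.
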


The weights are normalized to ensure feasibility (i.e., $r_i \leq v_N \; \forall i \in N$ as the model rewards cannot be better than the model trained on all parties' data). 
\textbf{The mediator can set $\beta$ to control the emphasis on joining times and time-aware incentives. 
As $\beta$ increases, the weight of $\phi_i^{(\tau)}$ increases for later time intervals; thus, there is less emphasis on the joining times of parties.} 
Conversely, as $\beta$ decreases, more weight is placed on the earlier time intervals.
In fact, as $\beta \to \infty$, the reward value $r_i \to \phi_i^{(T)}$, which coincides with the Shapley value when all parties have joined, thus no time information is accounted for.
The proof of Theorem~\ref{thm:oarc} can be found in App.~\ref{appendix:proof-thm-oarc}. 

\begin{remark}[Efficient Estimation due to Linearity] \label{remark:estimation}
    Instead of computing the Shapley value $T$ times, it is possible to exploit the linearity property of the Shapley value to express $r_i$ as the Shapley value of one valuation function defined on all subsets of $N$: 
    $\smash{\nu^{(\tau)}_{C}} \triangleq v_{C \cap N_\tau} + \smash{\sum_{j \in C \setminus N_\tau}} v_j$, i.e., 
    ${r_i \triangleq \phi_i(\sum_{\tau=0}^{T} \smash{w^{(\tau)}} \smash{\nu^{(\tau)}_{(\cdot)}}, N)}$.
    This reduction allows our time-aware reward cumulation method to be combined with other Shapley value approximation methods~\citep{Rozemberczki2022shapsurvey,Jia2019-th} for efficient estimation.
    Our time-aware setting only increases the computational complexity by a factor of at most $n$ (the maximum number of unique joining times).
\end{remark}

\begin{figure}[t]
    \centering
    \begin{tabular}{c|c}
    \includegraphics[width=0.46\columnwidth]{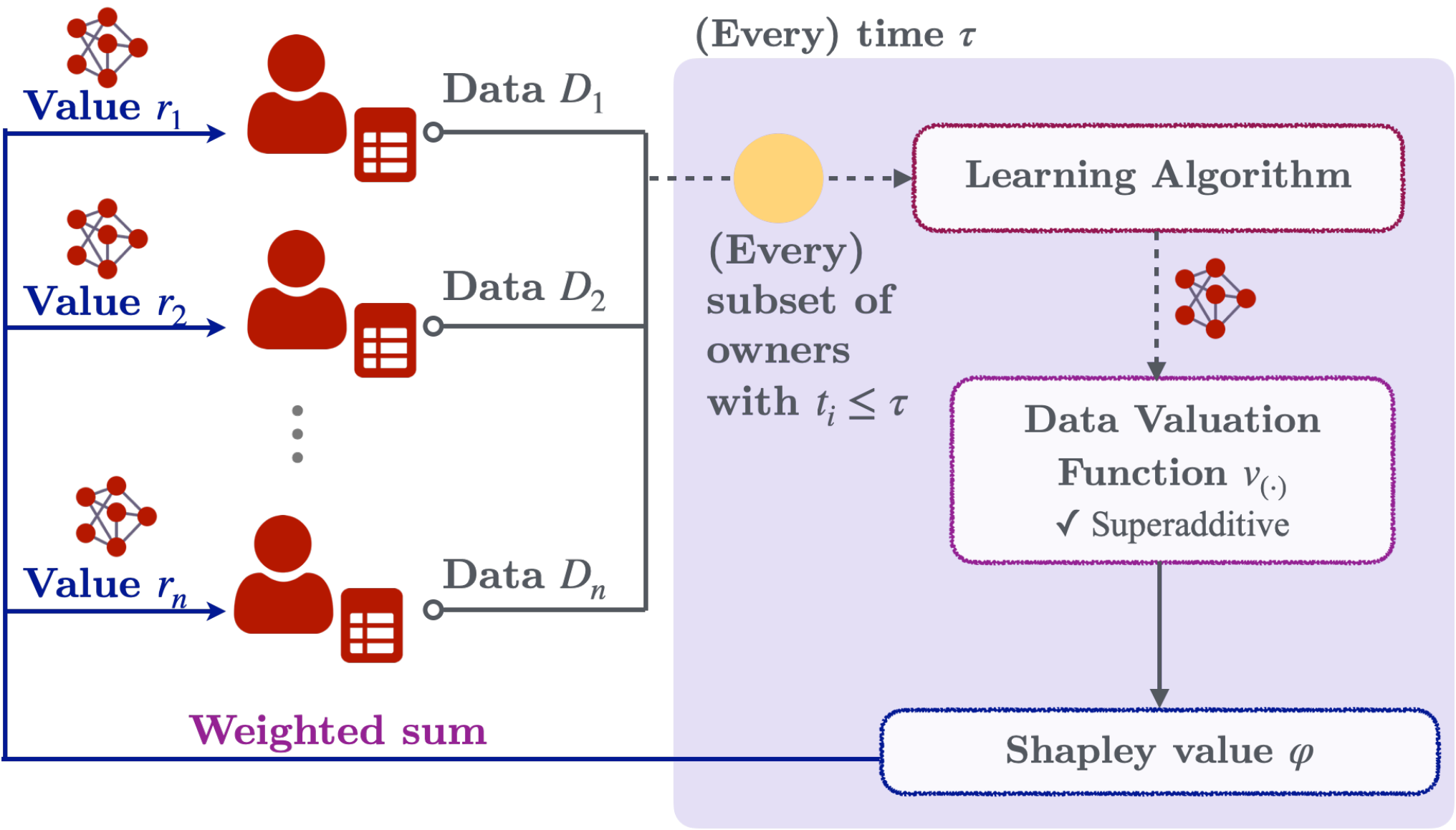}
    & \includegraphics[width=0.46\columnwidth]{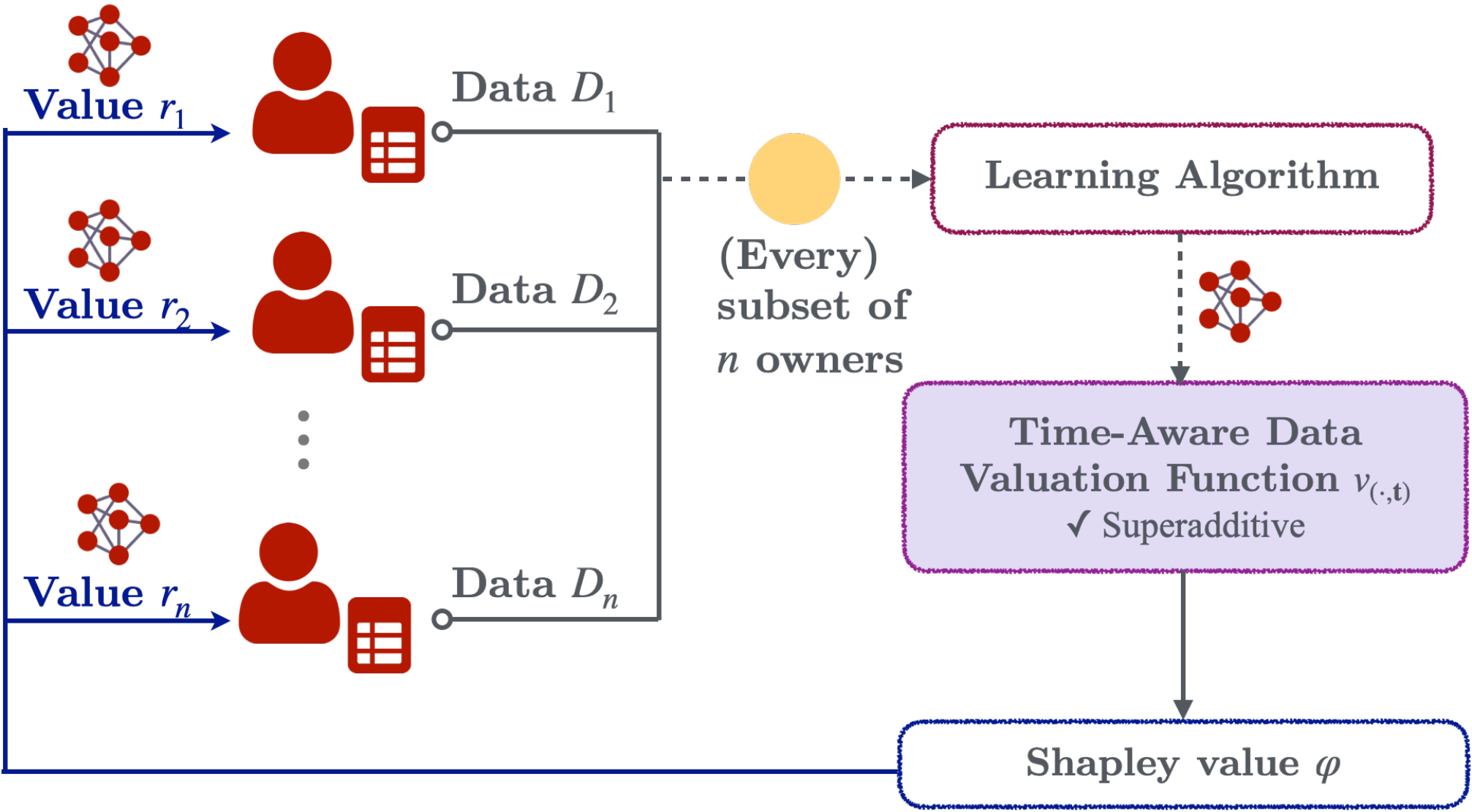}\\
    (A) Time-Aware Reward Cumulation & (B) Time-Aware Valuation Function
    \end{tabular}
    \caption{
        Overview of our proposed methods. 
        (A) We partition the collaboration period into time intervals and consider separate cooperative games for each, rewarding parties via a weighted sum of the corresponding Shapley values (Sec.~\ref{sec:weight-with-time}).
        (B) We propose a new time-aware data valuation function and directly use the resulting Shapley values as the reward values (Sec.~\ref{sec:value-with-time}).
    }
    \label{fig:rewards}
\end{figure}

\subsection{Time-Aware Data Valuation} 
\label{sec:value-with-time}
The next method replaces the data valuation function $v_{(\cdot)}$ with a time-aware data valuation function $v_{(\cdot, \vt)}$ and makes use of propositions from \citet{Manuel2020} in the CGT literature.
\citet{Manuel2020} has proposed that when $v$ is superadditive and parties have different cooperative levels 
(denoted by the vector $\vlambda \triangleq (\lambda_i)_{i=1}^n \in [0,1]^n$), 
the Shapley value computed based on 
\begin{equation*}
    {v_{C, \vlambda} = \sum_{T \subseteq C, |T| \geq 2} d(v, T) \min_{i \in T} \{\lambda_i\} + \sum_{i \in C} d(v, \{i\}) \ ,}
\end{equation*}
satisfies monotonicity (i.e., as $\lambda_i$ increases, $\phi_i(v_{(\cdot, \vlambda)}, N)$ also increases). 
Here, the Harsanyi dividend~\citep{Harsanyi1958ABM} $d(v, T) = v_T - \sum_{S \subsetneq T} d(v, S)$ measures the unique contribution of coalition $T$ after removing the contributions of its sub-coalitions~\citep{VANDENBRINK202055}.\footnote{
The dividend is $0$ for empty coalitions, i.e., $d(v, \emptyset) = 0$.} 
We provide a detailed discussion of their results in App.~\ref{appendix:mono-weight-shapley-discuss}.
In our work, we define party $i$'s cooperative level by how early it joins. Formally,

\begin{theorem} \label{thm:oadv}
    For each party $i \in N$, let party $i$'s cooperative ability $\lambda_i \triangleq e^{-\gamma t_i}$ where $t_i$ is the time value when $i$ joined and $\gamma \in (0,1]$ is a tunable parameter.
    The time-aware data valuation function is
    \begin{equation} \label{eq:ow-valuation}
    v_{C,\vt} \triangleq \sum_{T \subseteq C, |T| \geq 2} d(v, T) \min_{i \in T} \{e^{-\gamma t_i}\} + \sum_{i \in C} d(v, \{i\}) 
    \end{equation}
    for all $C \subseteq N$.
    When $v$ is non-negative and superadditive, $v_{(\cdot,\vt)}$ is also superadditive. 
    Rewarding based on the Shapley value, i.e., $r_i \triangleq \phi_i(v_{(\cdot, \vt)}, N)$ satisfies \ref{item:non-negativity} to \ref{item:strict-order-fairness}. 
\end{theorem}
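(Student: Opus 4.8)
The plan is to verify each incentive \ref{item:non-negativity}--\ref{item:strict-order-fairness} in turn for the reward rule $r_i \triangleq \phi_i(v_{(\cdot,\vt)}, N)$, leaning on two structural facts: (i) the Harsanyi-dividend expansion makes $v_{(\cdot,\vt)}$ transparent to manipulate, and (ii) the result of \citet{Manuel2020} already tells us that $\phi_i(v_{(\cdot,\vlambda)},N)$ is nondecreasing in each $\lambda_i$ when $v$ is superadditive. First I would record the preliminary facts: since $v$ is non-negative and superadditive, all Harsanyi dividends of size-$\ge 2$ coalitions are non-negative (this follows from superadditivity by induction on coalition size, or is cited from the CGT literature), and $d(v,\{i\}) = v_i \ge 0$; hence every coefficient in \eqref{eq:ow-valuation} is non-negative, and since $0 < \min_{i\in T} e^{-\gamma t_i} \le 1$, we get $0 \le v_{C,\vt} \le v_C$ for all $C$, with $v_{\{i\},\vt} = v_i$. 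Superadditivity of $v_{(\cdot,\vt)}$ I would get by the same dividend bookkeeping: for disjoint $B, C$, the coalitions $T \subseteq B\cup C$ with $T \not\subseteq B$ and $T \not\subseteq C$ contribute extra non-negative terms, so $v_{B\cup C,\vt} \ge v_{B,\vt} + v_{C,\vt}$; this also hands us individual rationality \ref{item:IR} via Lemma~\ref{lem:shapley-ir} applied to $v_{(\cdot,\vt)}$ once we know $\phi_i(v_{(\cdot,\vt)},N) \ge v_{\{i\},\vt} = v_i$. Non-negativity \ref{item:non-negativity} is immediate from non-negativity of $v_{(\cdot,\vt)}$ and monotonicity of the Shapley value.

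Next I would handle the symmetry/desirability/uselessness/necessity group by showing $v_{(\cdot,\vt)}$ inherits the relevant relational structure from $v$ whenever the joining-time hypotheses are in force. For \ref{item:sym-equal-time}: if $t_i = t_j$ and $v_{C\cup\{i\}} = v_{C\cup\{j\}}$ for all $C \subseteq N\setminus\{i,j\}$, then by uniqueness/linearity of the Harsanyi dividend one gets $d(v, T\cup\{i\}) = d(v, T\cup\{j\})$ for all $T \subseteq N\setminus\{i,j\}$, and because $\lambda_i = \lambda_j$ the min-operations agree, so $i$ and $j$ are symmetric in $v_{(\cdot,\vt)}$; symmetry of the Shapley value then gives $r_i = r_j$. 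The same dividend-level comparison yields \ref{item:desirability-equal-time}: the strict dividend inequality at some coalition survives (with $\lambda_i = \lambda_j$ there is no time discounting to erode it), so $i$ strictly dominates $j$ in $v_{(\cdot,\vt)}$ and strict desirability of the Shapley value applies. For \ref{item:uselessness}, a party useless in $v$ has all its dividends zero, hence is useless in $v_{(\cdot,\vt)}$ regardless of $\vt$, so $r_i = 0$. For \ref{item:necessity}, if $i, j$ are both necessary ($v_C = 0$ whenever $\{i,j\}\not\subseteq C$), then every nonzero dividend $d(v,T)$ has $\{i,j\}\subseteq T$, so in \eqref{eq:ow-valuation} each surviving term carries the factor $\min\{e^{-\gamma t_i}, e^{-\gamma t_j}, \dots\}$, which is symmetric in $i$ and $j$; thus $i$ and $j$ are again symmetric in $v_{(\cdot,\vt)}$ and $r_i = r_j$.

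Finally the genuinely time-aware incentives. For time-based monotonicity \ref{item:order-fairness}: decreasing $t_i$ to $t_i' < t_i$ with all other times fixed increases $\lambda_i = e^{-\gamma t_i}$ to $\lambda_i' > \lambda_i$ while leaving $\lambda_j$ fixed for $j\ne i$; this is exactly the setting of \citet{Manuel2020}'s monotonicity result (their $v_{(\cdot,\vlambda)}$ coincides with our $v_{(\cdot,\vt)}$ under $\lambda = (e^{-\gamma t_k})_k$), so $r_i' = \phi_i(v_{(\cdot,\vt')},N) \ge \phi_i(v_{(\cdot,\vt)},N) = r_i$. For strict monotonicity \ref{item:strict-order-fairness} I would need to upgrade this weak inequality to strict under the hypothesis $\mathbb{I}_i$: $\mathbb{I}_i$ says there is $C \subseteq \{j : t_j < t_i\}$ with $v_{C\cup\{i\}} > v_C + v_i$, equivalently (by superadditivity and the dividend expansion) some coalition $T \ni i$ with $T\setminus\{i\} \subseteq \{j : t_j < t_i\}$ and $d(v,T) > 0$; for such a $T$, all predecessors $j\in T\setminus\{i\}$ have $t_j < t_i$, so $t_i = \max_{k\in T} t_k$ strictly, hence $\min_{k\in T} e^{-\gamma t_k} = e^{-\gamma t_i}$ is a strictly increasing function of $-t_i$ at the relevant parameter, meaning this dividend's coefficient in $v_{(\cdot,\vt)}$ strictly increases when $t_i$ drops to $t_i'$. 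I then need that this strict increase at a single coalition propagates to a strict increase of $\phi_i$ — this is where the argument is least routine, and I expect it to be \textbf{the main obstacle}: I would address it either by a careful re-examination of the chain of inequalities in \citet{Manuel2020}'s proof (isolating where slack appears once one dividend coefficient strictly grows and tracing it into $\phi_i$), or by the Harsanyi-dividend formula $\phi_i(v_{(\cdot,\vt)},N) = \sum_{T \ni i} d(v_{(\cdot,\vt)}, T)/|T|$ combined with showing the net change in the dividends of $v_{(\cdot,\vt)}$ summed over $T\ni i$ is strictly positive (the size-$\ge2$ dividends $d(v,T)\min_k e^{-\gamma t_k}$ with $i\in T$ move monotonically, and at least one moves strictly, while size-$1$ dividends are unchanged). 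Once strict monotonicity is in hand, Theorem~\ref{thm:oadv} follows, and as noted in the excerpt the remark-level consequence (time-based equal-value desirability) comes for free from \ref{item:sym-equal-time} and \ref{item:order-fairness}.
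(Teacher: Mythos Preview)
Your proposal contains a genuine error that propagates through several parts. The claim that ``since $v$ is non-negative and superadditive, all Harsanyi dividends of size-$\ge 2$ coalitions are non-negative'' is \emph{false}: non-negative dividends characterize convex (supermodular) games, not superadditive ones. A three-player counterexample is $v_i = 0$, $v_{\{i,j\}} = 1$, $v_N = 1$, which is non-negative and superadditive but has $d(v,N) = 1 - 3 = -2 < 0$. This invalidates your dividend-bookkeeping argument for the superadditivity of $v_{(\cdot,\vt)}$ (the ``extra'' cross terms from coalitions meeting both $B$ and $C$ need not be non-negative), your bound $0 \le v_{C,\vt}$, and your second proposed route to \ref{item:strict-order-fairness} via $\phi_i = \sum_{T\ni i} d(v_{(\cdot,\vt)},T)/|T|$ (since terms with $d(v,T)<0$ can move the wrong way). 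Your sketch for \ref{item:desirability-equal-time} via ``dividend-level comparison'' is also unsupported: desirability of $i$ over $j$ in $v$ does not imply $d(v,T\cup\{i\}) \ge d(v,T\cup\{j\})$ termwise.

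The paper circumvents all of this by never invoking sign control on dividends. Instead it uses the decomposition (Manuel~2020, and recorded in the paper as Eq.~\eqref{eq:weight-v-identity})
\[
v_{(\cdot,\vlambda)} = \sum_{h=0}^{r-1}(y_{h+1}-y_h)\,v_{|N_{h+1}} + \sum_{i\in N}(1-\lambda_i)\,v_{|\{i\}}\ ,
\]
a non-negative combination of restricted games $v_{|S}(C)=v(C\cap S)$, each of which trivially inherits non-negativity, monotonicity, and superadditivity from $v$. This immediately gives superadditivity of $v_{(\cdot,\vt)}$ and hence \ref{item:non-negativity}--\ref{item:IR}; for \ref{item:sym-equal-time}, \ref{item:desirability-equal-time}, \ref{item:uselessness} the paper checks the relevant relational hypotheses on $v_{(\cdot,\vt)}$ directly through this same identity (using $\lambda_i=\lambda_j$ so $i,j$ lie in the same $N_h$'s), then applies the standard Shapley properties. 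Your arguments for \ref{item:sym-equal-time}, \ref{item:uselessness}, \ref{item:necessity} via dividend symmetry are actually fine, and for \ref{item:order-fairness} and \ref{item:strict-order-fairness} your first option---revisiting the chain of inequalities in Manuel's Prop.~4.2 and locating the strict slack guaranteed by $\mathbb{I}_i$---is exactly what the paper does (through the zero-normalized restricted game $v_{o|T'}$), so that part of your plan is on target.
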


\textbf{The mediator can set $\gamma$ to control the emphasis on joining times and time-aware incentives.
When $\gamma = 0$, the joining time does not matter and every party will have the maximum cooperative level of $1$.
As $\gamma$ increases, the joining time has a larger influence on data values.}
For $\gamma > 0$, party $i$ only attains the maximum cooperative level of $1$ when $t_i = 0$. 
As $t_i$ increases, party $i$'s cooperative level shrinks towards $0$. 
Thus, the monotonicity property implies that the reward value would decrease. 
We provide proof of Theorem~\ref{thm:oadv} and efficient computation of Eq.~\eqref{eq:ow-valuation} in App.~\ref{appendix:proofs}.
Similar to the previous approach, this method retains the computational benefits of efficient approximation, and the time-aware setting only increases the complexity by a factor of at most $n$.

\subsection{Reward Realization}
Multiplying the reward values $(r_i)_{i \in N}$ by a positive factor $\rho \ge 1$ still satisfies incentives \ref{item:non-negativity} to \ref{item:strict-order-fairness}. 
This allows us to exploit the freely replicable nature of data by training models with varying qualities to realize values $r_i^* \triangleq \rho r_i$ where $\rho = v_N / \max_{i\in N} \phi_i(v, N)$.
The adjusted values $(r_i^*)_{i \in N}$ ensure that in the time-agnostic case, at least one party is guaranteed to receive a reward equivalent to the best-performing model trained on all aggregated data,
i.e., when $t_i = 0 \ \forall i \in N, \exists i \in N \ \ r_i^* = v_N$ (\emph{weak efficiency} incentive in \citet{Sim2020}).

We adopt two approaches to realize the reward values: 
the \emph{likelihood tempering} method and the \emph{subset selection} method. 
The former assigns party $i$ a model reward (posterior) $p_i^*$
that is updated using its own likelihood and the tempered likelihood of other parties' data (as in \citet{sim2023incentives}).
This method can realize the rewards \emph{exactly} and is suitable when conditional IG is used as the data valuation function.
The latter assigns party $i$ a model reward that is only updated on a discrete subset of the aggregated data.
Although this latter method can only realize the rewards \emph{approximately}, it is applicable to all data valuation functions.
More detailed descriptions of both methods can be found in App.~\ref{appendix:reward-realization}.

\section{Experiments and Discussion}
\label{sec:experiments}
This section empirically illustrates the properties of our proposed reward schemes using
(a) the synthetic Friedman dataset with $6$ input features~\citep{AS91_friedman1991multivariate},
(b) the \emph{Californian housing}~(CaliH) dataset~\citep{SPL97_pace1997sparse} with $8$ input features, and
(c) the MNIST dataset~\citep{deng2012mnist} of handwritten digit images~($28 \times 28$ pixels).
We employ the \emph{Gaussian process}~(GP) regression~\citep{Book_williams2006gaussian} model for Friedman and CaliH datasets and \emph{neural network}~(NN) for the MNIST dataset.
In (a) and (b), each party's data value is measured by conditional IG. In (c), each party's data value is the dual\footnote{
As the accuracy is approximately submodular, its dual satisfies the properties outlined in Sec.~\ref{sec:valuation}.} 
of the validation accuracy, i.e., given the validation accuracy $v'(D)$ of the model trained on $D$, $v(D_i) = v'(D_N) - v'(D_N \setminus D_i)$. 
Next, each party gets a model reward $p_i^*$ generated by likelihood tempering~(a, b) or subset selection~(c). 
We also report the model performance evaluated by \emph{mean negative log probability}~(MNLP), defined as
$1 / |D_{\text{val}}| \cdot \sum_{(\mathbf{x}, y) \in D_{\text{val}}} {-\log p_i^*(y |\mathbf{x})}$
where $D_{\text{val}}$ is the validation set. 
In (c), MNLP equals to the cross-entropy loss on $D_{\text{val}}$. 
A \emph{lower} MNLP indicates better model performance.
\begin{figure}[t]
    \noindent
    \begin{minipage}[t]{0.49\textwidth}
        \centering
        \begin{tabular}{@{}c@{}c@{}}
            \includegraphics[width=0.48\linewidth]{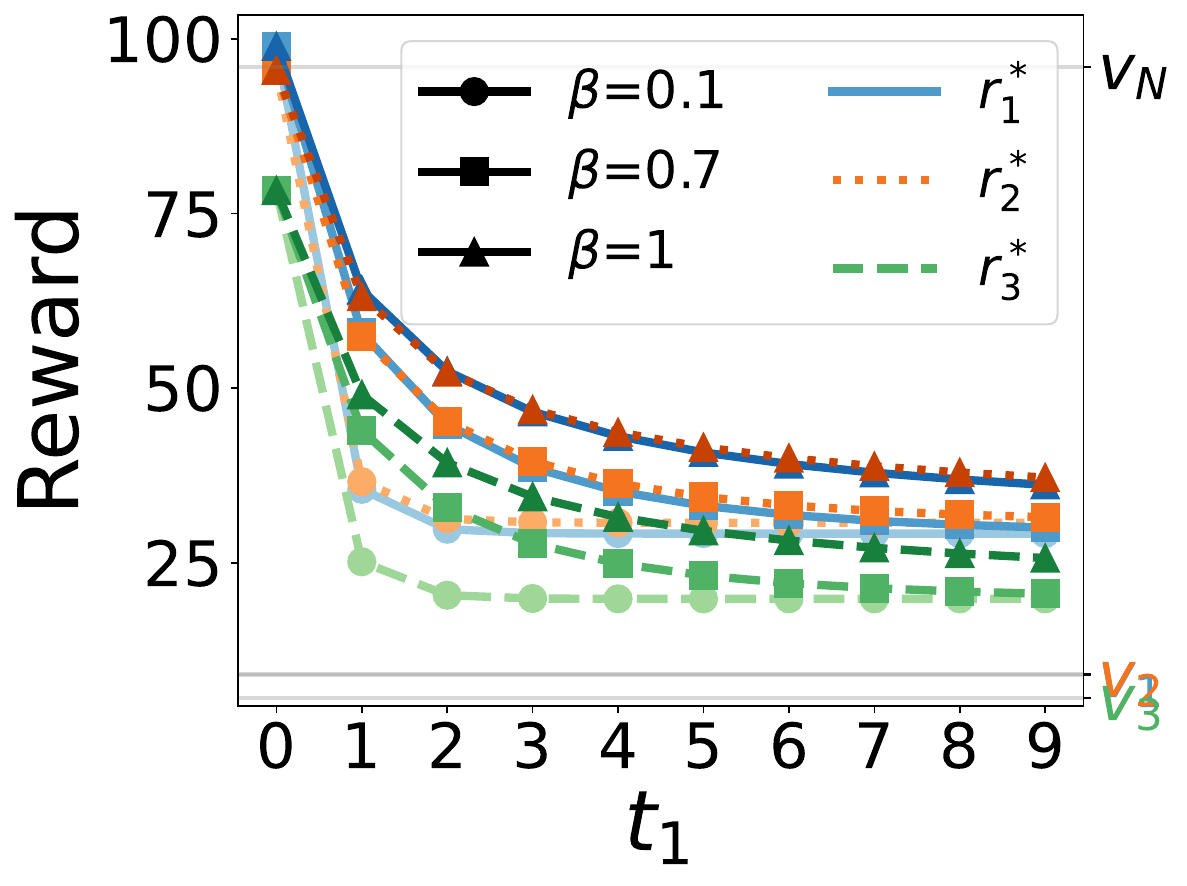} &
            \includegraphics[width=0.48\linewidth]{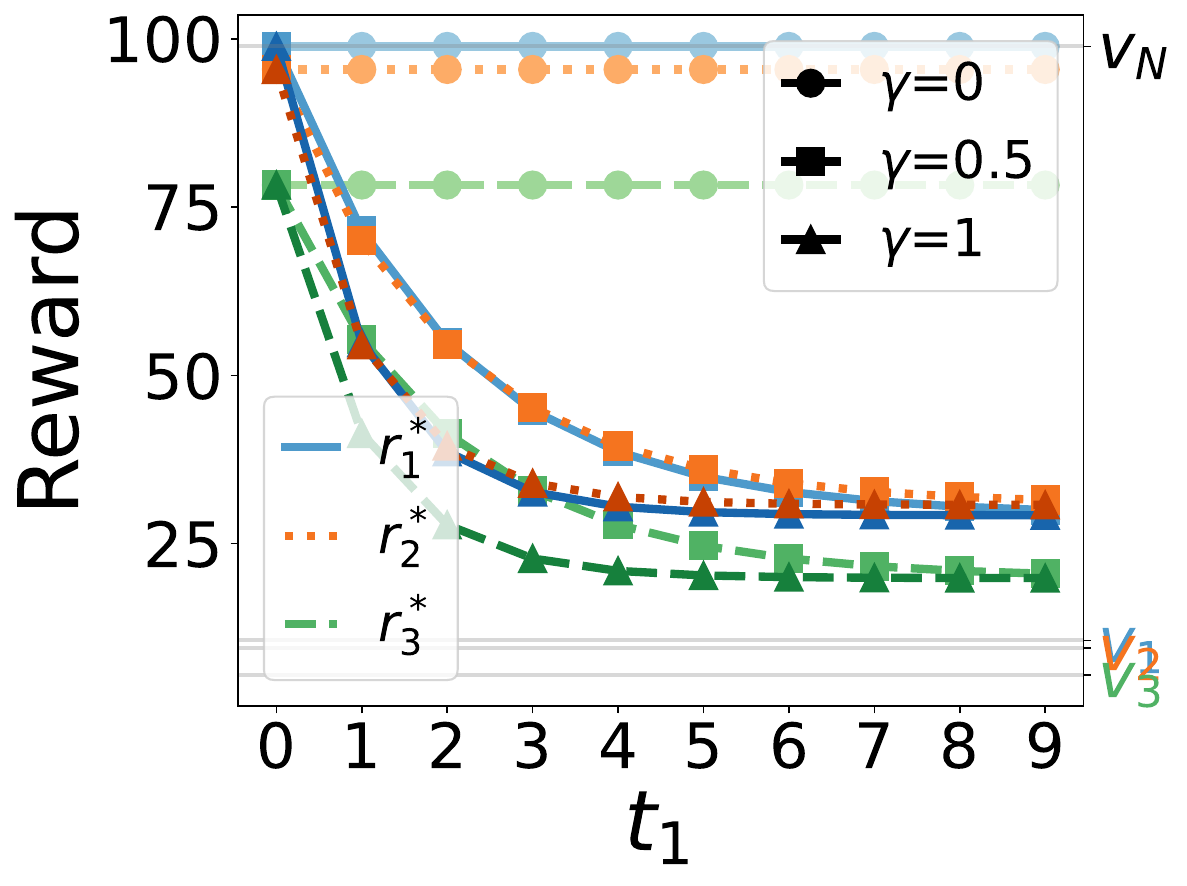} \\
            \footnotesize{(a) $r_i^*$ with varying $\beta$} &
            \footnotesize{(b) $r_i^*$ with varying $\gamma$}
        \end{tabular}
        \caption{Graphs of $r_i^*$ vs.~$t_1$ with the Friedman dataset using methods in (a)~Sec.~\ref{sec:weight-with-time} (b)~Sec.~\ref{sec:value-with-time}.}
        \label{fig:friedman-1-value}
    \end{minipage}
    \hfill
    \begin{minipage}[t]{0.49\textwidth}
        \centering
        \begin{tabular}{@{}c@{}c@{}}
            \includegraphics[width=0.48\linewidth]{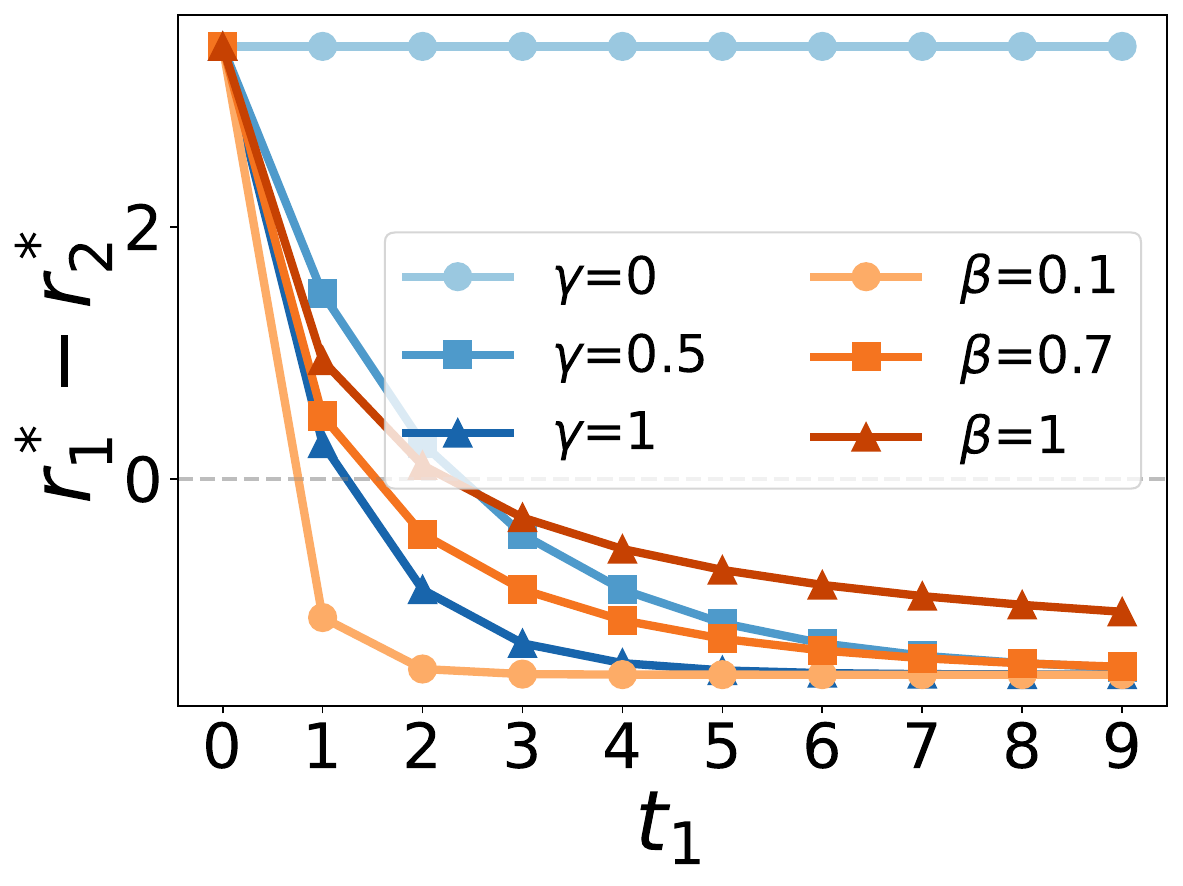} &
            \includegraphics[width=0.48\linewidth]{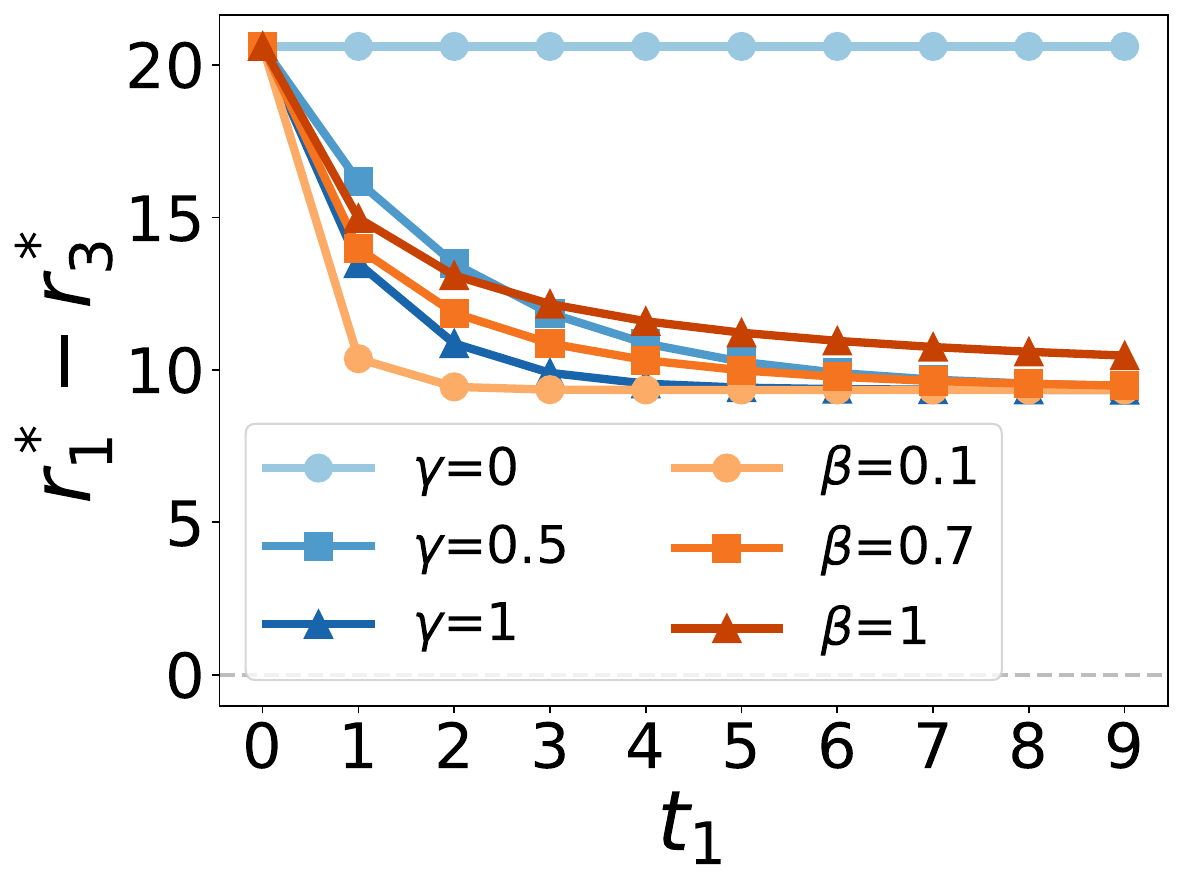} \\
            \footnotesize{(a) $r_1^* - r_2^*$} &
            \footnotesize{(b) $r_1^* - r_3^*$}
        \end{tabular}
        \caption{Graphs of differences between reward values with the Friedman dataset.}
        \label{fig:friedman-1-diff}
    \end{minipage}
\end{figure}
\begin{figure}[t]
    \centering
    \begin{tabular}{@{}c@{\hspace{1mm}}c@{\hspace{1mm}}c@{\hspace{1mm}}c@{}}
         \includegraphics[width=0.24\columnwidth]{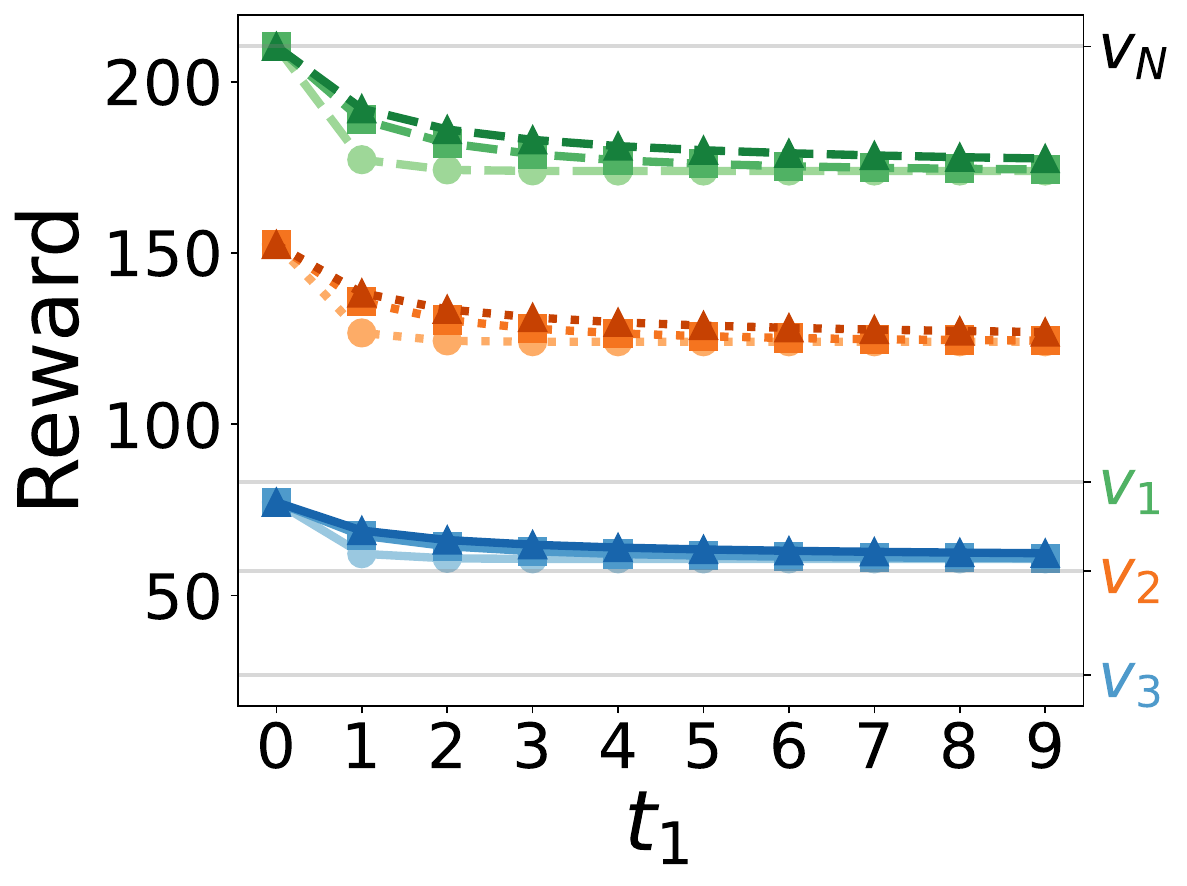} &  
         \includegraphics[width=0.24\columnwidth]{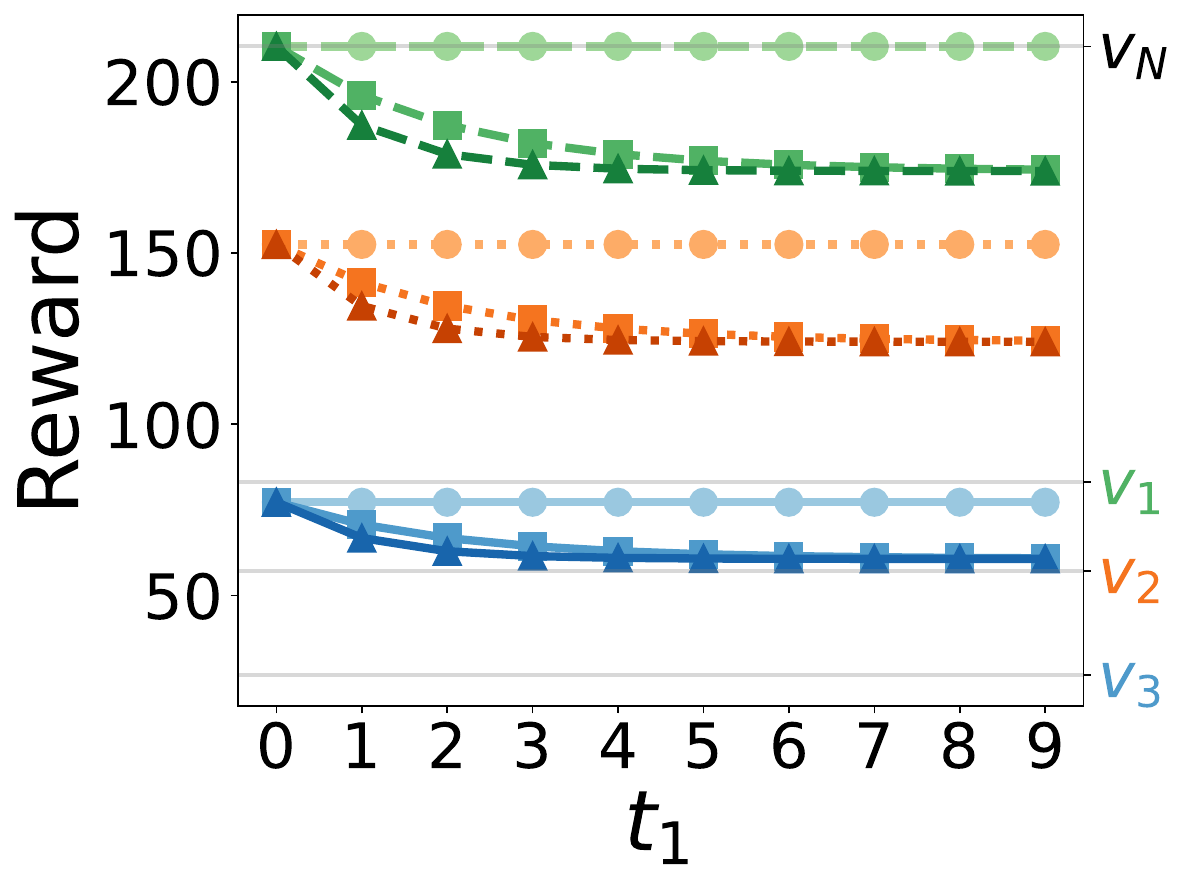} &
         \includegraphics[width=0.24\columnwidth]{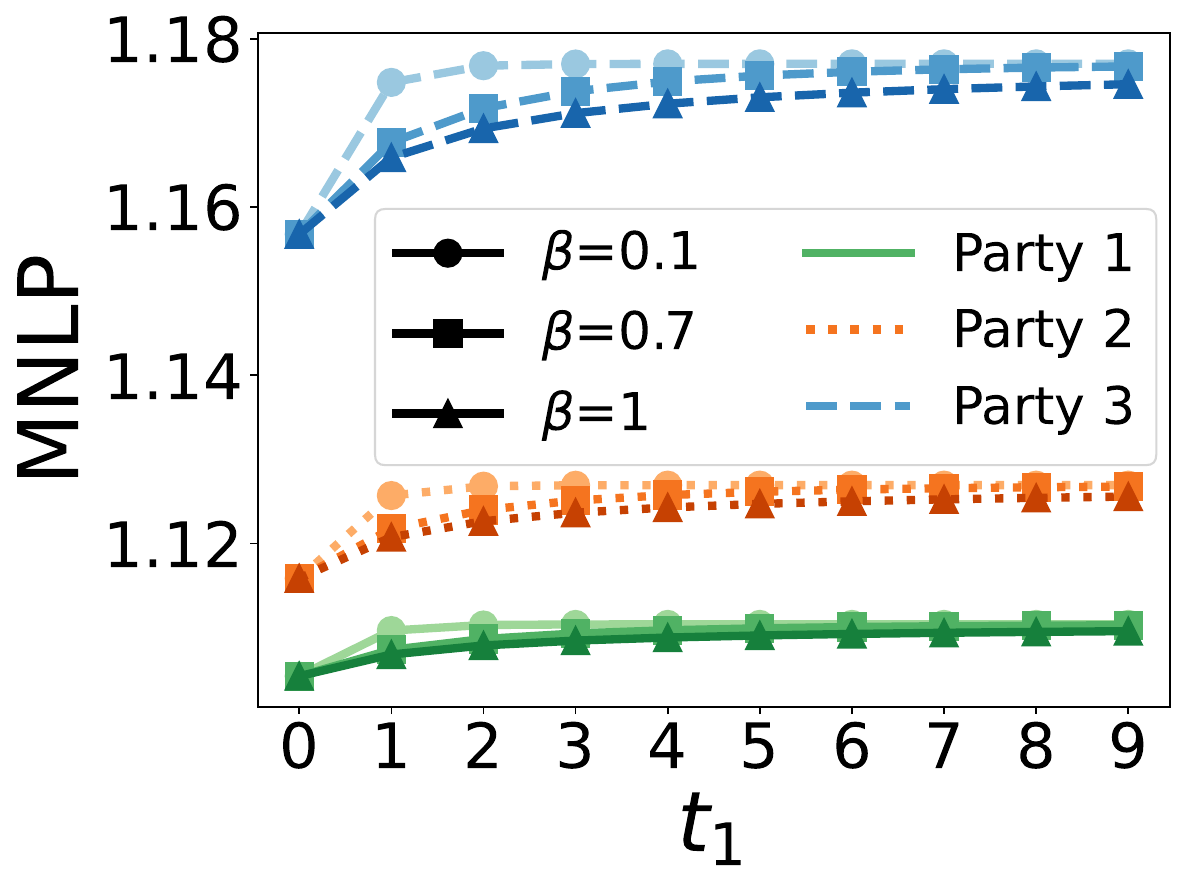} &
         \includegraphics[width=0.24\columnwidth]{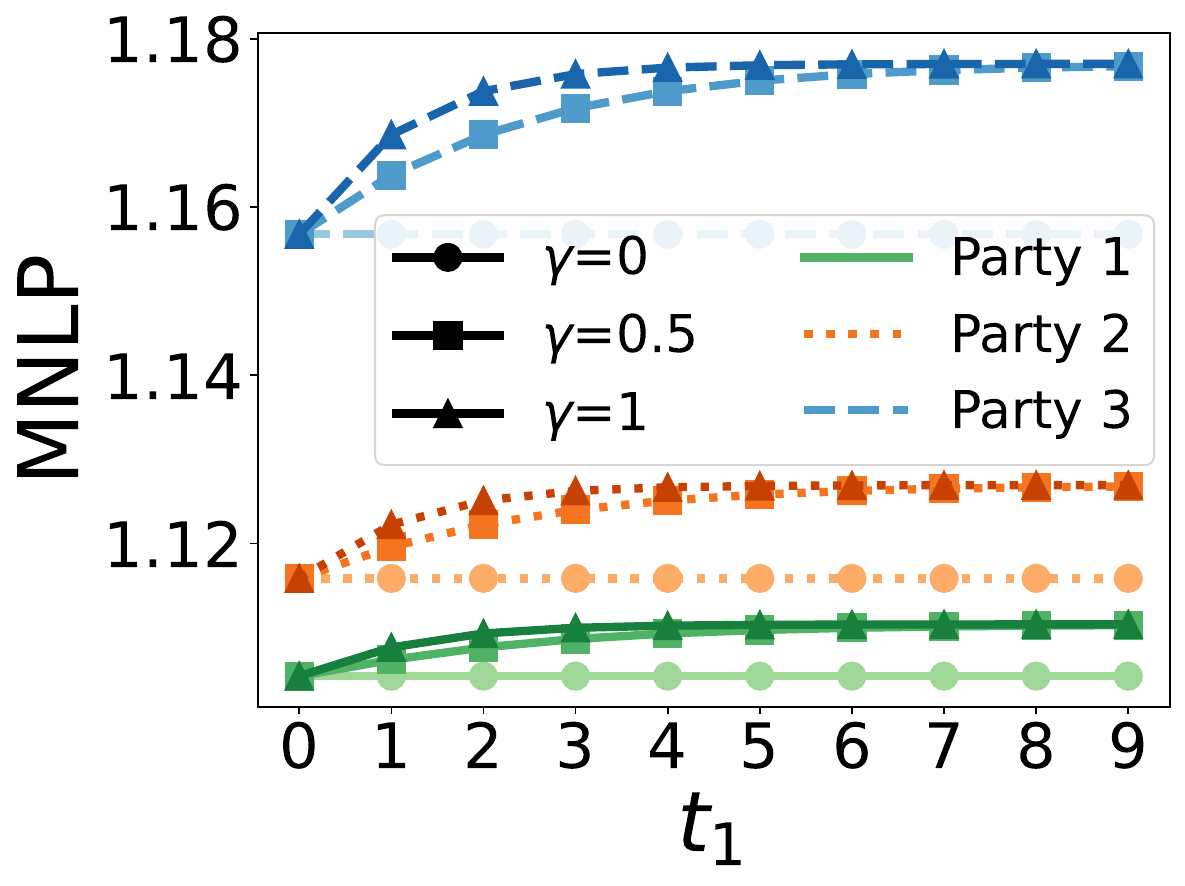} \\
         \footnotesize{(a) $r_i^*$ with varying $\beta$} & \footnotesize{(b) $r_i^*$ with varying $\gamma$} &
         \footnotesize{(c) MNLP with varying $\beta$} & \footnotesize{(d) MNLP with varying $\gamma$}
    \end{tabular}
    \caption{Graphs of (a, b) reward values and (c, d) MNLP vs.~joining time $t_1$ on the CaliH dataset.}
    \label{fig:calih-0}
\end{figure}
Following \citet{Sim2020,sim2023incentives}, we consider $n=3$ parties in our main paper (and $n=10$ parties in App.~\ref{app:cifar}).
Party $i$'s reward depends on both its joining time $t_i$ and data value $v_i$.
To explore the impact of joining times on rewards, we increase $t_1$ from $0$ while keeping $t_2 = t_3 = 0$. 
For (a) and (b), we indirectly control party $i$'s valuation $v_i$ by varying the number of data points $n_i$.
Each party's data $D_i$ is randomly sampled (without replacement) from $D_{\text{train}}$.
When all parties draw data uniformly from the same distribution (but do not have sufficient data to achieve the best model performance), the number of data points is positively correlated with the data values,
i.e., $n_1 > n_2$ typically leads to $v_1 > v_2$, allowing us to demonstrate desirability (\ref{item:desirability-equal-time}). 
For (c), we vary each party's data value by restricting the labels of their data.
\textbf{For baselines, we compare against the Shapley value, the standard approach in incentive-aware CML without incorporating joining time information}~\citep{Sim2020, ohrimenko2019collaborative}.
It is a special case of our methods when $\gamma = 0$ for time-aware data valuation and $\beta \to \infty$ for time-aware reward accumulation and correspond to horizontal lines in our experimental results (e.g., Fig.~\ref{fig:friedman-1-value}b)---Shapley value-based rewards remain constant regardless of joining time and do not satisfy incentive \ref{item:strict-order-fairness}.

\textbf{Overview of Observations.}
All experiments using both reward schemes show the satisfaction of incentives: 
(i)~All parties benefit from collaboration and receive rewards more valuable than their own data (\ref{item:IR}).
(ii)~When a party delays its participation, \emph{ceteris paribus}, it receives a model with worse performance (\ref{item:strict-order-fairness}).
In the time-agnostic case,
(iii)~parties with higher Shapley values $\phi_i$ (usually reflected as higher data values $v_i$) receive models with better performance (\ref{item:desirability-equal-time}) and
(iv)~there is always one party who receives the best model with value $v_N$ (weak efficiency).
\begin{wrapfigure}{r}{0.24\columnwidth}
    \centering
    \vspace{-1mm}
    \begin{tabular}{@{}c@{}}
        \includegraphics[width=\linewidth]{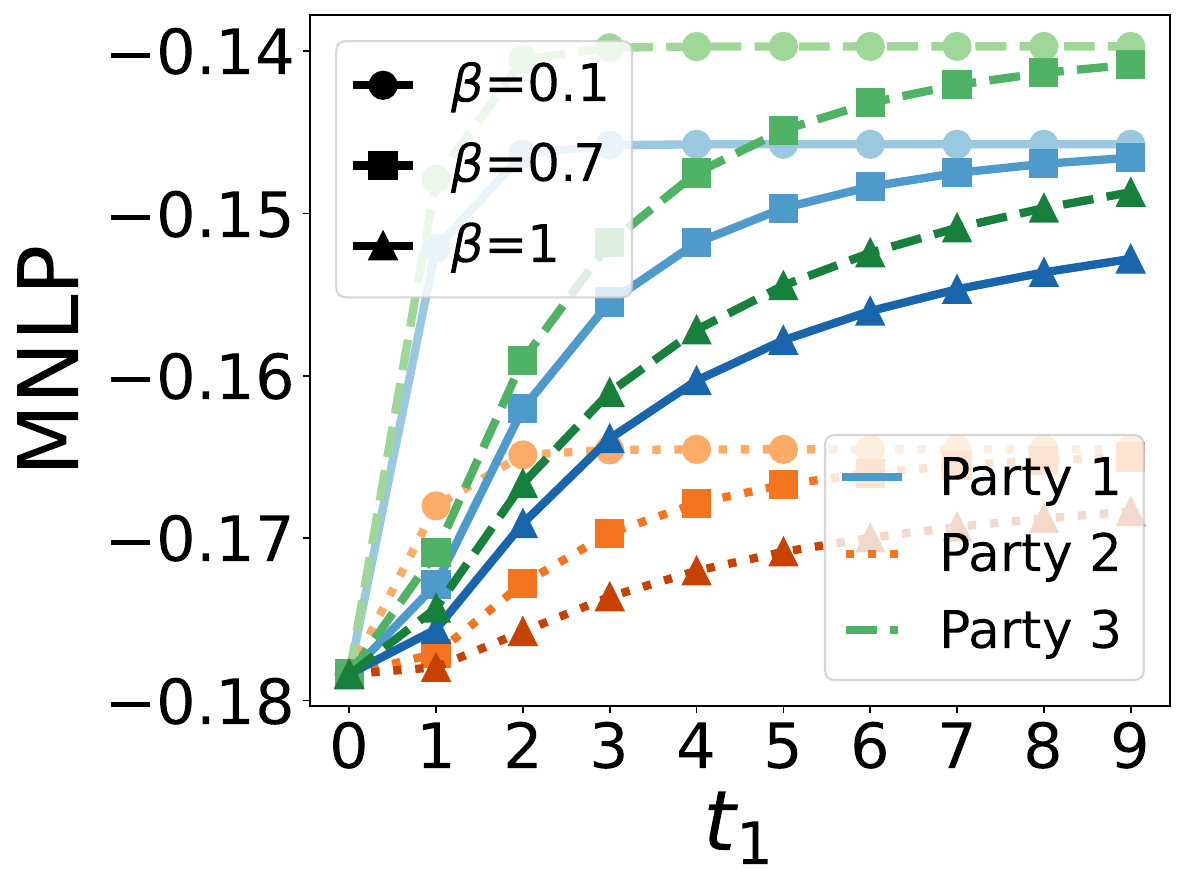} \\
        \footnotesize{(a) MNLP with varying $\beta$} \\
        \includegraphics[width=\linewidth]{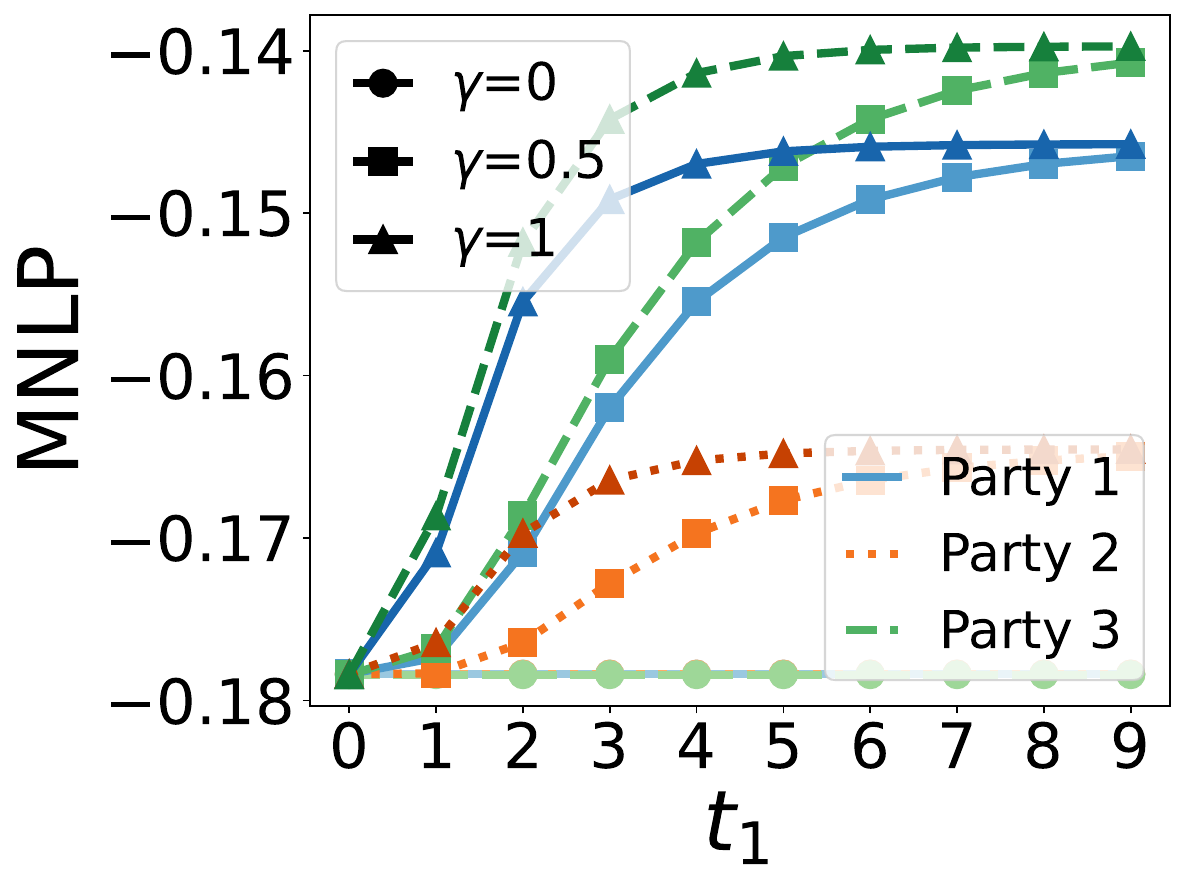} \\ 
        \footnotesize{(b) MNLP with varying $\gamma$}
    \end{tabular}
    \caption{Graphs of MNLP vs. $t_1$ on Friedman dataset using (a) time-aware reward cumulation and (b)~time-aware data valuation.}
    \label{fig:friedman-1-noise}
\end{wrapfigure}
\textbf{Friedman Dataset ($n_1=n_2>n_3$).}
In Figs.~\ref{fig:friedman-1-value}-\ref{fig:friedman-1-noise}, we sample parties' data from the Friedman dataset, creating a scenario where $v_1=10.58$ is close to $v_2=9.43$, while $v_3=5.48$ is significantly smaller. 
When $t_1=t_2=t_3=0$, the Shapley values are $\phi_1=35.88, \phi_2=34.64, \phi_3=28.40$, with party $1$ receiving model with the highest possible value $v_N$, as seen in Fig.~\ref{fig:friedman-1-value}. 
As party $1$ joins later~($t_1$ increases), Fig.~\ref{fig:friedman-1-value} shows that its reward $r_1^*$ decreases as a disincentive for joining late.
While \ref{item:strict-order-fairness} only stipulates a decrease in $r_1^*$, we also see a drop in $r_2^*$ and $r_3^*$.
This is because other parties receive benefits from party $1$'s collaboration for a shorter duration, resulting in lower total rewards.
However, each party is still guaranteed individual rationality~(\ref{item:IR}), 
i.e., all parties receive rewards at least as valuable as their own data (plotted as grey horizontal lines in Fig.~\ref{fig:friedman-1-value}).

Next, we investigate the impact of a party's joining time value (e.g., $t_1$) on the difference of its reward value with others (e.g., $r_1^*-r_2^*$ and $r_1^*-r_3^*$). 
When the gap between parties' data values are small and $\gamma > 0$\footnote{
When $\gamma=0$, $v_{C, \vt}=v_C$. 
The reward values of all parties are time-agnostic and constant across the time values, resulting in the horizontal $\gamma=0$ lines in Figs.~\ref{fig:friedman-1-value}-\ref{fig:friedman-1-diff}.}, 
the effect of the joining time values is dominant.
Although party $1$ possesses more valuable data~($v_1 > v_2$), in Fig.~\ref{fig:friedman-1-diff}a, we observe that 
party $1$ receives a lower reward than party $2$ ($r_1^*-r_2^* < 0$) when it joins too late.
When $\beta=1$ and $\gamma=0.5$, party $1$ receives lower reward than party $2$ if it joins at $t_1 \geq 3$.
When $\beta=0.7$ and $\gamma=1$, party $1$ receives lower reward if it joins at $t_1 \geq 2$ and $r_1^*-r_2^*$ decreases faster. 
Thus, we observe that using a smaller $\beta$ or a larger $\gamma$ will increase the emphasis on earlier participation and time-based desirability. 

However, when there is a significant gap between the data values of two parties (e.g., $v_3 \ll v_1$), Fig.~\ref{fig:friedman-1-diff}b shows that party $1$ would always receive a higher reward than party $3$ despite joining later (i.e., $r_1^*-r_3^* > 0$).
Thus, our time-aware framework balances the consideration of data values and time values and encourages all parties to both curate high-quality data and join earlier to receive higher rewards.
We further verify in Fig.~\ref{fig:friedman-1-noise} that models with higher reward values $r_i$ also have better predictive performance.

\textbf{CaliH Dataset ($n_1\mspace{-5mu}>\mspace{-5mu}n_2\mspace{-5mu}>\mspace{-5mu}n_3$).}
We construct a different scenario with CaliH dataset with significant gaps between the data values ($v_1=83.15, v_2=57.17, v_3=26.86$).
We report conditional IG and MNLP of the received rewards using both reward schemes in Fig.~\ref{fig:calih-0}.
We observe that the value of rewards always exceeds the value of each party's own data.
In addition, the model performance of party $1$ decreases as it joins later.
These observations are in line with incentives \ref{item:IR} and \ref{item:strict-order-fairness}.
Lastly, there is no change in the ranking of each party's rewards across all joining times, as the large gaps between the data values outweigh the benefits of earlier participation.

\textbf{MNIST Dataset.}
Each party has access only to data with a limited subset of labels~(see App.~\ref{appendix:add-exp}) and has data values {$v_1\mspace{-5mu}=\mspace{-5mu}0.16, v_2\mspace{-5mu}=\mspace{-5mu}0.19, v_3\mspace{-5mu}=\mspace{-5mu}0.26$}.
Fig.~\ref{fig:mnist} reports assigned rewards and reward model performance under both reward schemes.
Our reward values adhere to the desiderata in Sec.~\ref{sec:incentives}: all assigned rewards outperform individual values~(\ref{item:IR}), and rewards decrease when joining later~(\ref{item:strict-order-fairness}). 
Reward model performance generally declines with later joining times, 
and party $3$, receiving the highest reward, achieves the lowest MNLP, 
thereby penalizing late joiners and encouraging high-quality data curation.
However, the trend is not strictly monotonic, likely due to approximation errors in subset selection and randomness in NN training.
Nevertheless, this does not contradict our theoretical results. 
Our theoretical results are only for the reward values (Fig.~\ref{fig:mnist}a-b) and in Fig.~\ref{fig:mnist}c-d, we are optionally examining the impact on MNLP (another measure of model performance) when using the subset selection reward realization mechanism.
\begin{figure}[t]
    \centering
    \begin{tabular}{@{}c@{}c@{}c@{}c@{}}
         \includegraphics[width=0.24\columnwidth]{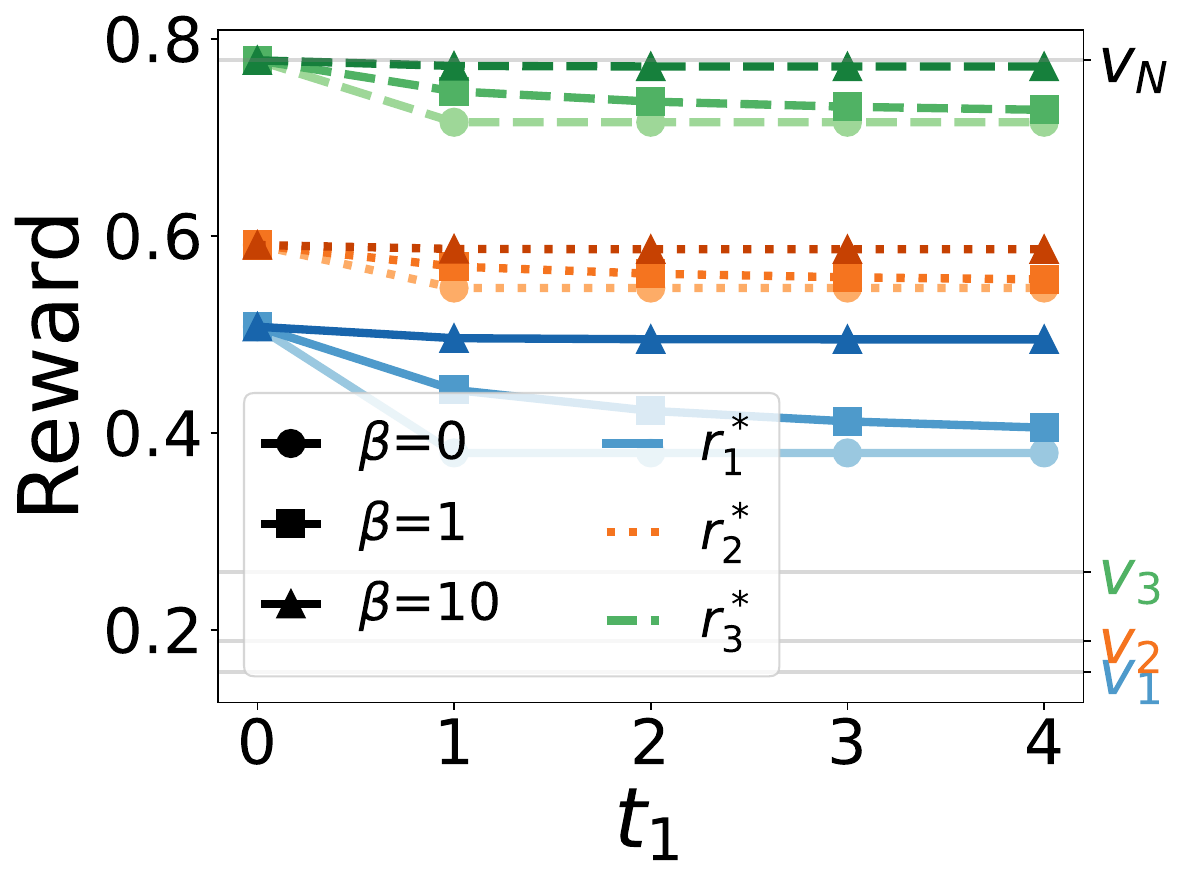} &  
         \includegraphics[width=0.24\columnwidth]{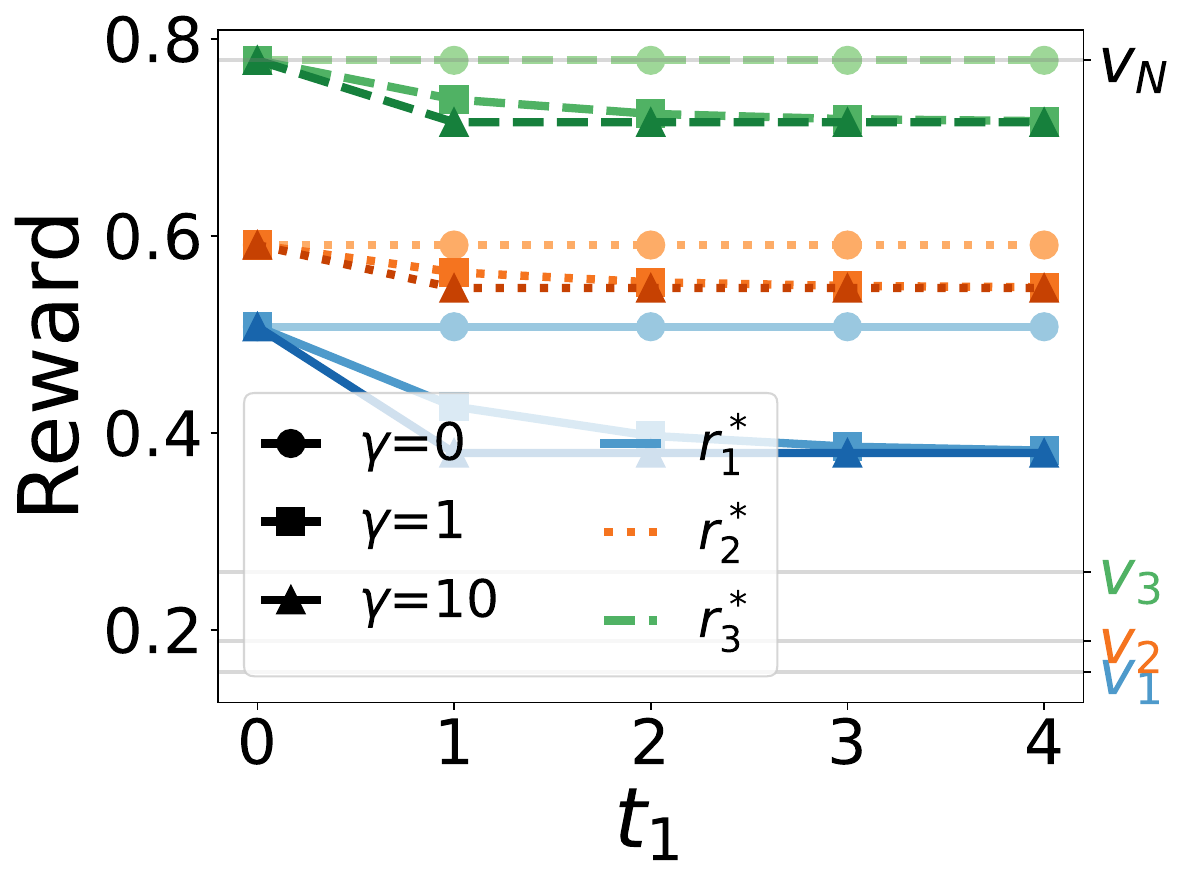} &
         \includegraphics[width=0.24\columnwidth]{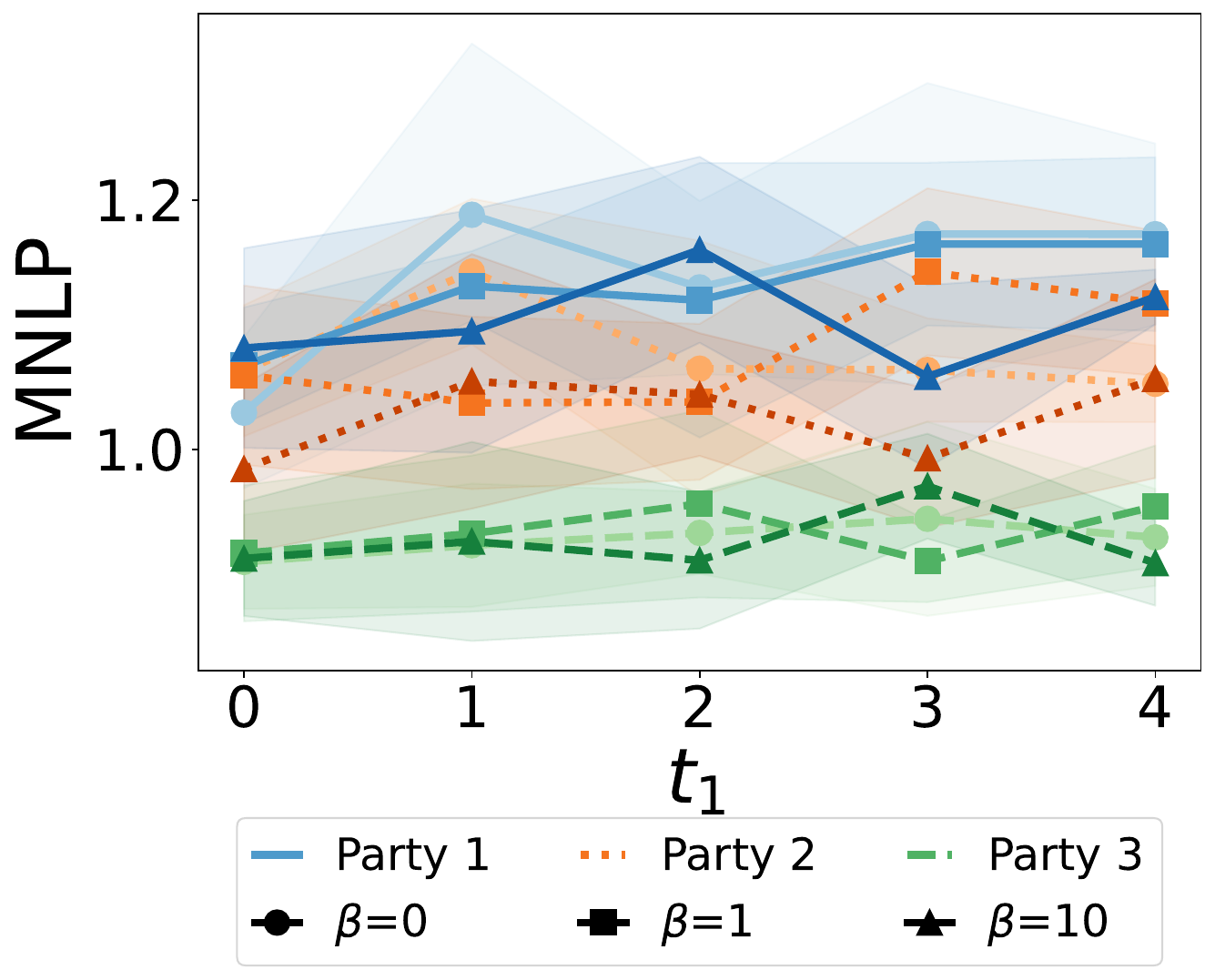} &
         \includegraphics[width=0.24\columnwidth]{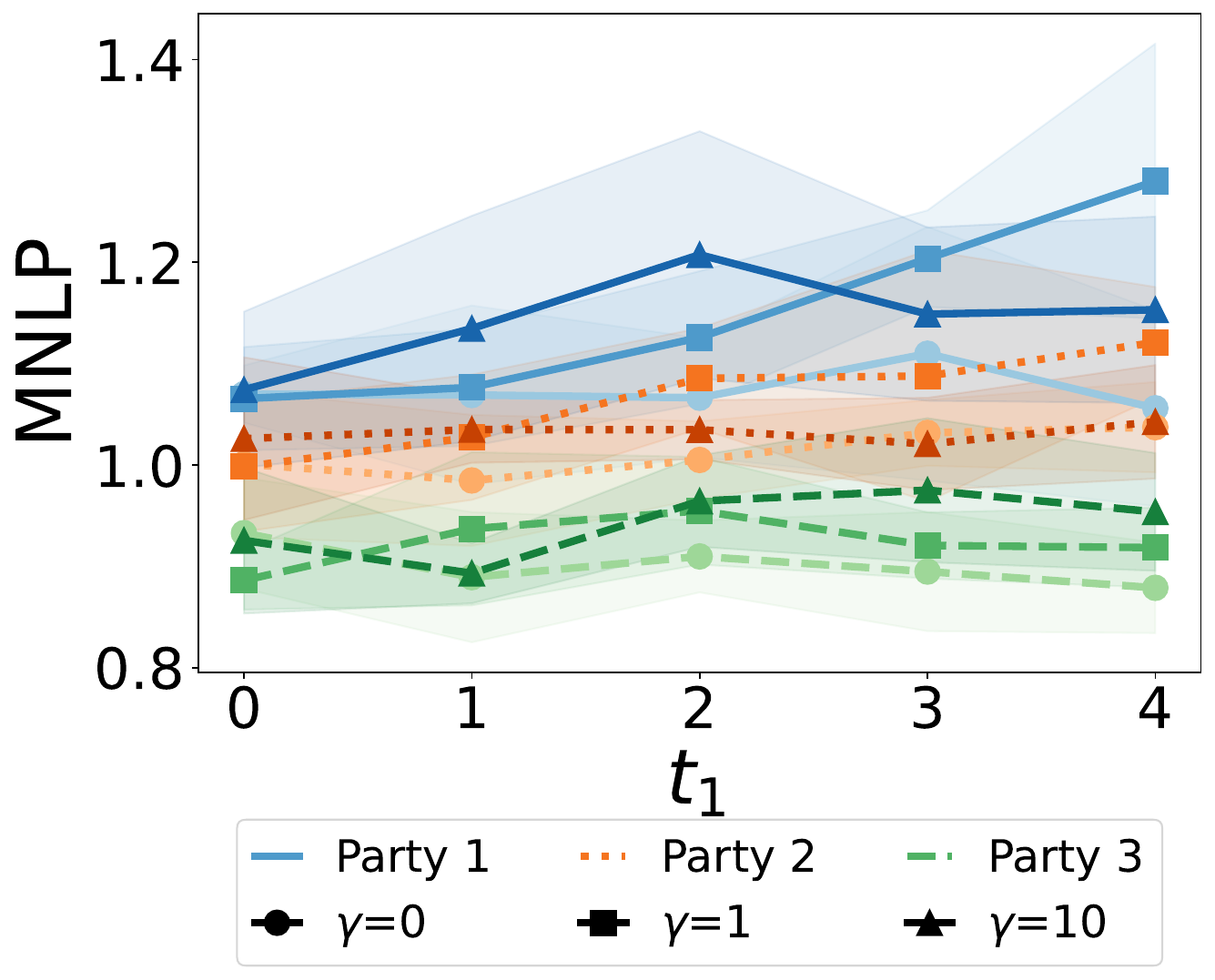} \\
         \footnotesize{(a) $r_i^*$ with varying $\beta$} & \footnotesize{(b) $r_i^*$ with varying $\gamma$} &
         \footnotesize{(c) MNLP with varying $\beta$} & \footnotesize{(d) MNLP with varying $\gamma$}
    \end{tabular}
    \caption{Graphs of (a, b) reward values $r_i^*$ and (c, d) MNLP (averaged over $6$ independent realization with standard deviation shaded) vs.~different joining time value $t_1$ on the MNIST dataset.}
    \label{fig:mnist}
\end{figure}

\section{Conclusion} 
This paper seeks to encourage parties to join data sharing collaboration early and curate high-quality data. 
To this end, we define time-aware incentives that complement existing fairness incentives and propose two time-aware reward schemes that satisfy all incentives, and have parameters to control the emphasis on joining times and earlier participation.
Our empirical evaluations show that the incentives hold with respect to the data valuation function and the model predictive performance.
We discuss the limitations of our work in App.~\ref{app:limitations}, focusing on computational efficiency and privacy.

\begin{ack}
This research/project is supported by the National Research Foundation, Singapore under its AI Singapore Programme (AISG Award No: AISG$3$-RP-$2022$-$029$).
This research is supported by the National Research Foundation (NRF), Prime Minister’s Office, Singapore under its Campus for Research Excellence and Technological Enterprise (CREATE) programme. The Mens, Manus, and Machina (M3S) is an interdisciplinary research group (IRG) of the Singapore MIT Alliance for Research and Technology (SMART) centre.
Jiangwei Chen is supported by the Institute for Infocomm Research (I\textsuperscript{2}R), Agency for Science, Technology and Research (A*STAR).
We would like to thank the anonymous reviewers and the AC for their helpful and constructive feedback.
\end{ack}

\bibliographystyle{plainnat}
\bibliography{neurips2025}


\newpage
\section*{NeurIPS Paper Checklist}

\begin{enumerate}

\item {\bf Claims}
    \item[] Question: Do the main claims made in the abstract and introduction accurately reflect the paper's contributions and scope?
    \item[] Answer: \answerYes{} 
    \item[] Justification: Our claims are supported by both theoretical and empirical evidence. The theoretical evidence is provided in Sec.~\ref{sec:rewards} and App.~\ref{appendix:proofs}. The empirical evidence is provided in Sec.~\ref{sec:experiments} and App.~\ref{appendix:add-exp}.
    \item[] Guidelines:
    \begin{itemize}
        \item The answer NA means that the abstract and introduction do not include the claims made in the paper.
        \item The abstract and/or introduction should clearly state the claims made, including the contributions made in the paper and important assumptions and limitations. A No or NA answer to this question will not be perceived well by the reviewers. 
        \item The claims made should match theoretical and experimental results, and reflect how much the results can be expected to generalize to other settings. 
        \item It is fine to include aspirational goals as motivation as long as it is clear that these goals are not attained by the paper. 
    \end{itemize}

\item {\bf Limitations}
    \item[] Question: Does the paper discuss the limitations of the work performed by the authors?
    \item[] Answer: \answerYes{} 
    \item[] Justification: We discuss the limitations of our work in App.~\ref{app:limitations}.
    \item[] Guidelines:
    \begin{itemize}
        \item The answer NA means that the paper has no limitation while the answer No means that the paper has limitations, but those are not discussed in the paper. 
        \item The authors are encouraged to create a separate "Limitations" section in their paper.
        \item The paper should point out any strong assumptions and how robust the results are to violations of these assumptions (e.g., independence assumptions, noiseless settings, model well-specification, asymptotic approximations only holding locally). The authors should reflect on how these assumptions might be violated in practice and what the implications would be.
        \item The authors should reflect on the scope of the claims made, e.g., if the approach was only tested on a few datasets or with a few runs. In general, empirical results often depend on implicit assumptions, which should be articulated.
        \item The authors should reflect on the factors that influence the performance of the approach. For example, a facial recognition algorithm may perform poorly when image resolution is low or images are taken in low lighting. Or a speech-to-text system might not be used reliably to provide closed captions for online lectures because it fails to handle technical jargon.
        \item The authors should discuss the computational efficiency of the proposed algorithms and how they scale with dataset size.
        \item If applicable, the authors should discuss possible limitations of their approach to address problems of privacy and fairness.
        \item While the authors might fear that complete honesty about limitations might be used by reviewers as grounds for rejection, a worse outcome might be that reviewers discover limitations that aren't acknowledged in the paper. The authors should use their best judgment and recognize that individual actions in favor of transparency play an important role in developing norms that preserve the integrity of the community. Reviewers will be specifically instructed to not penalize honesty concerning limitations.
    \end{itemize}

\item {\bf Theory assumptions and proofs}
    \item[] Question: For each theoretical result, does the paper provide the full set of assumptions and a complete (and correct) proof?
    \item[] Answer: \answerYes{} 
    \item[] Justification: Most of our theoretical results are related to the Shapley value and characteristic/valuation function. The sufficient conditions of the characteristic functions for our methods are discussed in Sec.~\ref{sec:valuation}. The relevant proofs about the characteristic functions are provided in App.~\ref{appendix:proof-cig-characteristics} and App.~\ref{app:dual}. The complete proofs of theorems are provided in App.~\ref{appendix:proofs}, which do not require more assumptions than what have been discussed in Sec.~\ref{sec:valuation}.
    \item[] Guidelines:
    \begin{itemize}
        \item The answer NA means that the paper does not include theoretical results. 
        \item All the theorems, formulas, and proofs in the paper should be numbered and cross-referenced.
        \item All assumptions should be clearly stated or referenced in the statement of any theorems.
        \item The proofs can either appear in the main paper or the supplemental material, but if they appear in the supplemental material, the authors are encouraged to provide a short proof sketch to provide intuition. 
        \item Inversely, any informal proof provided in the core of the paper should be complemented by formal proofs provided in appendix or supplemental material.
        \item Theorems and Lemmas that the proof relies upon should be properly referenced. 
    \end{itemize}

    \item {\bf Experimental result reproducibility}
    \item[] Question: Does the paper fully disclose all the information needed to reproduce the main experimental results of the paper to the extent that it affects the main claims and/or conclusions of the paper (regardless of whether the code and data are provided or not)?
    \item[] Answer: \answerYes{} 
    \item[] Justification: All datasets used in the paper are publicly accessible, we have also uploaded the code and instructions needed to reproduce the results in the supplementary materials.
    \item[] Guidelines:
    \begin{itemize}
        \item The answer NA means that the paper does not include experiments.
        \item If the paper includes experiments, a No answer to this question will not be perceived well by the reviewers: Making the paper reproducible is important, regardless of whether the code and data are provided or not.
        \item If the contribution is a dataset and/or model, the authors should describe the steps taken to make their results reproducible or verifiable. 
        \item Depending on the contribution, reproducibility can be accomplished in various ways. For example, if the contribution is a novel architecture, describing the architecture fully might suffice, or if the contribution is a specific model and empirical evaluation, it may be necessary to either make it possible for others to replicate the model with the same dataset, or provide access to the model. In general. releasing code and data is often one good way to accomplish this, but reproducibility can also be provided via detailed instructions for how to replicate the results, access to a hosted model (e.g., in the case of a large language model), releasing of a model checkpoint, or other means that are appropriate to the research performed.
        \item While NeurIPS does not require releasing code, the conference does require all submissions to provide some reasonable avenue for reproducibility, which may depend on the nature of the contribution. For example
        \begin{enumerate}
            \item If the contribution is primarily a new algorithm, the paper should make it clear how to reproduce that algorithm.
            \item If the contribution is primarily a new model architecture, the paper should describe the architecture clearly and fully.
            \item If the contribution is a new model (e.g., a large language model), then there should either be a way to access this model for reproducing the results or a way to reproduce the model (e.g., with an open-source dataset or instructions for how to construct the dataset).
            \item We recognize that reproducibility may be tricky in some cases, in which case authors are welcome to describe the particular way they provide for reproducibility. In the case of closed-source models, it may be that access to the model is limited in some way (e.g., to registered users), but it should be possible for other researchers to have some path to reproducing or verifying the results.
        \end{enumerate}
    \end{itemize}

\item {\bf Open access to data and code}
    \item[] Question: Does the paper provide open access to the data and code, with sufficient instructions to faithfully reproduce the main experimental results, as described in supplemental material?
    \item[] Answer: \answerYes{} 
    \item[] Justification: All datasets used in the paper are publicly accessible, we have also uploaded the code and instructions needed to reproduce the results in the supplementary materials. Once the blind review period is over, we will open-source our code and instructions.
    \item[] Guidelines:
    \begin{itemize}
        \item The answer NA means that paper does not include experiments requiring code.
        \item Please see the NeurIPS code and data submission guidelines (\url{https://nips.cc/public/guides/CodeSubmissionPolicy}) for more details.
        \item While we encourage the release of code and data, we understand that this might not be possible, so “No” is an acceptable answer. Papers cannot be rejected simply for not including code, unless this is central to the contribution (e.g., for a new open-source benchmark).
        \item The instructions should contain the exact command and environment needed to run to reproduce the results. See the NeurIPS code and data submission guidelines (\url{https://nips.cc/public/guides/CodeSubmissionPolicy}) for more details.
        \item The authors should provide instructions on data access and preparation, including how to access the raw data, preprocessed data, intermediate data, and generated data, etc.
        \item The authors should provide scripts to reproduce all experimental results for the new proposed method and baselines. If only a subset of experiments are reproducible, they should state which ones are omitted from the script and why.
        \item At submission time, to preserve anonymity, the authors should release anonymized versions (if applicable).
        \item Providing as much information as possible in supplemental material (appended to the paper) is recommended, but including URLs to data and code is permitted.
    \end{itemize}

\item {\bf Experimental setting/details}
    \item[] Question: Does the paper specify all the training and test details (e.g., data splits, hyperparameters, how they were chosen, type of optimizer, etc.) necessary to understand the results?
    \item[] Answer: \answerYes{} 
    \item[] Justification: We describe our experimental settings in Sec.~\ref{sec:experiments}, with more details on data splits, hyperparameters, models chosen in App.~\ref{appendix:add-exp}.
    \item[] Guidelines:
    \begin{itemize}
        \item The answer NA means that the paper does not include experiments.
        \item The experimental setting should be presented in the core of the paper to a level of detail that is necessary to appreciate the results and make sense of them.
        \item The full details can be provided either with the code, in appendix, or as supplemental material.
    \end{itemize}

\item {\bf Experiment statistical significance}
    \item[] Question: Does the paper report error bars suitably and correctly defined or other appropriate information about the statistical significance of the experiments?
    \item[] Answer: \answerYes{} 
    \item[] Justification: We report the standard deviation of 6 different realization for the subset selection method. For the likelihood tempering reward realization method, we did not report the error bars since the results are exact.
    \item[] Guidelines: 
    \begin{itemize}
        \item The answer NA means that the paper does not include experiments.
        \item The authors should answer "Yes" if the results are accompanied by error bars, confidence intervals, or statistical significance tests, at least for the experiments that support the main claims of the paper.
        \item The factors of variability that the error bars are capturing should be clearly stated (for example, train/test split, initialization, random drawing of some parameter, or overall run with given experimental conditions).
        \item The method for calculating the error bars should be explained (closed form formula, call to a library function, bootstrap, etc.)
        \item The assumptions made should be given (e.g., Normally distributed errors).
        \item It should be clear whether the error bar is the standard deviation or the standard error of the mean.
        \item It is OK to report 1-sigma error bars, but one should state it. The authors should preferably report a 2-sigma error bar than state that they have a 96\% CI, if the hypothesis of Normality of errors is not verified.
        \item For asymmetric distributions, the authors should be careful not to show in tables or figures symmetric error bars that would yield results that are out of range (e.g. negative error rates).
        \item If error bars are reported in tables or plots, The authors should explain in the text how they were calculated and reference the corresponding figures or tables in the text.
    \end{itemize}

\item {\bf Experiments compute resources}
    \item[] Question: For each experiment, does the paper provide sufficient information on the computer resources (type of compute workers, memory, time of execution) needed to reproduce the experiments?
    \item[] Answer: \answerYes{} 
    \item[] Justification: We provide the specifications of the hardware used for running the experiments in App.~\ref{appendix:add-exp}.
    \item[] Guidelines:
    \begin{itemize}
        \item The answer NA means that the paper does not include experiments.
        \item The paper should indicate the type of compute workers CPU or GPU, internal cluster, or cloud provider, including relevant memory and storage.
        \item The paper should provide the amount of compute required for each of the individual experimental runs as well as estimate the total compute. 
        \item The paper should disclose whether the full research project required more compute than the experiments reported in the paper (e.g., preliminary or failed experiments that didn't make it into the paper). 
    \end{itemize}
    
\item {\bf Code of ethics}
    \item[] Question: Does the research conducted in the paper conform, in every respect, with the NeurIPS Code of Ethics \url{https://neurips.cc/public/EthicsGuidelines}?
    \item[] Answer: \answerYes{} 
    \item[] Justification: We have read the NeurIPS Code of Ethics and made sure that our paper conforms to it.
    \item[] Guidelines:
    \begin{itemize}
        \item The answer NA means that the authors have not reviewed the NeurIPS Code of Ethics.
        \item If the authors answer No, they should explain the special circumstances that require a deviation from the Code of Ethics.
        \item The authors should make sure to preserve anonymity (e.g., if there is a special consideration due to laws or regulations in their jurisdiction).
    \end{itemize}

\item {\bf Broader impacts}
    \item[] Question: Does the paper discuss both potential positive societal impacts and negative societal impacts of the work performed?
    \item[] Answer: \answerNA{} 
    \item[] Justification: The purpose of this paper is to propose a framework that incentivizes early collaboration between ML parties, without negative societal impacts. 
    \item[] Guidelines:
    \begin{itemize}
        \item The answer NA means that there is no societal impact of the work performed.
        \item If the authors answer NA or No, they should explain why their work has no societal impact or why the paper does not address societal impact.
        \item Examples of negative societal impacts include potential malicious or unintended uses (e.g., disinformation, generating fake profiles, surveillance), fairness considerations (e.g., deployment of technologies that could make decisions that unfairly impact specific groups), privacy considerations, and security considerations.
        \item The conference expects that many papers will be foundational research and not tied to particular applications, let alone deployments. However, if there is a direct path to any negative applications, the authors should point it out. For example, it is legitimate to point out that an improvement in the quality of generative models could be used to generate deepfakes for disinformation. On the other hand, it is not needed to point out that a generic algorithm for optimizing neural networks could enable people to train models that generate Deepfakes faster.
        \item The authors should consider possible harms that could arise when the technology is being used as intended and functioning correctly, harms that could arise when the technology is being used as intended but gives incorrect results, and harms following from (intentional or unintentional) misuse of the technology.
        \item If there are negative societal impacts, the authors could also discuss possible mitigation strategies (e.g., gated release of models, providing defenses in addition to attacks, mechanisms for monitoring misuse, mechanisms to monitor how a system learns from feedback over time, improving the efficiency and accessibility of ML).
    \end{itemize}
    
\item {\bf Safeguards}
    \item[] Question: Does the paper describe safeguards that have been put in place for responsible release of data or models that have a high risk for misuse (e.g., pretrained language models, image generators, or scraped datasets)?
    \item[] Answer: \answerNA{} 
    \item[] Justification: This paper poses no such risks.
    \item[] Guidelines:
    \begin{itemize}
        \item The answer NA means that the paper poses no such risks.
        \item Released models that have a high risk for misuse or dual-use should be released with necessary safeguards to allow for controlled use of the model, for example by requiring that users adhere to usage guidelines or restrictions to access the model or implementing safety filters. 
        \item Datasets that have been scraped from the Internet could pose safety risks. The authors should describe how they avoided releasing unsafe images.
        \item We recognize that providing effective safeguards is challenging, and many papers do not require this, but we encourage authors to take this into account and make a best faith effort.
    \end{itemize}

\item {\bf Licenses for existing assets}
    \item[] Question: Are the creators or original owners of assets (e.g., code, data, models), used in the paper, properly credited and are the license and terms of use explicitly mentioned and properly respected?
    \item[] Answer: \answerYes{} 
    \item[] Justification: We have provided references for all the datasets used in the paper.
    \item[] Guidelines:
    \begin{itemize}
        \item The answer NA means that the paper does not use existing assets.
        \item The authors should cite the original paper that produced the code package or dataset.
        \item The authors should state which version of the asset is used and, if possible, include a URL.
        \item The name of the license (e.g., CC-BY 4.0) should be included for each asset.
        \item For scraped data from a particular source (e.g., website), the copyright and terms of service of that source should be provided.
        \item If assets are released, the license, copyright information, and terms of use in the package should be provided. For popular datasets, \url{paperswithcode.com/datasets} has curated licenses for some datasets. Their licensing guide can help determine the license of a dataset.
        \item For existing datasets that are re-packaged, both the original license and the license of the derived asset (if it has changed) should be provided.
        \item If this information is not available online, the authors are encouraged to reach out to the asset's creators.
    \end{itemize}

\item {\bf New assets}
    \item[] Question: Are new assets introduced in the paper well documented and is the documentation provided alongside the assets?
    \item[] Answer: \answerNA{} 
    \item[] Justification: The paper does not release new assets.
    \item[] Guidelines:
    \begin{itemize}
        \item The answer NA means that the paper does not release new assets.
        \item Researchers should communicate the details of the dataset/code/model as part of their submissions via structured templates. This includes details about training, license, limitations, etc. 
        \item The paper should discuss whether and how consent was obtained from people whose asset is used.
        \item At submission time, remember to anonymize your assets (if applicable). You can either create an anonymized URL or include an anonymized zip file.
    \end{itemize}

\item {\bf Crowdsourcing and research with human subjects}
    \item[] Question: For crowdsourcing experiments and research with human subjects, does the paper include the full text of instructions given to participants and screenshots, if applicable, as well as details about compensation (if any)? 
    \item[] Answer: \answerNA{} 
    \item[] Justification: The paper does not involve crowdsourcing nor research with human subjects.
    \item[] Guidelines:
    \begin{itemize}
        \item The answer NA means that the paper does not involve crowdsourcing nor research with human subjects.
        \item Including this information in the supplemental material is fine, but if the main contribution of the paper involves human subjects, then as much detail as possible should be included in the main paper. 
        \item According to the NeurIPS Code of Ethics, workers involved in data collection, curation, or other labor should be paid at least the minimum wage in the country of the data collector. 
    \end{itemize}

\item {\bf Institutional review board (IRB) approvals or equivalent for research with human subjects}
    \item[] Question: Does the paper describe potential risks incurred by study participants, whether such risks were disclosed to the subjects, and whether Institutional Review Board (IRB) approvals (or an equivalent approval/review based on the requirements of your country or institution) were obtained?
    \item[] Answer: \answerNA{} 
    \item[] Justification: The paper does not involve crowdsourcing nor research with human subjects.
    \item[] Guidelines:
    \begin{itemize}
        \item The answer NA means that the paper does not involve crowdsourcing nor research with human subjects.
        \item Depending on the country in which research is conducted, IRB approval (or equivalent) may be required for any human subjects research. If you obtained IRB approval, you should clearly state this in the paper. 
        \item We recognize that the procedures for this may vary significantly between institutions and locations, and we expect authors to adhere to the NeurIPS Code of Ethics and the guidelines for their institution. 
        \item For initial submissions, do not include any information that would break anonymity (if applicable), such as the institution conducting the review.
    \end{itemize}

\item {\bf Declaration of LLM usage}
    \item[] Question: Does the paper describe the usage of LLMs if it is an important, original, or non-standard component of the core methods in this research? Note that if the LLM is used only for writing, editing, or formatting purposes and does not impact the core methodology, scientific rigorousness, or originality of the research, declaration is not required.
    \item[] Answer: \answerNA{} 
    \item[] Justification: The core method development in this research does not involve LLMs as any important, original, or non-standard components.
    \item[] Guidelines:
    \begin{itemize}
        \item The answer NA means that the core method development in this research does not involve LLMs as any important, original, or non-standard components.
        \item Please refer to our LLM policy (\url{https://neurips.cc/Conferences/2025/LLM}) for what should or should not be described.
    \end{itemize}

\end{enumerate}


\newpage
\appendix

\section{Further Justification on the Time-Aware Data Sharing Setting}
\label{app:justify-setting}
A party may not want or be able to share their data early because:

\squishstartappendix
    \item \emph{They may be waiting for confirmation that the collaboration benefits them.}
    
    In footnote~\ref{fn:setting}, we describe how e-commerce marketplaces restrict opportunities to customers to create urgency and encourage prompt actions. Similarly, a data sharing mediator may announce that it will try to collect data but will only give out model rewards if the model has good enough performance (e.g., $90$\% accuracy) such that it is worth the effort of all parties and extra costs (e.g., legal fees). The mediator may regularly update parties about the current performance and a wait-and-see party might only start contributing when the accuracy is near $80$\%.

    \item \emph{They may take time to process the data and the time can be sped up.}
    
    For example, an incentivized healthcare firm may be proactive in seeking consent from their patients, legal consent, and anonymizing it.

    \item \emph{They may take more time to collect more data.}
    
    In situations where longer time means more data collection, the healthcare firm can submit multiple datasets and submit each dataset as soon as possible to maximize its reward, i.e., it becomes multiple parties. Our framework can also be modified to support the case where it is just one party. We are not incentivizing smaller datasets over larger ones. Instead, our goal is to incentivize submission as soon as available.
\squishend

Thus, we design incentives to \textbf{incentivize each party to share their data as early as possible}.
Note that while we incentivize parties to share the same dataset as early as possible, our consideration of \emph{fairness, \ref{item:necessity} and data valuation} ensures that we do \textbf{not} incentivize parties to submit smaller, lower quality, less diverse datasets to join earlier in the collaboration. 
Our experiments~(e.g., Fig.~\ref{fig:friedman-1-diff}b) also empirically demonstrate that parties with significantly more valuable data will still receive better rewards despite joining late.

While it may seem more practical to consider the online \textbf{federated learning} setting with repeated data sharing, data valuation, and reward allocation, it may not be possible to ensure our strong theoretical properties on the reward values. 
Thus, we consider \textbf{one-time data sharing} as a first step and leave federated learning to future work.

\citet{Sim2020} described some realistic use cases of data sharing. 
In precision agriculture, a farmer with limited land area and sensors can combine his collected data with the other farmers to improve the modeling of the effect of various influences (e.g., weather, pest) on his crop yield.
In real estate, a property agency can pool together its limited transactional data with that of the other agencies to improve the prediction of property prices.
Additionally, we note that firms and other parties have and will be willing to share their data with one another using secure sharing platforms (such as X-road~[\url{https://x-road.global}] which is currently implemented in over $20$ countries) which prevent data from being accessed by unauthorized individuals.

It is also worth noting that the non-centralized training (e.g., federated learning), while appealing, does not strictly guarantee privacy either. 
For example, \citet{DBLP:journals/tifs/WeiLDMYFJQP20} shows that ``private information can still be divulged by analyzing uploaded parameters from clients''. 
Hence, the challenge of preserving privacy is non-trivial in both data-sharing collaboration~(CML) and non-data-sharing collaboration~(FL), and lies outside the scope of our contributions.

\section{Key Differences with \citet{ge2024early}} \label{appendix:diff-aamas}
\paragraph{Settings.}
\citet{ge2024early} considers the \emph{online} setting (i.e., distributing rewards every time a party joins), whereas we consider the \emph{offline} setting (i.e., distributing rewards after all parties join). 
The online setting poses challenges in adjusting rewards to ensure overall fairness if one party's data value changes when additional parties join, as it is difficult to claw back rewards that have already been distributed. 
In contrast, we focus on the offline setting, where reward distribution occurs only after the collaboration terminates under a predetermined condition~(Sec.~\ref{sec:problem}), ensuring fair evaluation and compensation.

Additionally, we consider the \emph{differences in joining times} while \citet{ge2024early} considers only the \emph{order or permutation of arrivals}. 
Our setting is more realistic since in CML, a day's and a month's wait have different impacts.
We introduce this notion by discretizing joining time values~(Sec.~\ref{sec:problem}) and propose equal-time incentives \ref{item:sym-equal-time}, \ref{item:desirability-equal-time} and strict time monotonicity \ref{item:strict-order-fairness}~(Sec.~\ref{sec:incentives}).

\paragraph{Incentives.}
\citet{ge2024early} enforces \emph{online individual rationality}~(OIR) which ensures that parties' rewards will not decrease when new parties join. 
However, this could be unfair in CML and other data valuation applications. 
For instance, if the validation accuracy is used as the valuation function, the first contributor could receive credit for the majority of the accuracy improvement, say, from 0 to 0.8, and this credit would not decrease under OIR, even if a later contributor with more valuable data increases the accuracy from 0.8 to 0.99.
In this scenario, a fairer evaluation would reduce the first contributor's attributed contribution. 
However, in the online setting, as previously discussed, there is no mechanism to retrieve rewards that have already been distributed.

Instead, we build upon fairness incentives from existing CML literature to incorporate joining time values when defining \emph{time-aware incentives}~(Sec.~\ref{sec:incentives}), aiming to encourage both early participation and the curation of high-quality data.
Specifically, the method proposed in \citet{ge2024early} fails to satisfy our \ref{item:sym-equal-time}, \ref{item:desirability-equal-time}, \ref{item:necessity} and \ref{item:strict-order-fairness} incentives.

\paragraph{Solutions.}
Our solutions incentive early participation across \emph{a broader range of valuation functions}. 
We outline the necessary properties of the valuation function to satisfy all of our proposed incentives and provide a sufficient example~(conditional IG) in Sec.~\ref{sec:valuation}. 
In contrast, Corollary~5.7 in \citet{ge2024early} indicates that their solution is restricted to \emph{symmetric monotone valuation functions}~(i.e., $v(S) = v(T)$ if $|S| = |T|$, and $v(S) \le v(T)$ if $S \subseteq T$), which significantly limits its practical applicability.

\section{Limitations}
\label{app:limitations}
\textbf{Computational Efficiency.}
One limitation of our work is that the exact computation of the proposed methods requires enumerating all possible subsets of the grand coalition, resulting in exponential computational complexity. 
However, the scalability challenge is \textbf{not unique to this work};
It applies to any approach~\citep{seb, liu2023data, Ghorbani2019-lb, Sim2022-zg} that computes data values using the Shapley value.
The only difference is that, to incorporate the additional joining time information, our proposed methods increase the computational complexity \textbf{at most linearly}, i.e., by the maximum number of unique joining times, $n$. 

This challenge \textbf{can be avoided or mitigated} for two reasons.
Firstly, in our collaborative data sharing and machine learning setting~\citep{Sim2020}, there are are fewer participating parties, each contributing higher-quality data sources (e.g., hospitals holding larger datasets).
Secondly, as discussed in Remark~\ref{remark:estimation} and App.~\ref{appendix:efficient-oadv}, our theoretical results would also hold for unbiased Shapley value estimators, allowing the use of any Shapley value approximation method~\citep{Jia2019-th}.
On average, the approximate Shapley values still incentivize parties to curate high-quality data and join the collaboration early.
Using approximate Shapley values would significantly reduce the number of coalitions to evaluate to $O(n \log n)$ and scalability would also improve with the discovery of more efficient approximation techniques~\citep{li2024faster}.
We evaluate on $10$ parties in App.~\ref{app:cifar} using Monte Carlo approximation methods and demonstrate that our results remain consistent with an increased number of parties and Shapley value estimation.

\textbf{Privacy Issues of Non-FL Setting.}
Another limitation of our work is that data sharing poses privacy risks due to the sensitive nature of the data.
While it may seem more practical to consider the online \emph{federated learning}~(FL) setting with repeated data sharing, data valuation, and reward allocation, it may not be possible to ensure our strong theoretical properties on the reward values.
Therefore, we consider \textbf{one-time data sharing} as a first step and leave the extension to FL for future work.

It is also important to note that the non-centralized training (e.g., FL), while appealing, does not strictly guarantee privacy either. 
For example, \citet{DBLP:journals/tifs/WeiLDMYFJQP20} shows that ``private information can still be divulged by analyzing uploaded parameters from clients''. 
Hence, the challenge of preserving privacy is non-trivial in both data-sharing collaboration~(CML) and non-data-sharing collaboration~(FL), and lies outside the scope of our contributions.

Additionally, we note that firms and other parties have and will be willing to share their data with one another using secure sharing platforms such as X-road~(\url{https://x-road.global}), which is currently deployed in over $20$ countries and prevents unauthorized access to shared data.

\section{Proof of Characteristics of Conditional IG} \label{appendix:proof-cig-characteristics}
To ease notations, we define $v(\cdot): 2^N \to \bR$ by $v(A) = v_A \ \forall A \subseteq N$ where $v_A = I(\vtheta; D_A | D_{-A})$ is the conditional IG~\eqref{eq:cig}. 
We need to show that $v$ is non-negative, superadditive, and monotonic. 

\squishstartappendix

\item \ref{item:char_non-negativity} \textbf{Non-negativity.}
The non-negativity of $v$ is apparent from the fact that conditional mutual information is never negative~\citep{cover1999elements}.

\item \ref{item:char_superadditivity} \textbf{Superadditivity.}
We need to show $\forall A,B\subseteq N$ such that $A\cap B=\emptyset$,
\begin{align*}
    v(A\cup B)&=I(\vtheta;D_{A\cup B}|D_{-(A\cup B)})\\
    &\geq I(\vtheta;D_A|D_{-A})+I(\vtheta;D_B|D_{-B})\\
    &=v(A)+v(B) \ .
\end{align*}
By chain rule of mutual information, $I(\vtheta;D_{A\cup B}|D_{-(A\cup B)})=I(\vtheta;D_{A}|D_{-(A\cup B)})+I(\vtheta;D_{B}|D_{-B})$. 
It is then sufficient to show
\begin{equation} \label{ineq:cig}
    I(\vtheta;D_{A}|D_{-(A\cup B)})\geq I(\vtheta;D_A|D_{-A}) \ ,
\end{equation}
which is equivalent to 
\begin{equation*}
    H(D_A|D_{-(A \cup B)}) - H(D_A|\vtheta,D_{-(A\cup B)})  \geq H(D_A|D_{-A}) -  H(D_A|\vtheta,D_{-A})
\end{equation*}
by the definition of conditional mutual information. 
If we assume\footnote{
This assumption is also made in~\citep{Sim2020}.}
that $D_A$ is conditionally independent of $D_{-A}$ given $\vtheta$, then
\[H(D_A|\vtheta,D_{-(A \cup B)}) = H(D_A|\vtheta) = H(D_A|\vtheta,D_{-A}) \ .\]
Finally, $H(D_A|D_{-(A \cup B)}) \geq H(D_A|D_{-A})$ since additional information cannot increase uncertainty~(entropy). 
Thus, $I(\vtheta;D_{A}|D_{-(A\cup B)})\geq I(\vtheta;D_A|D_{-A})$. 
We have successfully shown~\eqref{ineq:cig}, which concludes the proof.

\item \ref{item:char_monotonicity} \textbf{Monotonicity.}
We need to show $\forall B \subseteq C \subseteq N \ \ v(B) \le v(C)$. 
Let $A = C \setminus B$, then $A \cap B = \emptyset$ and $A \cup B = C$.
By the superadditivity of $v$, $v(C) = v(B \cup A) \ge v(B) + v(A) \ge v(B)$, where the last inequality is because $v(A) \ge 0$.

\squishend

\section{Properties of Dual Valuation Function}
\label{app:dual}
\subsection{Dual of a Monotonic and Submodular Function} \label{appendix:dual-properties}
We show that the dual of the monotonic, submodular function $v'$ is non-negative, monotonic, and superadditive.

\squishstartappendix

\item \ref{item:char_non-negativity} \textbf{Non-negativity.} 
$\forall C \subseteq N \ \ v(C) = v'(N) - v'(N \setminus C) \ge 0$ where the inequality is due to the monotonicity of $v'$.

\item \ref{item:char_monotonicity} \textbf{Monotonicity.} 
$\forall B \subseteq C \subseteq N \ \ v(C) = v'(N) - v'(N \setminus C) \ge v'(N) - v'(N \setminus B) = v(B)$ since $v'(N \setminus C) \le v'(N \setminus B)$ by the monotonicity of $v'$.

\item \ref{item:char_superadditivity} \textbf{Superadditivity.} 
By the definition of submodularity, for any $C, D \subseteq N$, $v'(C \cup D) + v'(C \cap D) \le v'(C) + v'(D)$.
For any $A, B \subseteq N$ such that $A \cap B = \emptyset$, we can choose $C = N \setminus A$, $D = N \setminus B$, so $C \cup D = N$ since $A \cap B = \emptyset$ and $C \cap D = N \setminus (A \cup B)$.
Therefore,
\begin{align*}
    v'(C \cup D) + v'(C \cap D) & \le v'(C) + v'(D) \\
    v'(N) + v'(N \setminus (A \cup B)) & \le v'(N \setminus A) + v'(N \setminus B) \\ 
    v'(N) - v'(N \setminus A) - v'(N \setminus B) &\le -v'(N \setminus (A \cup B)) \\
    v'(N) - v'(N \setminus A) + v'(N) -  v'(N \setminus B) &\le v'(N) - v'(N \setminus (A \cup B)) \\
    v(A) + v(B) &\le v(A \cup B) \ ,
\end{align*}
proving the superadditivity of $v$.

\squishend

\subsection{Equivalence of Dual when Calculating the Shapley Value} \label{appendix:dual-equivalence}
For a dual valuation function $v$, its Shapley value is calculated by 
\begin{align*}
     \phi_i(v, N) &= \sum_{C \subseteq N \setminus\{i\}} \frac{|C| !(|N|-|C|-1) !}{|N| !}(v({C \cup\{i\}})-v(C)) \\
     &=  \sum_{C \subseteq N \setminus\{i\}} \frac{|C| !(|N|-|C|-1) !}{|N| !}(v'(N) - v'(N \setminus (C \cup\{i\})) - v'(N) + v'(N \setminus C)) \\
     &=  \sum_{C \subseteq N \setminus\{i\}} \frac{|C| !(|N|-|C|-1) !}{|N| !}(  v'(N \setminus C)- v'(N \setminus (C \cup\{i\}))  ) \\
     &= \phi_i(v', N)
\end{align*} 
where the last equality is because of the bijection between the sets $\{C: C \subseteq N \setminus\{i\}\}$ and $\{N \setminus (C \cup\{i\}): C \subseteq N \setminus\{i\}\}$.

\subsection{Dual Interpretations of Incentives} \label{appendix:dual-incentive}
As we define the incentives in Sec.~\ref{sec:incentives} with a superadditive valuation function $v$, which can be the dual of a submodular valuation function $v'$, we investigate the interpretations of these incentives by expressing $v$ in terms of $v'$ in the definitions. 
We illustrate this using the \emph{subset selection} reward realization method as it is applicable to dual of any submodular functions. 
Generally, the subset selection method assigns a model trained on $D_i \subseteq D_i^r \subseteq D_N$ as reward to party $i$ such that $v(D_i^r) = r_i^*$ (see App.~\ref{appendix:reward-realization} for more details).
Remarkably, we find that \ref{item:sym-equal-time} to \ref{item:uselessness} are equivalent to the corresponding incentives defined in terms of $v'$ while alternative insights can be drawn from interpreting \ref{item:IR} and \ref{item:necessity} using the original submodular valuation function $v'$. 

\squishstartappendix

\item \ref{item:IR} \textbf{Individual Rationality.}
By the definition of the dual valuation function~\eqref{eq:dual-def}, the condition of \ref{item:IR} translates to 
\begin{align*}
    v(D_i^r) &\ge v(D_i) \\
    v'(N) - v'(N \setminus D_i^r) &\ge v'(N) - v'(N \setminus D_i) \\
    v'(N \setminus D_i)  &\ge v'(N \setminus D_i^r) \ .
\end{align*}
This translation suggests that the value (using the original submodular valuation function) of the grand coalition without party $i$'s reward is less than the value of the grand coalition without party $i$'s original data. 
Therefore, the dual interpretation of \ref{item:IR} is that with the assigned reward, party $i$'s importance/indispensability to the overall collaboration is increased as compared to its importance without the reward. 

\item \ref{item:sym-equal-time} \textbf{Equal-Time Symmetry.}
The condition in \ref{item:sym-equal-time} can be translated to $\forall C \subseteq N \setminus \{i,j\},$
\begin{align*}
    v\lp C \cup\{i\} \rp &= v \lp C \cup \{j\} \rp \\
    v'(N) - v'(N \setminus (C \cup\{i\})) &= v'(N) - v'(N \setminus (C \cup \{j\})) \\
    v'(N \setminus (C \cup \{j\})) &= v'(N \setminus (C \cup\{i\})) \ .
\end{align*}
For any $C \subseteq N \setminus \{i, j\}$, we can take $C' = N \setminus C \setminus \{i, j\} \subseteq N \setminus \{i, j\}$ so that $N \setminus (C' \cup \{j\}) = C \cup \{i\}$ and $N \setminus (C' \cup\{i\}) = C \cup \{j\}$. 
Since we are considering all $C \subseteq N \setminus \{i,j\}$, the dual interpretation of \ref{item:sym-equal-time} is equivalent to the original submodular case.

\item \ref{item:desirability-equal-time} \textbf{Equal-Time Desirability.}
The condition of \ref{item:desirability-equal-time} holds when $\exists B \subseteq N \setminus \{i, j\} \ \ v(B \cup \{i\}) > v(B \cup \{j\})$ and $\forall C \subseteq N \setminus \{i, j\} \ \ v(C \cup \{i\}) \ge v(C \cup \{j\})$.
Translating the inequality after the first quantifier:
\begin{align*}
    v(B \cup \{i\}) &> v(B \cup \{j\}) \\ 
    v'(N) - v'(N \setminus (B \cup \{i\})) &> v'(N) - v'(N \setminus (B \cup \{j\})) \\ 
    v'(N \setminus (B \cup \{j\})) &> v'(N \setminus (B \cup \{i\})) \ .
\end{align*}
If we can find $B \subseteq N \setminus \{i,j\}$ such that $v(B \cup \{i\}) > v(B \cup \{j\})$, we can also find $B' \subseteq N \setminus \{i,j\}$ such that $v'(B \cup \{i\}) > v'(B \cup \{j\})$ by taking $B' = N \setminus B \setminus \{i,j\}$. 
Similarly, we can show equivalence between dual and original valuation functions for the condition after the second quantifier. 
Thus, the dual interpretation of \ref{item:desirability-equal-time} is equivalent to the original submodular case.

\item \ref{item:uselessness} \textbf{Uselessness.}
A party is useless if $\forall C \subseteq N \setminus \{i\} \ \ v(C \cup \{i\}) = v(C)$. 
By the definition of dual, $v(C \cup \{i\}) = v(C) \implies v'(N \setminus C) = v'(N \setminus (C \cup \{i\}))$. 
Noting the bijection between the sets $\{C: C \subseteq N \setminus\{i\}\}$ and $\{N \setminus (C \cup\{i\}): C \subseteq N \setminus\{i\}\}$, so a useless party also satisfies $\forall C \subseteq N \setminus \{i\} \ \ v'(C \cup \{i\}) = v'(C)$.

\item \ref{item:necessity} \textbf{Necessity.}
A party is necessary if $\forall C \subseteq N \ \ \{i, j\} \nsubseteq C \implies v(C) = 0$, i.e., $v'(N) = v'(N \setminus C)$.
It suggests that if either $i$ or $j$ is absent from a coalition $C$, then $C$ is of no importance to the overall collaboration as it makes no difference when excluding $C$ from the collaboration. 
Hence, both $i$ and $j$ are vital for the importance of a coalition and should be assigned the same reward.

\squishend

\section{Proofs of Theorems} \label{appendix:proofs}
We first prove some useful lemmas regarding the properties of the Shapley value~\eqref{eq:shapley}.

\begin{lemma} \label{lem:shapley-ir}
In a grand coalition $N$, if $v$ is superadditive, then $\phi_i(v, N) \geq v_i\,$, for all $i \in N$.
\end{lemma}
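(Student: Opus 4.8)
The plan is to exhibit $\phi_i(v,N)$ as a convex combination of the marginal contributions $v_{C\cup\{i\}}-v_C$ over coalitions $C\subseteq N\setminus\{i\}$, and then bound each such marginal contribution below by $v_i$ using superadditivity. Concretely, first I would recall from \eqref{eq:shapley} that
\begin{equation*}
   \phi_i(v, N) = \sum_{C \subseteq N \setminus\{i\}} \frac{|C|!(|N|-|C|-1)!}{|N|!}\,\bigl(v_{C \cup\{i\}}-v_C\bigr),
\end{equation*}
and observe that every coefficient $\frac{|C|!(|N|-|C|-1)!}{|N|!}$ is non-negative, so it suffices to control the summands and the total weight.

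Second, I would apply superadditivity \ref{item:char_superadditivity} with the disjoint coalitions $B = C$ and $C' = \{i\}$ (these are disjoint precisely because $i\notin C$), giving $v_{C\cup\{i\}} \geq v_C + v_{\{i\}} = v_C + v_i$, hence $v_{C\cup\{i\}}-v_C \geq v_i$ for every $C\subseteq N\setminus\{i\}$. Substituting this lower bound into the sum yields $\phi_i(v,N) \geq v_i \cdot \sum_{C\subseteq N\setminus\{i\}} \frac{|C|!(|N|-|C|-1)!}{|N|!}$.

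Third, I would show the weights sum to $1$. Writing $n=|N|$ and grouping coalitions $C$ by size $k=|C|\in\{0,\dots,n-1\}$, there are $\binom{n-1}{k}$ of them, so the total is $\sum_{k=0}^{n-1}\binom{n-1}{k}\frac{k!(n-1-k)!}{n!} = \sum_{k=0}^{n-1}\frac{(n-1)!}{n!} = \sum_{k=0}^{n-1}\frac{1}{n} = 1$ (equivalently, each of the $n!$ orderings of $N$ contributes equally, which is the standard random-permutation interpretation). Combining with the previous step gives $\phi_i(v,N)\geq v_i$, completing the proof.

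I do not expect a genuine obstacle here; the only mild care needed is the combinatorial identity that the Shapley weights form a probability distribution, which is routine once the summation is organized by coalition size (or recognized as the random-order representation of the Shapley value).
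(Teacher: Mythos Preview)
Your proposal is correct and follows essentially the same approach as the paper: both start from the Shapley formula, use superadditivity to bound each marginal contribution $v_{C\cup\{i\}}-v_C$ below by $v_i$, and then invoke the fact that the Shapley weights sum to $1$. You are simply more explicit about verifying the combinatorial identity for the weights, whereas the paper takes it for granted.
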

\begin{proof}
    The proof follows directly from the definition of the Shapley value~\eqref{eq:shapley} and the superadditivity of $v$:
    \begin{align*}
          \phi_i(v, N) & =\sum_{C \subseteq N \setminus\{i\}} \frac{|C| !(|N|-|C|-1) !}{|N| !}(v_{C \cup\{i\}}-v_C) \\
          & \geq \sum_{C \subseteq N \setminus\{i\}} \frac{|C| !(|N|-|C|-1) !}{|N| !} \cdot v_i \\
          & \geq v_i \ . \qedhere
    \end{align*}
\end{proof}

\begin{corollary} \label{cor:shapley-non-negative}
    If $v$ is superadditive and non-negative, then $\forall i \in N\ \ \phi_i(v, N) \geq 0$.
\end{corollary}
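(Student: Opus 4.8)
The plan is to derive this immediately from Lemma~\ref{lem:shapley-ir} together with the non-negativity assumption \ref{item:char_non-negativity}. First I would apply Lemma~\ref{lem:shapley-ir}: since $v$ is superadditive, it gives $\phi_i(v, N) \geq v_i$ for every $i \in N$. Then I would note that $v_i = v_{\{i\}} \geq 0$, because $\{i\}$ is a coalition in $N$ and $v$ is non-negative. Chaining the two inequalities yields $\phi_i(v, N) \geq v_i \geq 0$, which is the claim.

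There is no genuine obstacle here: the only thing to observe is that the individual-rationality bound of Lemma~\ref{lem:shapley-ir} already lower-bounds the Shapley value by $v_i$, and that this bound is itself non-negative. If one preferred a self-contained argument, one could instead note that every weight $|C|!(|N|-|C|-1)!/|N|!$ appearing in \eqref{eq:shapley} is non-negative and that superadditivity together with $v_i \geq 0$ gives $v_{C \cup \{i\}} - v_C \geq v_i \geq 0$, so the defining sum is a sum of non-negative terms. Either way the proof is a one-liner.
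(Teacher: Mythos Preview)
Your proposal is correct and matches the paper's proof exactly: the paper's one-line argument is ``Following Lemma~\ref{lem:shapley-ir}: $\phi_i(v, N) \ge v_i \ge 0$.'' Your optional self-contained alternative is also valid but unnecessary here.
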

\begin{proof}
    Following Lemma~\ref{lem:shapley-ir}: $\phi_i(v, N) \ge v_i \ge 0$.
\end{proof}

\begin{lemma}[Necessity] \label{lem:shapley-necessity}
    The Shapley value satisfies the necessity property, 
    i.e., for all $i, j \in N$ such that $i \ne j$, if $\forall C \subseteq N \ \ \{i, j\} \nsubseteq C \implies v_C = 0$, then $\phi_i(v, N) = \phi_j(v, N)$.
\end{lemma}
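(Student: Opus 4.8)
The plan is to start directly from the defining formula~\eqref{eq:shapley} for $\phi_i(v,N)$ and use the hypothesis to kill most of the terms, leaving an expression manifestly symmetric in $i$ and $j$. First I would note that in $\phi_i(v,N) = \sum_{C \subseteq N \setminus\{i\}} \frac{|C|!(|N|-|C|-1)!}{|N|!}(v_{C \cup\{i\}}-v_C)$, every coalition $C$ in the sum satisfies $i \notin C$, hence $\{i,j\} \nsubseteq C$, so the hypothesis gives $v_C = 0$. Similarly, $C \cup \{i\}$ contains $i$ but contains $j$ only when $j \in C$; so $v_{C \cup \{i\}} = 0$ unless $j \in C$. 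Thus only the terms with $j \in C$ survive:
\begin{equation*}
\phi_i(v, N) = \sum_{\substack{C \subseteq N \setminus\{i\} \\ j \in C}} \frac{|C|!(|N|-|C|-1)!}{|N|!}\, v_{C \cup\{i\}}\ .
\end{equation*}

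Next I would reindex by writing $C = S \cup \{j\}$ with $S \subseteq N \setminus \{i,j\}$, so that $|C| = |S|+1$ and $C \cup \{i\} = S \cup \{i,j\}$, yielding
\begin{equation*}
\phi_i(v, N) = \sum_{S \subseteq N \setminus\{i,j\}} \frac{(|S|+1)!\,(|N|-|S|-2)!}{|N|!}\, v_{S \cup\{i,j\}}\ .
\end{equation*}
The right-hand side is invariant under swapping $i$ and $j$ (the summand depends on $i,j$ only through the unordered set $S \cup \{i,j\}$), so running the identical argument for $\phi_j(v,N)$ gives the same expression, and therefore $\phi_i(v,N) = \phi_j(v,N)$.

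There is no serious obstacle here; the only points requiring a little care are (i) checking that the hypothesis indeed applies to \emph{every} coalition appearing in the $\phi_i$ sum (it does, because such coalitions either omit $i$ or, for the $v_{C\cup\{i\}}$ piece, omit $j$), and (ii) verifying the combinatorial weight transforms correctly under the substitution $C = S \cup \{j\}$. I would also remark that $i \neq j$ forces $|N| \geq 2$, so the factorials are well defined. An alternative, essentially equivalent, route is to invoke the linearity of the Shapley value together with the Harsanyi dividend decomposition: the hypothesis forces $d(v,T) = 0$ for every $T$ not containing both $i$ and $j$, and since $\phi_i(v,N) = \sum_{T \ni i} d(v,T)/|T|$, only dividends of coalitions containing both $i$ and $j$ contribute, giving the same symmetric sum; but the direct computation above is shorter and self-contained.
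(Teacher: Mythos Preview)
Your proposal is correct and follows essentially the same approach as the paper's proof: both start from the defining formula~\eqref{eq:shapley}, use the hypothesis to eliminate all terms except those where the coalition (after adding the focal party) contains both $i$ and $j$, and arrive at the identical symmetric expression $\sum_{S \subseteq N \setminus\{i,j\}} \frac{(|S|+1)!\,(|N|-|S|-2)!}{|N|!}\, v_{S \cup\{i,j\}}$. The paper organizes this by first splitting the sum over $C \subseteq N\setminus\{i\}$ according to whether $j \in C$, whereas you kill $v_C$ and the $j\notin C$ part of $v_{C\cup\{i\}}$ in one pass before reindexing, but the logic is the same.
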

\begin{proof}
    Suppose $i$ and $j$ are both necessary parties. 
    We split the calculation of $\phi_i(v,N)$ into coalitions that do not contain $j$ and coalitions that contain $j$:
    \begin{align*}
        &\;\phi_i(v, N) \\
        =&\sum_{C \subseteq N \setminus\{i\}} \frac{|C| !(|N|-|C|-1) !}{|N| !}(v_{C \cup\{i\}}-v_C) \\
        =&\sum_{\mathclap{C \subseteq N \setminus\{i,j\}}} \frac{|C| !(|N|-|C|-1) !}{|N| !}\underbrace{(v_{C \cup\{i\}}-v_C)}_{A}  + \sum_{\mathclap{C \subseteq N \setminus\{i,j\}}} \frac{(|C|+1) !(|N|-|C|-2) !}{|N| !}\underbrace{(v_{C \cup\{i,j\}}-v_{C \cup \{j\}})}_{B} \\
        =& \sum_{C \subseteq N \setminus\{i,j\}} \frac{(|C|+1) !(|N|-|C|-2) !}{|N| !}\cdot v_{C \cup\{i,j\}}
    \end{align*}
    where the last equality is because $A = 0$ and $B = v_{C \cup \{i,j\}}$.
    As $i$ and $j$ are necessary parties, for any coalition $C \subseteq N \setminus \{i,j\}$, $v_C = v_{C \cup \{i\}} = v_{C \cup \{j\}} = 0$.
    Similarly, 
    \begin{align*}
    &\;\phi_j(v, N) \\
    =&\sum_{C \subseteq N \setminus\{j\}} \frac{|C| !(|N|-|C|-1) !}{|N| !}(v_{C \cup\{j\}}-v_C) \\
    =&\sum_{\mathclap{C \subseteq N \setminus\{i,j\}}} \frac{|C| !(|N|-|C|-1) !}{|N| !}(v_{C \cup\{j\}}-v_C) + \sum_{\mathclap{C \subseteq N \setminus\{i,j\}}} \frac{(|C|+1) !(|N|-|C|-2) !}{|N| !}(v_{C \cup\{i,j\}}-v_{C \cup \{i\}}) \\
    =& \sum_{C \subseteq N \setminus\{i,j\}} \frac{(|C|+1) !(|N|-|C|-2) !}{|N| !}\cdot v_{C \cup\{i,j\}} \ ,
    \end{align*}
    which is exactly the same as $\phi_i(v, N)$.
\end{proof}

It is also well known that the Shapley value satisfies the symmetry, uselessness~\citep{shapley-properties} and desirability~\citep{Maschler1968CHARACTERIZATIONOT} properties. 
For ease of reading and exposition, we list these results as lemmas below:

\begin{lemma}[Symmetry] \label{lem:shapley-symmetry}
    The Shapley value satisfies the symmetry property, i.e., for all $i,j \in N$ such that $i \ne j$, if $\forall C \subseteq N \setminus \{i,j\} \ \ v_{C \cup \{i\}} = v_{C \cup \{j\}}$, then $\phi_i(v, N) = \phi_j(v,N)$.
\end{lemma}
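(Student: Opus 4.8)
The plan is to mirror the splitting argument already carried out in the proof of Lemma~\ref{lem:shapley-necessity}. Fix $i \ne j$ satisfying the hypothesis $v_{C \cup \{i\}} = v_{C \cup \{j\}}$ for all $C \subseteq N \setminus \{i,j\}$. In the defining sum~\eqref{eq:shapley} for $\phi_i(v,N)$, every coalition $C \subseteq N \setminus \{i\}$ either excludes $j$ (so $C \subseteq N \setminus \{i,j\}$) or contains $j$ (so $C = C' \cup \{j\}$ for a unique $C' \subseteq N \setminus \{i,j\}$, with $|C| = |C'|+1$). Grouping the sum accordingly — exactly the combinatorial reindexing used in Lemma~\ref{lem:shapley-necessity} — gives
\begin{align*}
\phi_i(v, N) = \sum_{C \subseteq N \setminus \{i,j\}} \frac{|C|!\,(|N|-|C|-1)!}{|N|!}\,(v_{C \cup \{i\}} - v_C) + \sum_{C \subseteq N \setminus \{i,j\}} \frac{(|C|+1)!\,(|N|-|C|-2)!}{|N|!}\,(v_{C \cup \{i,j\}} - v_{C \cup \{j\}}),
\end{align*}
and the analogous identity for $\phi_j(v, N)$ is obtained by interchanging $i$ and $j$ in the marginal-contribution terms only (the coefficients are symmetric in $i,j$ and therefore unchanged).

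Then I would invoke the symmetry hypothesis twice: on the first sums, $v_{C \cup \{i\}} - v_C = v_{C \cup \{j\}} - v_C$, so they agree termwise; on the second sums, $v_{C \cup \{i,j\}} - v_{C \cup \{j\}} = v_{C \cup \{i,j\}} - v_{C \cup \{i\}}$, so they agree termwise as well. Since the coefficients match, the two expansions coincide and $\phi_i(v, N) = \phi_j(v, N)$.

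An alternative route I would keep in reserve is the classical permutation-equivariance argument: the hypothesis is equivalent to $v$ being invariant under the transposition $\pi$ swapping $i$ and $j$ (check the three cases — coalitions containing both, neither, or exactly one of $i,j$), and the Shapley value commutes with relabelling players, whence $\phi_i(v,N) = \phi_{\pi(i)}(\pi\!\cdot\! v, N) = \phi_j(v,N)$. This is shorter but leans on the equivariance property as a black box, so for a self-contained exposition that matches the surrounding lemmas I would present the termwise version above.

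There is no genuine obstacle here: the only mildly technical step is the combinatorial reindexing that converts $\sum_{C \subseteq N \setminus \{i\}}$ into two sums over $C \subseteq N \setminus \{i,j\}$, with the weight $\tfrac{|C|!(|N|-|C|-1)!}{|N|!}$ turning into $\tfrac{(|C|+1)!(|N|-|C|-2)!}{|N|!}$ on the $j$-containing part — and that identity is already established verbatim in the proof of Lemma~\ref{lem:shapley-necessity}, so it can simply be reused.
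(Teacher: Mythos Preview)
Your proof is correct, but note that the paper does not actually prove this lemma: it states just before Lemma~\ref{lem:shapley-symmetry} that symmetry (along with uselessness and desirability) is a well-known property of the Shapley value, cites the standard reference, and simply lists the statement ``for ease of reading and exposition'' without proof. So there is nothing to compare against --- your termwise argument, which reuses the splitting from Lemma~\ref{lem:shapley-necessity}, supplies a self-contained proof where the paper is content to cite the literature. The alternative permutation-equivariance argument you mention is indeed the more common textbook route, but your choice to mirror the necessity proof is stylistically consistent with the surrounding material.
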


\begin{lemma}[Uselessness] \label{lem:shapley-uselessness}
    The Shapley value satisfies the uselessness property, i.e., for all $i \in N$, if $ \forall C \subseteq N \setminus \{i\} \ \ v_{C \cup \{i\}} = v_C$, then $\phi_i(v, N) = 0$.
\end{lemma}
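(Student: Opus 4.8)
The plan is to derive the conclusion directly from the closed form of the Shapley value in Eq.~\eqref{eq:shapley}, in the same spirit as the proof of Lemma~\ref{lem:shapley-ir}. First I would write out
\begin{equation*}
\phi_i(v, N) = \sum_{C \subseteq N \setminus\{i\}} \frac{|C|!\,(|N|-|C|-1)!}{|N|!}\,\bigl(v_{C \cup\{i\}}-v_C\bigr).
\end{equation*}
The hypothesis is that $i$ is useless, i.e.\ $v_{C \cup\{i\}} = v_C$ for every $C \subseteq N \setminus\{i\}$. The key observation is that the index set of the sum is \emph{precisely} $\{C : C \subseteq N \setminus\{i\}\}$, so the hypothesis applies to every term: each marginal contribution $v_{C\cup\{i\}}-v_C$ vanishes. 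Since the coefficients $|C|!\,(|N|-|C|-1)!/|N|!$ are all finite, the whole sum collapses to $0$, giving $\phi_i(v,N)=0$.

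In contrast to Lemma~\ref{lem:shapley-necessity}, where one had to split the sum according to whether a coalition contains $j$, re-index, and recombine terms, there is essentially no obstacle here: the result follows by termwise vanishing and requires neither superadditivity nor non-negativity of $v$. The only subtlety I would be careful to state is the exact matching between the range of the universal quantifier in the hypothesis and the range of the summation index, which makes the substitution exhaustive rather than partial. (If one wanted a more structural argument one could instead invoke the linearity and dummy-player axioms characterizing the Shapley value, but the direct computation above is shorter and self-contained within the formula already in use in this paper.)
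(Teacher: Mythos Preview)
Your proof is correct. The paper does not actually supply a proof of this lemma; it simply states that the uselessness (dummy-player) property of the Shapley value is well known and cites \citep{shapley-properties}, listing the lemma without argument. Your direct termwise computation from Eq.~\eqref{eq:shapley} is exactly the standard proof one would expect, and your remark that the hypothesis quantifies over precisely the same set $\{C \subseteq N \setminus \{i\}\}$ as the summation index is the right point to make explicit.
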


\begin{lemma}[Desirability] \label{lem:shapley-strict-desirability}
    The Shapley value satisfies the desirability property, i.e., for all $i, j \in N$ such that $i \ne j$, if $\forall C \subseteq N \setminus \{i,j\} \ \ v_{C \cup \{i\}} \ge v_{C \cup \{j\}}$, then $\phi_i(v, N) \ge \phi_j(v,N)$.
    Further, if $\forall C \subseteq N \setminus \{i,j\} \ \ v_{C \cup \{i\}} \ge v_{C \cup \{j\}}$ and $\, \exists B \subseteq N \setminus \{i,j\} \ \ v_{B \cup \{i\}} > v_{B \cup \{j\}}$, then $\phi_i(v, N) > \phi_j(v,N)$.
\end{lemma}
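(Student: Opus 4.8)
The plan is to prove the claim by computing the difference $\phi_i(v, N) - \phi_j(v, N)$ directly and writing it as a nonnegative combination of the gaps $v_{C \cup \{i\}} - v_{C \cup \{j\}}$ over $C \subseteq N \setminus \{i, j\}$. I would start from the same coalition bookkeeping already carried out in the proof of Lemma~\ref{lem:shapley-necessity}: in the sum~\eqref{eq:shapley} defining $\phi_i(v, N)$, split the coalitions $C \subseteq N \setminus \{i\}$ into those with $j \notin C$ and those with $j \in C$, writing $C = C' \cup \{j\}$ with $C' \subseteq N \setminus \{i,j\}$ in the latter case. This yields
\begin{equation*}
\phi_i(v, N) = \sum_{C \subseteq N \setminus \{i,j\}} \frac{|C|!\,(|N|-|C|-1)!}{|N|!}\,(v_{C \cup \{i\}} - v_C) + \sum_{C \subseteq N \setminus \{i,j\}} \frac{(|C|+1)!\,(|N|-|C|-2)!}{|N|!}\,(v_{C \cup \{i,j\}} - v_{C \cup \{j\}}),
\end{equation*}
together with the analogous expression for $\phi_j(v, N)$ obtained by interchanging $i$ and $j$.

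Subtracting the two expressions, the $v_C$ terms cancel across the first sums and the $v_{C \cup \{i,j\}}$ terms cancel across the second sums, and I am left with
\begin{equation*}
\phi_i(v, N) - \phi_j(v, N) = \sum_{C \subseteq N \setminus \{i,j\}} \left( \frac{|C|!\,(|N|-|C|-1)!}{|N|!} + \frac{(|C|+1)!\,(|N|-|C|-2)!}{|N|!} \right) (v_{C \cup \{i\}} - v_{C \cup \{j\}}).
\end{equation*}
The key observation is that each bracketed coefficient is strictly positive for every $C \subseteq N \setminus \{i,j\}$ (here $0 \le |C| \le |N|-2$, so each factorial is that of a nonnegative integer and no leading factor vanishes). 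Hence, under the hypothesis $v_{C \cup \{i\}} \ge v_{C \cup \{j\}}$ for all such $C$, every summand is nonnegative and $\phi_i(v, N) \ge \phi_j(v, N)$, giving the first assertion. For the second, if additionally $v_{B \cup \{i\}} > v_{B \cup \{j\}}$ for some $B \subseteq N \setminus \{i,j\}$, the term indexed by $C = B$ is strictly positive while all others remain nonnegative, so the sum is strictly positive and $\phi_i(v, N) > \phi_j(v, N)$.

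I do not expect any real obstacle: the only points requiring care are the reindexing that produces the weight $(|C|+1)!\,(|N|-|C|-2)!/|N|!$ for coalitions containing $j$, and verifying that the displayed identities degenerate correctly in the boundary case $|N| = 2$ (where the single term $C = \emptyset$ carries coefficient $1$). As an alternative one could instead invoke the random-order formula for the Shapley value together with the involution on orderings that transposes the positions of $i$ and $j$: pairing each ordering with its image reduces $\phi_i(v,N) - \phi_j(v,N)$ to an average of terms of the form $(v_{S \cup \{i\}} - v_{S \cup \{j\}}) + (v_{(S \cup T) \cup \{i\}} - v_{(S \cup T) \cup \{j\}})$ with $S, S \cup T \subseteq N \setminus \{i,j\}$, again manifestly nonnegative under the hypothesis --- but the direct computation above is more economical and matches the style of Lemma~\ref{lem:shapley-necessity}.
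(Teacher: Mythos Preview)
Your argument is correct: the coalition-splitting mirrors the bookkeeping in Lemma~\ref{lem:shapley-necessity}, the cancellations are exactly as you describe, and the positivity of the combined weight $\frac{|C|!\,(|N|-|C|-1)!}{|N|!} + \frac{(|C|+1)!\,(|N|-|C|-2)!}{|N|!}$ for $0 \le |C| \le |N|-2$ gives both the weak and strict conclusions. The boundary case $|N|=2$ is handled as you note.

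The paper, however, does not supply its own proof of this lemma at all: it simply records the statement and cites \citet{Maschler1968CHARACTERIZATIONOT} for desirability (alongside \citet{shapley-properties} for symmetry and uselessness), treating it as a standard fact. Your proposal therefore goes beyond the paper by making the argument self-contained, and the approach you chose --- reusing the same decomposition as in the proof of Lemma~\ref{lem:shapley-necessity} --- is the natural one in context.
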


\subsection{Proof of Theorem~\ref{thm:oarc}} \label{appendix:proof-thm-oarc}
Since we are considering each time interval $\tau$ as a separate collaboration in which a new valuation function $\smash{v_{(\cdot)}^{(\tau)}}$ is defined, we first note a crucial observation that $\smash{v_{(\cdot)}^{(\tau)}}$ preserves the non-negativity and superadditivity of $v$:

\begin{lemma} \label{lem:v-tau-inherit}
    If $v: 2^N \to \bR$ is non-negative (superadditive), then $\smash{v_{(\cdot)}^{(\tau)}}: 2^{N_{\tau}} \to \bR$ as defined in Sec.~\ref{sec:weight-with-time} is non-negative (superadditive) for all $0 \le \tau \le T$ where $T = \max_{i \in N} t_i$.
\end{lemma}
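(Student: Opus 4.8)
The plan is to verify directly from the definition that $\smash{v_{(\cdot)}^{(\tau)}}$ inherits each property, exploiting the fact that $\smash{v_{(\cdot)}^{(\tau)}}$ is simply the restriction of $v$ to subsets of $N_\tau$, i.e., $\smash{v_{C}^{(\tau)}} = v_C$ for all $C \subseteq N_\tau$. Since $N_\tau \subseteq N$, every coalition that appears in a statement about $\smash{v_{(\cdot)}^{(\tau)}}$ is also a coalition in $N$, so the corresponding inequality for $v$ applies verbatim.

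First I would handle non-negativity: for any $C \subseteq N_\tau$ we have $\smash{v_{C}^{(\tau)}} = v_C \ge 0$ because $C \subseteq N_\tau \subseteq N$ and $v$ is non-negative on all subsets of $N$. Next, for superadditivity, take any $B, C \subseteq N_\tau$ with $B \cap C = \emptyset$. Then $B, C \subseteq N$ and $B \cap C = \emptyset$, so superadditivity of $v$ gives $v_{B \cup C} \ge v_B + v_C$; moreover $B \cup C \subseteq N_\tau$, so $\smash{v_{B \cup C}^{(\tau)}} = v_{B \cup C} \ge v_B + v_C = \smash{v_{B}^{(\tau)}} + \smash{v_{C}^{(\tau)}}$, which is exactly superadditivity of $\smash{v_{(\cdot)}^{(\tau)}}$. (If monotonicity were also needed, the same restriction argument works, or one can invoke the implication superadditive $+$ non-negative $\Rightarrow$ monotone as in App.~\ref{appendix:proof-cig-characteristics}.)

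Honestly, there is no real obstacle here — the lemma is essentially a tautology once one observes that a property quantified over all subsets of $N$ automatically holds when restricted to subsets of any subcollection $N_\tau$. The only thing to be slightly careful about is bookkeeping: confirming that the domain of $\smash{v_{(\cdot)}^{(\tau)}}$ is $2^{N_\tau}$ and that $N_\tau \subseteq N$ for every $0 \le \tau \le T$ (immediate from $N_\tau = \{i \in N : t_i \le \tau\}$), so that no coalition outside the scope of $v$'s hypotheses is ever invoked. I would therefore keep the proof to three or four lines, one per property, each reducing to the corresponding property of $v$ via the identity $\smash{v_{C}^{(\tau)}} = v_C$.
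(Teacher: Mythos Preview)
Your proposal is correct and matches the paper's own proof essentially line for line: both observe that $\smash{v_{(\cdot)}^{(\tau)}}$ is just $v$ restricted to $2^{N_\tau}$, then verify non-negativity and superadditivity by invoking the identity $\smash{v_{C}^{(\tau)}} = v_C$ for $C \subseteq N_\tau \subseteq N$.
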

\begin{proof}
    This is clear as we define $\smash{v_{(\cdot)}^{(\tau)}}$ to be $v$ restricted to the coalitions in $N_{\tau}$. 
    For any $0 \le \tau \le T$, we have
    \begin{equation*}
        \smash{v_{C}^{(\tau)}} = v_C \ge 0 \quad \forall C \subseteq N_{\tau} \subseteq N \ ,
    \end{equation*}
    and
    \begin{equation*}
        v_{B \cup C}^{(\tau)} = v_{B \cup C} \ge v_B  + v_C = v_{B}^{(\tau)} + \smash{v_{C}^{(\tau)}} \ ,
    \end{equation*}
    for all $B, C \subseteq N_{\tau} \subseteq N \text{ s.t. } B \cap C = \emptyset$.
\end{proof}

\squishstartappendix

\item \ref{item:non-negativity} \textbf{Non-negativity.}
This follows from \ref{item:IR} and non-negativity of $v$.

\item \ref{item:IR} \textbf{Individual Rationality.}
For all $i \in N$, 
\begin{align*}
    r_i &= \sum_{\tau=0}^T \smash{w^{(\tau)}}\phi_i^{(\tau)} \\
    &= \sum_{\tau=0}^{t_i-1} \smash{w^{(\tau)}}\phi_i^{(\tau)} + \sum_{\tau=t_i}^T \smash{w^{(\tau)}}\phi_i^{(\tau)} \\ 
    &= \sum_{\tau=0}^{t_i-1} \smash{w^{(\tau)}} v_i + \sum_{\tau=t_i}^T \smash{w^{(\tau)}} \phi_i \lp \valtaudot, \ntau \rp\\
    &\ge \sum_{\tau=0}^{t_i-1} \smash{w^{(\tau)}} v_i + \sum_{\tau=t_i}^T \smash{w^{(\tau)}} v_i \\
    &= \sum_{\tau=0}^{T} \smash{w^{(\tau)}} v_i \\
    &= v_i 
\end{align*}
where the inequality is due to Lemma~\ref{lem:shapley-ir} and Lemma~\ref{lem:v-tau-inherit}.

\item \ref{item:sym-equal-time} \textbf{Equal-Time Symmetry.}
For all $i, j \in N$ s.t. $i \ne j$, if $t_i = t_j$ and $\forall C \subseteq N \setminus \{i,j\} \ \ v_{C \cup \{i\}} = v_{C \cup \{j\}}$, then 
\begin{align*}
    r_i &= \sum_{\tau=0}^{t_i-1} \smash{w^{(\tau)}}\phi_i^{(\tau)} + \sum_{\tau=t_i}^T \smash{w^{(\tau)}}\phi_i^{(\tau)} \\ 
    &= \sum_{\tau=0}^{t_i-1} \wtau v_i + \sum_{\tau=t_i}^T \smash{w^{(\tau)}} \phi_i \lp \valtaudot, \ntau \rp \\ 
    &= \sum_{\tau=0}^{t_j-1} \wtau v_j + \sum_{\tau=t_j}^T \smash{w^{(\tau)}} \phi_j \lp \valtaudot, \ntau \rp \\ 
    &= \sum_{\tau=0}^{t_j-1} \wtau \phitau{j} + \sum_{\tau=t_j}^T \smash{w^{(\tau)}} \phitau{j} \\ 
    &= r_j
\end{align*}
where the third equality is by Lemma~\ref{lem:shapley-symmetry}.

\item \ref{item:desirability-equal-time} \textbf{Equal-Time Desirability.}
For all $i, j \in N$ s.t. $i \ne j$, if $t_i = t_j$ and the following condition holds:
\begin{align*}
     (\exists B \subseteq N \setminus \{i, j\} \ \ v_{B \cup \{i\}} > v_{B \cup \{j\}})
    \land \, (\forall C \subseteq N \setminus \{i, j\} \ \ v_{C \cup \{i\}} \geq v_{C \cup \{j\}}) \ ,
\end{align*}
then by Lemma~\ref{lem:shapley-strict-desirability},
\begin{align*}
    r_i &= \sum_{\tau=0}^{t_i-1} \wtau \phitau{i} + \sum_{\tau=t_i}^{T} \wtau \phitau{i} \\
    &= \sum_{\tau=0}^{t_i-1} \wtau v_i + \sum_{\tau=t_i}^{T} \wtau \phi_i \lp \valtaudot, \ntau \rp \\
    &> \sum_{\tau=0}^{t_j-1} \wtau v_j + \sum_{\tau=t_j}^{T} \wtau \phi_j \lp \valtaudot, \ntau \rp \\
    &= r_j \ .
\end{align*}

\item \ref{item:uselessness} \textbf{Uselessness.}
By definition $v_i = 0$ if $i$ is \emph{useless}. 
Following Lemma~\ref{lem:shapley-uselessness}, 
$\forall i \in N$, if $\forall C \subseteq N \setminus \{i\} \ \ v_{C \cup \{i\}} = v_C$,
\begin{equation*}
    r_i = \sum_{\tau=0}^{t_i-1} \wtau \underbrace{v_i}_{0} + \sum_{\tau=t_i}^{T} \wtau \underbrace{\phi_i \lp \valtaudot, \ntau \rp}_0 = 0 \ .
\end{equation*}

\item \ref{item:necessity} \textbf{Necessity.}
If both $i$ and $j$ are \emph{necessary}, then $v_i = v_j = 0$, and no coalition can generate value until both of them join the collaboration, which results in the Shapley values of $0$ for all parties. 
Also, Lemma~\ref{lem:shapley-necessity} guarantees that the Shapley values of $i$ and $j$ are always equal after both of them join.
Let $t = \max(t_i, t_j)$, then
\begin{align*}
    r_i &= \sum_{\tau=0}^{t-1} \wtau \phitau{i} + \sum_{\tau=t}^{T} \wtau \phitau{i} \\
    &= \sum_{\tau=0}^{t-1} \wtau \cdot 0 + \sum_{\tau=t}^{T} \wtau \phi_i \lp \valtaudot, \ntau \rp\\
    &= \sum_{\tau=0}^{t-1} \wtau \phitau{j} + \sum_{\tau=t}^{T} \wtau \phi_j \lp \valtaudot, \ntau \rp\\
    &= r_j \ .
\end{align*}

\item \ref{item:order-fairness} \textbf{Time-based Monotonicity.}
Suppose $\vt'$ is the new time value vector such that $t_i' < t_i$ and $t_j' = t_j \ \forall j \ne i$.
We divide the entire collaboration duration into three segments, and let $\phitau{i}$, $\phi_i^{(\tau)\prime}$ be defined based on $\vt$ and $\vt'$, respectively.
Then, 
\begin{align*}
    r_i &= \sum_{\tau=0}^{t_i'-1} \wtau \phitau{i} + \sum_{\tau=t_i'}^{t_i-1} \wtau \phitau{i} + \sum_{\tau=t_i}^{T} \wtau \phitau{i} \\
    &= \sum_{\tau=0}^{t_i'-1} \wtau \phi_i^{(\tau)\prime} + \sum_{\tau=t_i'}^{t_i-1} \wtau v_i + \sum_{\tau=t_i}^{T} \wtau \phi_i^{(\tau)\prime} \\
    &\le \sum_{\tau=0}^{t_i'-1} \wtau \phi_i^{(\tau)\prime} + \sum_{\tau=t_i'}^{t_i-1} \wtau \phi_i^{(\tau)\prime} + \sum_{\tau=t_i}^{T} \wtau \phi_i^{(\tau)\prime} \\
    &= r_i' 
\end{align*}
where the inequality is due to Lemma~\ref{lem:shapley-ir} since party $i$ has joined the collaboration based on $\vt'$ but not $\vt$ for $t_i' \le \tau < t_i$.

\item \ref{item:strict-order-fairness} \textbf{Time-based Strict Monotonicity.}
By $\bI_i$, $\, \exists C \subseteq \{j: t_j < t_i\}$ such that $v_{C \cup \{i\}} > v_{C} + v_i$.
Thus, there exists $t_i' \le \hat{\tau} < t_i$ such that $\exists \hat{C} \subseteq N_{\hat{\tau}}$ with $v_{\hat{C} \cup \{i\}} > v_{\hat{C}} + v_i$.
By a similar argument in Lemma~\ref{lem:shapley-ir}, $\phi_i^{(\hat{\tau})\prime} = \phi_i(v_{(\cdot)}^{\hat{\tau}}, N_{\hat{\tau}}) > v_i = \phi_i^{(\hat{\tau})}$. 
Also note that $\phi_i^{(\tau)\prime} \ge \phitau{i} \ \forall \tau$ from the proof of \ref{item:order-fairness}. 
Therefore,
\begin{equation*}
    r_i = w^{(\hat{\tau})} \phi_i^{(\hat{\tau})} + \sum_{\tau \ne \hat{\tau}} \wtau \phitau{i}
    < w^{(\hat{\tau})} \phi_i^{(\hat{\tau})\prime} + \sum_{\tau \ne \hat{\tau}} \wtau \phi_i^{(\tau)\prime}
    = r_i' \ .
\end{equation*}

\squishend

\subsection{Discussion on \citet{Manuel2020}} \label{appendix:mono-weight-shapley-discuss}
\citet{Manuel2020} has shown that $v_{(\cdot, \vlambda)}$~(defined in Sec.~\ref{sec:value-with-time}) inherits the superadditivity from $v$,
i.e., if $v$ is superadditive, then $v_{(\cdot, \vlambda)}$ is also superadditive.
This is useful because if $v_{(\cdot, \vt)}$~(defined in Sec.~\ref{sec:value-with-time}) inherits the desirable properties of $v$~(e.g., \ref{item:char_non-negativity}, \ref{item:char_monotonicity}, \ref{item:char_superadditivity} in Sec.~\ref{sec:valuation}), it can be shown that the preconditions of Lemma~\ref{lem:shapley-ir}--\ref{lem:shapley-strict-desirability} are fulfilled, allowing us to use these lemmas to prove Theorem~\ref{thm:oadv} in App.~\ref{appendix:proof-thm-oadv}.
Indeed, we additionally show the inheritance of non-negativity~(\ref{item:char_non-negativity}) and monotonicity~(\ref{item:char_monotonicity}), in addition to superaddivity~(\ref{item:char_superadditivity}), as outlined by the following propsition:

\begin{proposition}[\citet{Manuel2020}] \label{prop:inherit}
    Given a valuation function for the aggregated data of a coalition $v_{(\cdot)}: 2^N \to \bR$, and joining time values $\vt \in \bZ_{\geq 0}^n$, the time-aware valuation function $v_{(\cdot, \vt)}: 2^N \times \bZ_{\geq 0}^n \to \bR$ as defined in \eqref{eq:ow-valuation} satisfies the following:
    \begin{itemize}
        \item If $v_{(\cdot)}$ is non-negative, then $v_{(\cdot, \vt)}$ is non-negative.
        \item If $v_{(\cdot)}$ is monotonic, then $v_{(\cdot, \vt)}$ is monotonic.
        \item If $v_{(\cdot)}$ is superadditive, then $v_{(\cdot, \vt)}$ is superadditive.
    \end{itemize}
\end{proposition}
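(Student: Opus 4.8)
The plan is to observe that the time-aware valuation $v_{(\cdot,\vt)}$ of \eqref{eq:ow-valuation} is precisely the weighted game $v_{(\cdot,\vlambda)}$ of \citet{Manuel2020} specialized to $\vlambda=(e^{-\gamma t_i})_{i\in N}\in(0,1]^n$, and then to obtain all three inheritance claims from a single closed-form rewriting of $v_{(\cdot,\vt)}$. Let $\tau_0<\tau_1<\dots<\tau_m$ be the distinct joining times present in $\vt$ (with $\tau_0=0$ by our convention), let $N_{\tau_k}=\{i\in N:t_i\le\tau_k\}$, and, writing $a_k\triangleq e^{-\gamma\tau_k}$ for $0\le k\le m$ and $a_{m+1}\triangleq 0$, set $w_k\triangleq a_k-a_{k+1}\ge 0$, so that $\sum_{k=0}^m w_k=a_0=1$. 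The target identity is
\begin{equation*}
    v_{C,\vt}\;=\;\sum_{k=0}^{m}w_k\,v_{C\cap N_{\tau_k}}\;+\;\sum_{i\in C}\bigl(1-e^{-\gamma t_i}\bigr)v_i,\qquad\forall\,C\subseteq N,
\end{equation*}
i.e.\ $v_{(\cdot,\vt)}$ is a convex combination of the ``truncated'' games $C\mapsto v_{C\cap N_{\tau_k}}$ plus a non-negative additive correction.

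To prove this identity I would first establish a layer-cake formula for the minimum: since $\min_{i\in T}e^{-\gamma t_i}=e^{-\gamma\max_{i\in T}t_i}$ and $T\subseteq N_{\tau_k}$ holds iff $\max_{i\in T}t_i\le\tau_k$, a telescoping sum yields $\min_{i\in T}e^{-\gamma t_i}=\sum_{k=0}^{m}w_k\,\ind{T\subseteq N_{\tau_k}}$. Substituting this into \eqref{eq:ow-valuation}, exchanging the order of summation, and applying the Harsanyi/M\"obius relations $\sum_{T\subseteq B}d(v,T)=v_B$, $d(v,\{i\})=v_i$, $d(v,\emptyset)=0$ collapses the inner sum via $\sum_{T\subseteq C\cap N_{\tau_k},\,|T|\ge 2}d(v,T)=v_{C\cap N_{\tau_k}}-\sum_{i\in C\cap N_{\tau_k}}v_i$; after regrouping and using $\sum_{k:\,i\in N_{\tau_k}}w_k=e^{-\gamma t_i}$ to recombine the singleton contributions, the displayed closed form follows.

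From the closed form, all three statements are immediate because $w_k\ge 0$, $1-e^{-\gamma t_i}\ge 0$, and $C\mapsto v_{C\cap N_{\tau_k}}$ inherits the relevant structure of $v$: (i) if $v$ is non-negative, every term on the right-hand side is non-negative; (ii) if $v$ is monotone, then $B\subseteq C$ implies $B\cap N_{\tau_k}\subseteq C\cap N_{\tau_k}$ (hence $v_{B\cap N_{\tau_k}}\le v_{C\cap N_{\tau_k}}$) and $v_i=v_{\{i\}}\ge v_\emptyset=0$, so each summand is non-decreasing in $C$; (iii) if $v$ is superadditive and $B\cap C=\emptyset$, then $(B\cap N_{\tau_k})\cap(C\cap N_{\tau_k})=\emptyset$, so $v_{(B\cup C)\cap N_{\tau_k}}\ge v_{B\cap N_{\tau_k}}+v_{C\cap N_{\tau_k}}$, while the additive correction term splits exactly over the disjoint $B$ and $C$; weighting by $w_k\ge 0$ and summing over $k$ gives superadditivity of $v_{(\cdot,\vt)}$.

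The main obstacle I anticipate is purely the bookkeeping in the second step: verifying the telescoping identity for $\min$, and correctly tracking the $|T|\ge 2$ restriction so that the singleton dividends $\sum_{i\in C}d(v,\{i\})$ in \eqref{eq:ow-valuation} are accounted for exactly once and reassembled into the $\sum_{i\in C}(1-e^{-\gamma t_i})v_i$ term. A fully self-contained alternative that sidesteps the closed form is induction on $|C|$ using the recursion $d(v,T)=v_T-\sum_{S\subsetneq T}d(v,S)$, but the layer-cake rewriting is shorter and exposes the three properties simultaneously; one could equally cite \citet{Manuel2020} for superadditivity and give only the short direct arguments for non-negativity and monotonicity.
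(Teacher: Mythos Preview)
Your proposal is correct and takes essentially the same approach as the paper: the paper's proof also rests on the closed-form identity expressing $v_{(\cdot,\vt)}$ as a non-negative combination of restricted games $v_{|N_{h+1}}$ plus non-negatively weighted singleton games (their Eq.~\eqref{eq:weight-v-identity}, which is exactly your displayed formula up to reindexing), and then reads off non-negativity and monotonicity from the observation that each $v_{|S}$ inherits these properties from $v$, while superadditivity is cited directly from Proposition~4.1 of \citet{Manuel2020}. The only substantive difference is that you derive the identity from scratch via the layer-cake/telescoping argument, whereas the paper quotes it from \citet{Manuel2020}; your version is therefore more self-contained but otherwise identical in spirit.
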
 

\begin{proof}
The inheritance of superadditivity follows directly from Proposition~4.1 in~\citet{Manuel2020}. 
The preservation of non-negativity and monotonicity can be shown similarly following the proof of Proposition~4.1:

Following the definitions in \citet{Manuel2020}, we define a list of increasing values $y_h,$ for $h= 0, 1, \ldots, r$ where $r$ is the number of distinct cooperation abilities in $\{\lambda_i\}_{i \in N}$. 
Specifically,  $y_0 = 0$, $y_h = \min_{i \in N} \{\lambda_i: \lambda_i > y_{h-1}\}$. 
We can interpret $(y_h)_{h=0,1,\ldots,r}$ based on the order of arrival, 
i.e., $y_h$ is arranged in increasing order of cooperative ability~(parties arriving earlier have higher cooperative ability).
We also denote the valuation function restricted to a coalition $S \subseteq N$ as $v_{|S}(T) \triangleq v(T \cap S)$ for all $T \subseteq N$.
It is shown in \citet{Manuel2020} that
\begin{equation} \label{eq:weight-v-identity}
    v_{(\cdot, \vlambda)} = \sum_{h=0}^{r-1}\left(y_{h+1}-y_{h}\right) v_{|N_{h+1}}+\sum_{i \in N}\left(1-\lambda_{i}\right) v_{|\{i\}}
\end{equation}
where $N_{h+1}=\left\{i \in N : \lambda_{i} \geq y_{h+1}\right\} ,$ for  $h=0, \ldots, r-1$.
$N_{h+1}$ represents the set of parties with cooperative ability above a certain threshold~(arriving earlier than a specific time).
By noting that if $v$ is non-negative (monotonic), then $v_{|S}$ is non-negative (monotonic) $\forall S \subseteq N$, the inheritance of non-negativity and monotonicity by $v_{(\cdot, \vt)}$ follows from \eqref{eq:weight-v-identity} and $\lambda_i = e^{-\gamma t_i} \in (0,1] \ \forall i \in N$. 
\end{proof}

\subsection{Proof of Theorem~\ref{thm:oadv}} \label{appendix:proof-thm-oadv}

\squishstartappendix

\item \ref{item:non-negativity} \textbf{Non-negativity.}
By Proposition~\ref{prop:inherit}, $v_{(\cdot, \vt)}$ is non-negative and superadditive, then by Corollary~\ref{cor:shapley-non-negative}, for all $i \in N$, 
\begin{equation*}
    r_i = \phi_i(v_{(\cdot, \vt)}, N) \ge 0 \ .
\end{equation*}

\item \ref{item:IR} \textbf{Individual Rationality.}
By Proposition~\ref{prop:inherit}, $v_{(\cdot, \vt)}$ is superadditive, then by Lemma~\ref{lem:shapley-ir}, $\forall i \in N$,
\begin{equation*}
    r_i = \phi_i(v_{(\cdot, \vt)}, N) \ge v_{(\{i\}, \vt)} = d(v, \{i\}) = v_i \ .
\end{equation*}

\item \ref{item:sym-equal-time} \textbf{Equal-Time Symmetry.}
For all $i, j \in N$ s.t.~$i \ne j$, if $t_i = t_j$ and $\forall C \subseteq N \setminus \{i,j\} \ \ v_{C \cup \{i\}} = v_{C \cup \{j\}}$, then $\lambda_i = \lambda_j$.
Also, by~\eqref{eq:weight-v-identity}, for all $C \subseteq N \setminus \{i,j\}$
\begin{align*}
    v_{(C \cup \{i\}, \vt)} 
    &= \sum_{h=0}^{r-1} \left(y_{h+1}-y_{h}\right) v_{|N_{h+1}}(C \cup \{i\}) + \sum_{k \in N} \left(1-\lambda_{k}\right) v_{|\{k\}}(C \cup \{i\}) \\
    &= \sum_{h=0}^{r-1} \left(y_{h+1}-y_{h}\right) v_{|N_{h+1}}(C \cup \{j\}) + \sum_{k \in N \setminus \{i,j\}} \left(1-\lambda_{k}\right) v_{|\{k\}}(C \cup \{i\}) \\
    &\phantom{=}+ \left(1-\lambda_{i}\right) \underbrace{v_{|\{i\}}(C \cup \{i\})}_{v_i} + \left(1-\lambda_{j}\right) \underbrace{v_{|\{j\}}(C \cup \{i\})}_{v_{\emptyset}} \\
    &= \sum_{h=0}^{r-1} \left(y_{h+1}-y_{h}\right) v_{|N_{h+1}}(C \cup \{j\}) + \sum_{k \in N \setminus \{i,j\}} \left(1-\lambda_{k}\right) v(C \cap \{k\}) \\
    &\phantom{=}+ \left(1-\lambda_{j}\right) v_j + \left(1-\lambda_{i}\right) \cdot 0 \\
    &= \sum_{h=0}^{r-1} \left(y_{h+1}-y_{h}\right) v_{|N_{h+1}}(C \cup \{j\}) + \sum_{k \in N \setminus \{i,j\}} \left(1-\lambda_{k}\right) v_{|\{k\}}(C \cup \{j\}) \\
    &\phantom{=}+ \left(1-\lambda_{j}\right) v_{|\{j\}}(C \cup \{j\}) + \left(1-\lambda_{i}\right) v_{|\{i\}}(C \cup \{j\}) \\
    &= \sum_{h=0}^{r-1} \left(y_{h+1}-y_{h}\right) v_{|N_{h+1}}(C \cup \{j\}) + \sum_{k \in N} \left(1-\lambda_{k}\right) v_{|\{k\}}(C \cup \{j\}) \\
    &= v_{(C \cup \{j\}, \vt)} \ .
\end{align*}
Hence, we can use Lemma~\ref{lem:shapley-symmetry} as the preconditions are met,
\begin{equation*}
    r_i = \phi_i(v_{(\cdot, \vt)}, N) = \phi_j(v_{(\cdot, \vt)}, N) = r_j \ .
\end{equation*}

\item \ref{item:desirability-equal-time} \textbf{Equal-Time Desirability.}
First note that if $\forall C \subseteq N \setminus \{i, j\} \ \ v_{C \cup \{i\}} \geq v_{C \cup \{j\}}$, then we have $v_{|S}(C \cup \{i\}) \ge v_{|S}(C \cup \{j\}) \ \forall S \subseteq N$. 
By \eqref{eq:weight-v-identity}, since $t_i = t_j$ implies $\lambda_i = \lambda_j$, we have $v_{C \cup \{i\}, \vt} \ge v_{C \cup \{j\}, \vt}$.

Following the notations in App.~\ref{appendix:mono-weight-shapley-discuss}, $N_1 = N$ since $\lambda_i = \exp(-\gamma t_i) > 0 = y_0 \ \forall i \in N$. 
Hence, by \eqref{eq:weight-v-identity}, if $t_i = t_j$ and $\exists B \subseteq N \setminus \{i, j\} \  v_{B \cup \{i\}} > v_{B \cup \{j\}}$, then
\begin{align*}
    v_{(B \cup \{i\}, \vt)} 
    &= \sum_{h=0}^{r-1}\left(y_{h+1}-y_{h}\right) v_{|N_{h+1}}(B \cup \{i\}) + \sum_{k \in N}\left(1-\lambda_{k}\right) v_{|\{k\}}(B \cup \{i\}) \\
    &\ge \sum_{h=1}^{r-1}\left(y_{h+1}-y_{h}\right) v_{|N_{h+1}}(B \cup \{i\}) + \sum_{k \in N}\left(1-\lambda_{k}\right) v_{|\{k\}}(B \cup \{j\}) \\
    &\phantom{=} + \left(y_{1}-y_{0}\right) \underbrace{v_{|N_{1}}(B \cup \{i\})}_{v_{B \cup \{i\}}} \\
    &> \sum_{h=1}^{r-1}\left(y_{h+1}-y_{h}\right) v_{|N_{h+1}}(B \cup \{j\}) + \sum_{k \in N}\left(1-\lambda_{k}\right) v_{|\{k\}}(B \cup \{j\}) \\
    &\phantom{=} + \left(y_{1}-y_{0}\right) \underbrace{v_{|N_{1}}(B \cup \{j\})}_{v_{B \cup \{j\}}} \\
    &= v_{(B \cup \{j\}, \vt)} \ .
\end{align*}
Thus, we have shown that the conditions in Lemma~\ref{lem:shapley-strict-desirability} also hold for $v_{(\cdot, \vt)}$, so
\[r_i = \phi_i(v_{(\cdot, \vt)}, N) > \phi_j(v_{(\cdot, \vt)}, N) = r_j \ .\]

\item \ref{item:uselessness} \textbf{Uselessness.}
First note that if $\forall C \subseteq N \setminus \{i\} \ \ v_{C \cup \{i\}} = v_C$, then $v_{|S}(C \cup \{i\}) = v_{|S}(C) \ \forall S \subseteq N$, which implies $v_{C \cup \{i\}, \vt} = v_{C, \vt}$ by \eqref{eq:weight-v-identity}.
Hence, by Lemma~\ref{lem:shapley-uselessness}, for all $i \in N$, if $\forall C \subseteq N \setminus \{i\} \ \ v_{C \cup \{i\}} = v_C$,
\[r_i = \phi_i(v_{(\cdot, \vt)}, N) = 0 \ .\]

\item \ref{item:necessity} \textbf{Necessity.}
By Proposition~5.1 in~\citet{Manuel2020}, $\forall C \subseteq N$ if $\{i, j\} \nsubseteq C \implies v_C = 0$, then
\[r_i = \phi_i(v_{(\cdot, \vt)}, N) = \phi_j(v_{(\cdot, \vt)}, N) = r_j \ .\]

\item \ref{item:order-fairness} \textbf{Time-based Monotonicity.}
If $(t_i' < t_i) \land (\forall j \in N \setminus \{i\} \ \ t_j' = t_j)$, then by Proposition~4.2 in~\citet{Manuel2020} and the superadditivity of $v$,
\[r_i' = \phi_i(v_{(\cdot, \vt')}, N) \ge \phi_i(v_{(\cdot, \vt)}, N) = r_i \ .\]

\item \ref{item:strict-order-fairness} \textbf{Time-based Strict Monotonicity.}
The proof of time-based strict monotonicity follows closely from the proof of Proposition~4.2 in \citet{Manuel2020} except that we show under $\bI_i$, the equality can be removed to achieve strict monotonicity.
We only outline the significant changes and refer the readers to \citet{Manuel2020} for more details.

\underline{\textbf{Notations.}} 
We follow the notations defined in App.~\ref{appendix:mono-weight-shapley-discuss}. 
Additionally, we define the zero-normalized valuation function $v_o$ as $ v_o(S) \triangleq v(S) - \sum_{k \in S} v(k) \; \forall S \subseteq N$.
Also recall that in time-aware CML setting, $\lambda_i = \exp(-\gamma t_i)$.

\underline{\textbf{Proof Sketch of \citet{Manuel2020}.}}
\citet{Manuel2020} showed that if $t_i' < t_i$ ($\lambda_i' > \lambda_i$) and $t_j' = t_j \ \forall j \ne i$ ($\lambda_j' = \lambda_j \ \forall j \ne i$), then the difference $\phi_i(v_{(\cdot, \vt')}, N) - \phi_i(v_{(\cdot, \vt)}, N)$ can be expressed as linear combinations of the Shapley values, where the Shapley values are calculated with different valuation functions under separate conditions.
There are a total of six possibilities.\footnote{
To be more precise, there is one more possibility whose proof is omitted in \citet{Manuel2020}, but this possibility is covered in \ref{item:order-fairness}.
The readers can refer to the footnote of \ref{item:order-fairness} in Sec.~\ref{sec:incentives} for details.}
Specifically:

\begin{itemize}
    \item Under Condition~3,
    \[ \phi_i(v_{(\cdot, \vt')}, N) - \phi_i(v_{(\cdot, \vt)}, N) = \eta_3 \phi_i(\valo{C_3}, N) + \eta_3' \phi_i(\valo{C_3'}, N) \ ;\]

    \item Under Condition~6,
    \[ \phi_i(v_{(\cdot, \vt')}, N) - \phi_i(v_{(\cdot, \vt)}, N) = \eta_6 \phi_i(\valo{C_6}, N) + \eta_6' \phi_i(\valo{C_6'}, N) \ ;\]
    
    \item Under Condition~$\alpha$ for $\alpha \in \{1, 2, 4, 5\}$,
    \[ \phi_i(v_{(\cdot, \vt')}, N) - \phi_i(v_{(\cdot, \vt)}, N) = \eta_\alpha \phi_i(\valo{C_\alpha}, N) \ .\]
\end{itemize}

Here, $\eta_\alpha, \eta_3', \eta_6' > 0$ are positive coefficients and $C_\alpha, C_3', C_6' \subseteq N$ are coalitions for all $\alpha \in \{1, 2, 3, 4, 5, 6\}$.
We refer the readers to \citet{Manuel2020} for actual values of $\eta_\alpha, C_\alpha$, as well as the specifics about the conditions.
The monotonicity then follows from the non-negativity of the Shapley values.

\underline{\textbf{Strict Monotonicity.}}
To enforce strict inequality, we need to show 
\begin{equation} \label{ineq:single}
    \phi_i(\valo{C_\alpha}, N) > 0 \ \forall \alpha \in \{1, 2, 4, 5\}
\end{equation}
and
\begin{equation} \label{ineq:double}
    \phi_i(\valo{C_\alpha}, N) > 0 \lor \phi_i(\valo{C_\alpha'}, N) > 0 \ \forall \alpha \in \{3, 6\} \ .
\end{equation}

Let $T = \{i: t_j < t_i\}$ and $ T' = T \cup \{i\}$.
It is easy to check that $T' \subseteq C_\alpha \ \forall \alpha \in \{1, 2, 4, 5\}$ and $T' \subseteq C_\alpha \lor T' \subseteq C_\alpha' \ \forall \alpha \in \{3, 6\}$.
This relieves our burden of checking every single possible condition, as we can restrict ourselves to only consider $S \subseteq T' \setminus \{i\}$.
In fact, To show both \eqref{ineq:single} and \eqref{ineq:double} hold~(positive Shapley value), it is sufficient to show $v_{o|T'}(S \cup \{i\}) - v_{o|T'}(S) > 0$ for some $S \subseteq T' \setminus \{i\}$, i.e.,
\begin{equation} \label{eq:proof-strict}
    \exists S \subseteq T' \setminus \{i\} \ \ v_o((S \cup \{i\}) \cap T') - v_o(S \cap T') > 0 \ .
\end{equation}

Since $(S \cup \{i\}) \cap T' = (S \cap T') \cup (\{i\} \cap T') = (S \cap T') \cup \{i\}$, by the inherited superadditivity of $v_o$, as long as $\exists S \subseteq T' \setminus \{i\}$ such that $v_o((S \cap T') \cup \{i\}) > v_o(S \cap T') + v_o(\{i\})$, \eqref{eq:proof-strict} is satisfied.

By the definition of zero-normalization, $v_o(\{i\}) = 0$, so we need $v_o((S \cap T') \cup \{i\}) > v_o(S \cap T')$.
Again, by the definition of $v_o$, we require
\begin{equation} \label{eq:proof-strict-2}
    v((S \cap T') \cup \{i\}) - \sum_{k\in (S\cap T') \cup \{i\}} v(\{k\}) > v(S \cap T') - \sum_{k \in S\cap T'} v(\{k\}) \ .
\end{equation}

Since $i \notin S$, \eqref{eq:proof-strict-2} is equivalent to $v((S \cap T') \cup \{i\}) -  v(\{i\}) > v(S \cap T')$. 
Let $B \subseteq T = T' \setminus \{i\}$ be the coalition that satisfies $\bI_i$. 
Then, if we take $S = B$, we have $S \cap T' = B$ because $S \subseteq T'$.
Hence, 
$v((S \cap T') \cup \{i\}) -  v(\{i\}) = v_{B \cup \{i\}} - v_i > v_B$
by $\bI_i$. 
Therefore, \eqref{eq:proof-strict-2} is also satisfied so both \eqref{ineq:single} and \eqref{ineq:double} are true and
\[ r_i' - r_i = \phi_i(v_{(\cdot, \vt')}, N) - \phi_i(v_{(\cdot, \vt)}, N) > 0 \implies r_i' > r_i \ . \]

\squishend

\subsection{Efficient Computation of  Equation \eqref{eq:ow-valuation}} \label{appendix:efficient-oadv}
We can skip computing the Harsanyi dividend~\citep{Harsanyi1958ABM} by using Eq.~\ref{eq:weight-v-identity} in App.~\ref{appendix:mono-weight-shapley-discuss}.
In the following, we explain how to implement this efficient calculation in practice and provide insight into this approach.

We obtain $\acute{\ve}$ by sorting the weights $(e^{-\gamma t_j})_{j \in C}$ in descending order and appending a $0$.
We obtain $\acute{C}$ by sorting the list of parties in coalition $C$ in ascending order of their joining time values. 
Then, 
\begin{equation}\label{eq:eff-manuel}
v_{C,\vt} = \sum_{j=1}^{|C|} v_{\acute{C}_{[:j]}} [\acute{\ve}_{[j]} - \acute{\ve}_{[j+1]}] +  \sum_{j=1}^{|C|} 
[1-\acute{\ve}_{[j]}] v_{\acute{C}_{[j]}}
\end{equation}
only involves summing $|C|$ terms.\footnote{
$[j]$ and $[:j]$ denote indexing the $j$-th element and up to and inclusive of the $j$-th element.}

We observe that for the $k$th ranked party $i = \acute{C}_{[k]}$, the dividend $d(v, \{i\})$ contributes to any $v_C$ where $i \in C$. Thus, the dividend has a weight of 
$\sum_{j=k}^{|C|}  [\acute{\ve}_{[j]} - \acute{\ve}_{[j+1]}] + [1-\acute{\ve}_{[k]}] = 1$ in Eq.~\eqref{eq:eff-manuel} which is the same as in Eq.~\eqref{eq:ow-valuation}.

In addition, the dividend $d(v, T)$ where $ |T| \geq 2$ contributes to $v_C$ only if $T \subseteq C$. Let $k$ be the rank of the latest party in $T$. The dividend would have a weight of $\sum_{j=k}^{|C|}  [\acute{\ve}_{[j]} - \acute{\ve}_{[j+1]}] = \acute{\ve}_{[k]} = \min_{j \in T} \ve_{[j]}$ in Eq.~\eqref{eq:eff-manuel} which is the same as in Eq.~\eqref{eq:ow-valuation}.

\section{Reward Realization Methods} \label{appendix:reward-realization}
\paragraph{Likelihood Tempering.}
One way to realize $(r_i^*)_{i \in N}$ \emph{exactly} when using conditional IG~\eqref{eq:cig} for data valuation is the \emph{likelihood tempering}~\citep{sim2023incentives} method.
The mediator assigns party $i$ a model reward (posterior) $p_i^*$ that is updated by its likelihood $p(D_i|\vtheta)$ and the tempered likelihood of others' data $\propto p(D_{N \setminus \{i\}}|\vtheta)^{\kappa_i}$ where the tempering factor ${\kappa_i} \in [0,1]$.
Note that when ${\kappa_i} = 0$ and $1$, party $i$'s model is trained on $D_i$ and $D_N$, respectively. 
We can view $D_{N \setminus \{i\}}$ as partitioned into two random variables $R_i$ and $R_{-i}$ with likelihood proportional to $p(D_{N \setminus \{i\}}|\vtheta)^{\kappa_i}$ and $p(D_{N \setminus \{i\}}|\vtheta)^{1-{\kappa_i}}$, respectively. 
Note that as $D_{N \setminus \{i\}}$ follows a Gaussian distribution, $R_i$ and $R_{-i}$ also follow Gaussian distributions (but have larger variances).
The mediator then uses any root-finding algorithm to exactly solve for ${\kappa_i}$ such that $I(\vtheta; D_i \cup R_i | R_{-i}) = r_i^*$.

\paragraph{Subset Selection.}
The mediator can generate model rewards of different values for each party $i \in N$ by training on different subsets of $D_N$.
Since $r_i^*$ exceeds $v_i$ (i.e., \ref{item:IR} is satisfied), party $i$ receives a model reward trained on $D_i^r \triangleq D_i \cup R_i$ where $R_i \subseteq D_{N \setminus \{i\}}$ is selected such that $v(D_i^r) = r_i^*$. 
However, as it is intractable to enumerate an exponential number of subsets to find $D_i^r$ whose value $v(D_i^r)$ is the closest to $r_i^*$, we resort to an approximation method. 
For example, when conditional IG~\eqref{eq:cig} is used as the valuation function, we shuffle $D_{N \setminus \{i\}}$ and incrementally add points to $R_i$ only until $I(\vtheta; D_i^r | D_N \setminus D_i^r) > r_i^*$.

\section{Additional Experiment Details and Results} \label{appendix:add-exp}
All experiments were performed on a system equipped with an NVIDIA A16 GPU with 10 GB of VRAM. 
The system was configured with NVIDIA driver version 515.43.04 and CUDA version 11.7.

\subsection{Experiment Setup}

Part of our experiment setup is adapted from~\citep{Sim2020}. 
We empirically verify our results on an additional \emph{diabetes progression}~(DiaP) dataset~\citep{AS04_efron2004least} with conditional IG as the valuation function.
We also demonstrate our results on the CIFAR-100 dataset~\citep{krizhevsky2009learning} with $n=10$ parties, and the dual of validation accuracy is used as the valuation function.
We use the Gaussian Process~(GP) regression model for all datasets except MNIST and CIFAR-100, for which we train neural networks~(NNs). 
For all GP models, we use \emph{automatic relevance determination} such that each input feature has a different lengthscale parameter.
For the Friedman, CaliH and DiaP datasets, we use an $80$--$20$ train-test split to obtain $D_{\text{train}}$ and $D_{\text{test}}$, all parties' data are randomly sampled without replacement from $D_{\text{train}}$. 

\begin{description}

\item[Synthetic Friedman  Dataset ($n_1=n_2>n_3$)] 
We generate data based on the Friedman function:
\[
  y = 10\sin(\pi \vx_{[d,0]} \vx_{[d,1]}) + 20 (\vx_{[d,2]} - 0.5)^2  + 10\vx_{[d,3]} +5\vx_{[d,4]} + 0\vx_{[d,5]} + \mathcal{N}(0, 1)
\]
where $d$ is the index of data point $(\vx,y)$ and 
its $6$ input features (i.e., $\vx\triangleq(\vx_{[d,a]})_{a=0,\ldots,5}$) are independent and uniformly distributed over the input domain $[0, 1]$.
The last input feature $\vx_{[d,5]}$ is an independent variable which does not affect the output $y$.

We standardize the values of output $y$ and train a GP model with a squared exponential kernel. 
We consider a test set with $200$ points and parties $1$, $2$ and $3$ having $n_1=300$, $n_2=300$ and $n_3=200$ training points, respectively.

\item[CaliH Dataset ($n_1>n_2>n_3$)] 
We standardize each input feature $\mX$ and the output vector $\vy$ to have a variance of $1$ and train a GP model with a squared exponential kernel. 

We consider a test set with $4128$ points and parties $1$, $2$ and $3$ having $n_1=600$, $n_2=400$ and $n_3=200$ training points, respectively.

\item[DiaP Dataset ($n_1=n_2<n_3$)] 
We use the diabetes progression~(DiaP) dataset~\citep{AS04_efron2004least} with scaled features from sklearn.
We remove the gender feature and standardize the output vector $\vy$ to have a variance of $1$.

We train a GP model with a composite kernel comprising the squared exponential kernel and the exponential kernel.

We use an $80$-$20$ train-test split to obtain a test set with $88$ points and parties $1$, $2$ and $3$ having $n_1=75$, $n_2=75$ and $n_3=125$ training points, respectively.

\item[MNIST Dataset]
Since NNs are highly effective on the MNIST dataset, we first create a subsampled version by randomly selecting $20000$ data points from the original training set, which serves as the aggregated data $D_{\text{train}}$ for all parties.
$D_{\text{train}}$ is then distributed among the three parties based on data point labels. 
Specifically, the $10$ classes are randomly divided into three groups of size $3$, $3$ and $4$.
Party $i$ receives all the data points in $D_{\text{train}}$ with labels in the $i$th group. 

We train a NN with one hidden layer of 16 neurons using ReLU activation functions and stochastic gradient descent. 
The model with the highest validation accuracy over 10 training epochs is selected.

\item[CIFAR-100 Dataset]
We use all 50,000 training images as the aggregated dataset for all parties. 
Prior to data assignment, we preprocess the images using a pretrained ResNet50 backbone to extract embedding features. 
Each of the 10 parties is then assigned all the transformed data from 10 randomly selected classes. 
There is no data overlap between parties, and collectively they cover the entire set of 50,000 training images.

We train a NN with two hidden layers containing 1024 and 512 neurons each.
We use ReLU as the activation function, and the NN is trained using dropout and the Adam optimizer.
The model with the highest validation accuracy over 20 training epochs is selected.
    
\end{description}

\subsection{Metrics}

\begin{description}

\item[Information Gain] 
Let $\mX$ be the input matrix and $\vy$ is the output vector. 
In a GP model with function $f$, the latent output vector is $\vf \triangleq f(\mX)$.
We assume that $\vy$ is generated from $\vf$ by adding independent Gaussian noise with noise variance $\sigma^2$.
The information gain of the GP from evaluating at $\mX$ is given by
\begin{equation}
  \mathbb{I}(f; \mX) 
  =
  \mathbb{I}(\vf; \mX) = 0.5 \log(|\bm{I} + K_{\mX\mX}/\sigma^2|)
\end{equation}
where 
$K_{\mX\mX}$ is a covariance matrix with components $k(\vx,\vx')$ for all $\vx, \vx'$ in $\mX$ and $k$ is a kernel function.

When calculating the IG for heteroscedastic data (i.e., data points with different noise variances), instead of dividing by $\sigma^2$, we multiply by the diagonal matrix $\mK_\text{noise}^{-1}$ such that each diagonal component of $\mK_\text{noise}$ represents the noise variance corresponding to a data point in $D$.

\item[Mean Negative Log Probability] 
The model reward $p_i^*$ is Gaussian process posterior after observing the dataset $D_i$ (and some subset of $D_{N \setminus i}$).
To compute the MNLP, we compute the model reward $p_i^*$'s  (predictive) mean $\mu_{\vx}$ and variance ${\sigma_{\vx}^2}$ at each test point $\vx$. Then, 
\begin{equation}
\text{MNLP} 
 =\displaystyle\frac{1}{|D_{\textrm{test}}|} \sum_{(\vx, y) \in D_{\textrm{test}}} {\frac{1}{2} \left( \log(2 \pi \sigma_{\vx}^2) + \frac{(\mu_{\vx} - y)^2}{\sigma_{\vx}^2} \right)} \ .
\end{equation}
A lower \text{MNLP} is better as it indicates that the model is more confident in its prediction (small first term) and is not overconfident in predictions with larger squared error (small second term).

\end{description}

\subsection{Reward Values on the DiaP Dataset} \label{ap:diap-value}
In Fig.~\ref{fig:diabetes-1-value}, we create a scenario where $v_1=36.35$ is close to $v_2=37.11$, while $v_3=61.79$ is significantly larger. 
When $t_1=t_2=t_3=0$, the Shapley values are $\phi_1=38.69, \phi_2=39.46, \phi_3=64.53$, with party $3$ receiving model with the highest possible value $v_N$. 
As party $1$ joins later~($t_1$ increases), Fig.~\ref{fig:diabetes-1-value} shows that its reward $r_1^*$ decreases as a disincentive for joining late.
While \ref{item:strict-order-fairness} only stipulates a decrease in $r_1^*$, we also see a drop in $r_2^*$ and $r_3^*$.
This is because other parties receive benefits from party $1$'s collaboration for a shorter duration, resulting in lower total rewards.
However, each party is still guaranteed individual rationality~(\ref{item:IR}), 
i.e., all parties receive rewards at least as valuable as their own data (plotted as grey horizontal lines in Fig.~\ref{fig:diabetes-1-value}).

\begin{figure}[H]
    \centering
    \begin{tabular}{@{}cc@{}}
        \includegraphics[width=0.31\columnwidth]{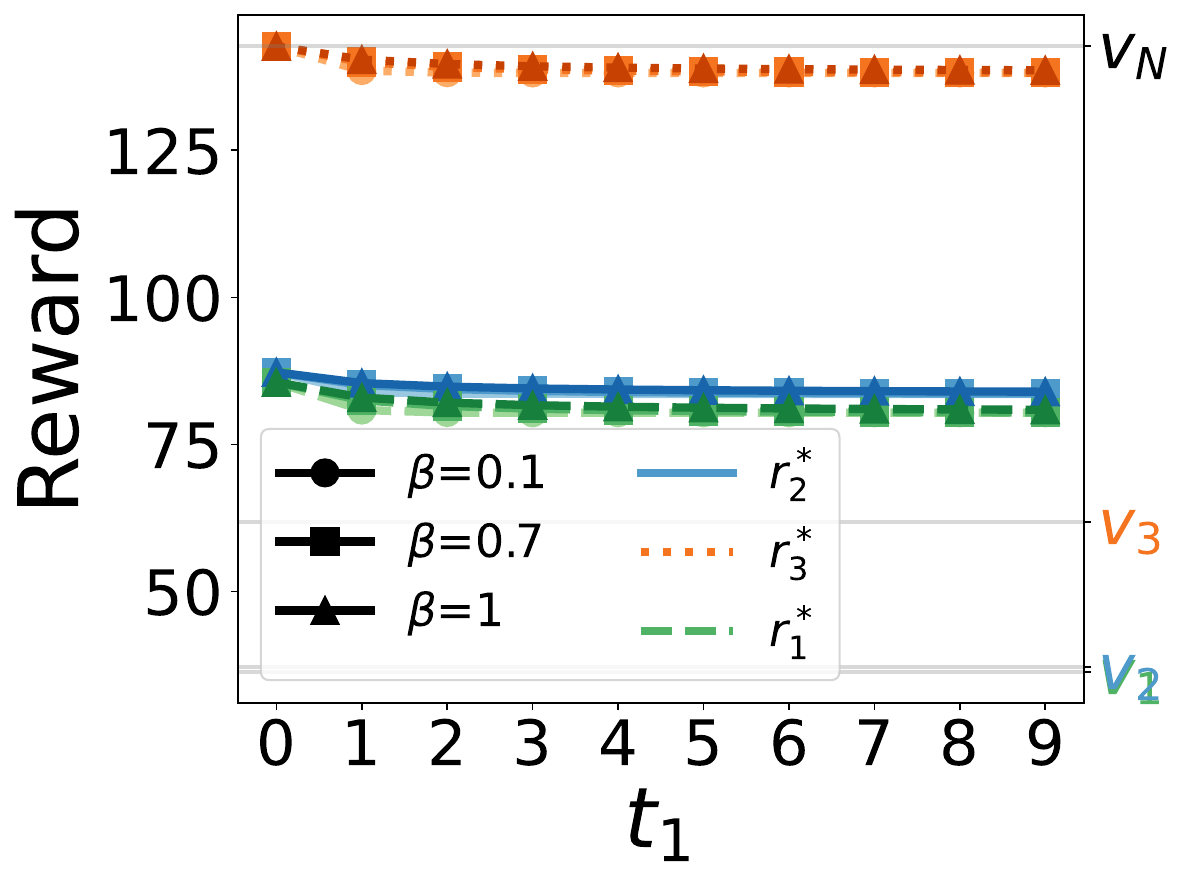} &
        \includegraphics[width=0.31\columnwidth]{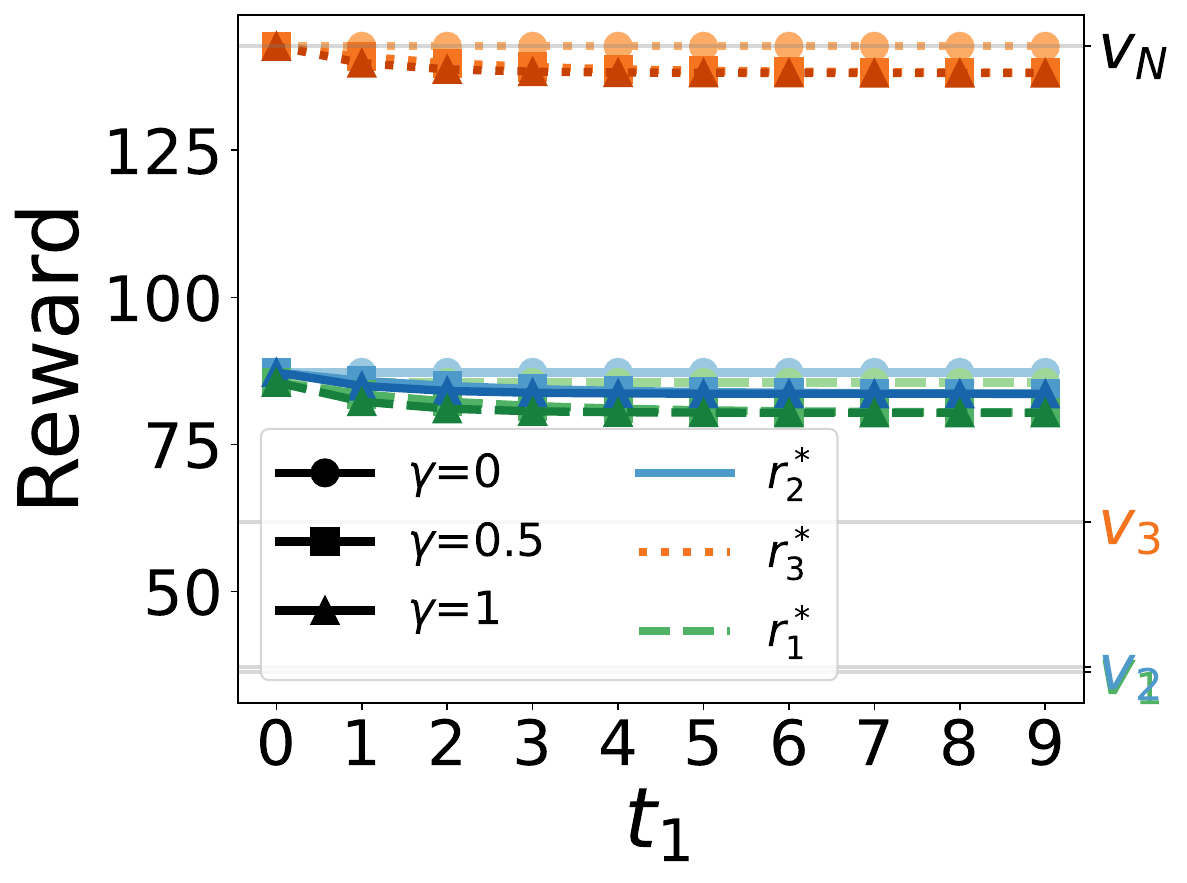} \\
        (a) $r_i^*$ with varying $\beta$ & (b) $r_i^*$ with varying $\gamma$ 
    \end{tabular}
    \caption{Graphs of reward values $r_i^*$ vs.~$t_1$ with the DiaP dataset using (a)~time-aware reward cumulation and (b)~time-aware data valuation.}
    \label{fig:diabetes-1-value}
\end{figure}

The trend in Fig.~\ref{fig:diabetes-1-value} is nearly horizontal and thus almost unnoticeable. 
This is because the changes in reward values are small compared to the largest reward value $v_N$.
To illustrate the trend more clearly, we restrict the range of reward values.
In Figs.~{\ref{fig:diap-2-value-individual}a-b}, we can clearly observe that the reward values of parties $1$ and $2$ decrease with the joining time $t_1$ when the reward values are restricted to the range $[80,90]$. 
Similarly, the reward value of party $3$ also decreases, as shown in Figs.~\ref{fig:diap-2-value-individual}c-d, when the range is restricted to $[135,145]$.

\begin{figure}[H]
    \centering
    \begin{tabular}{@{}cc@{}}
         \includegraphics[width=0.31\columnwidth]{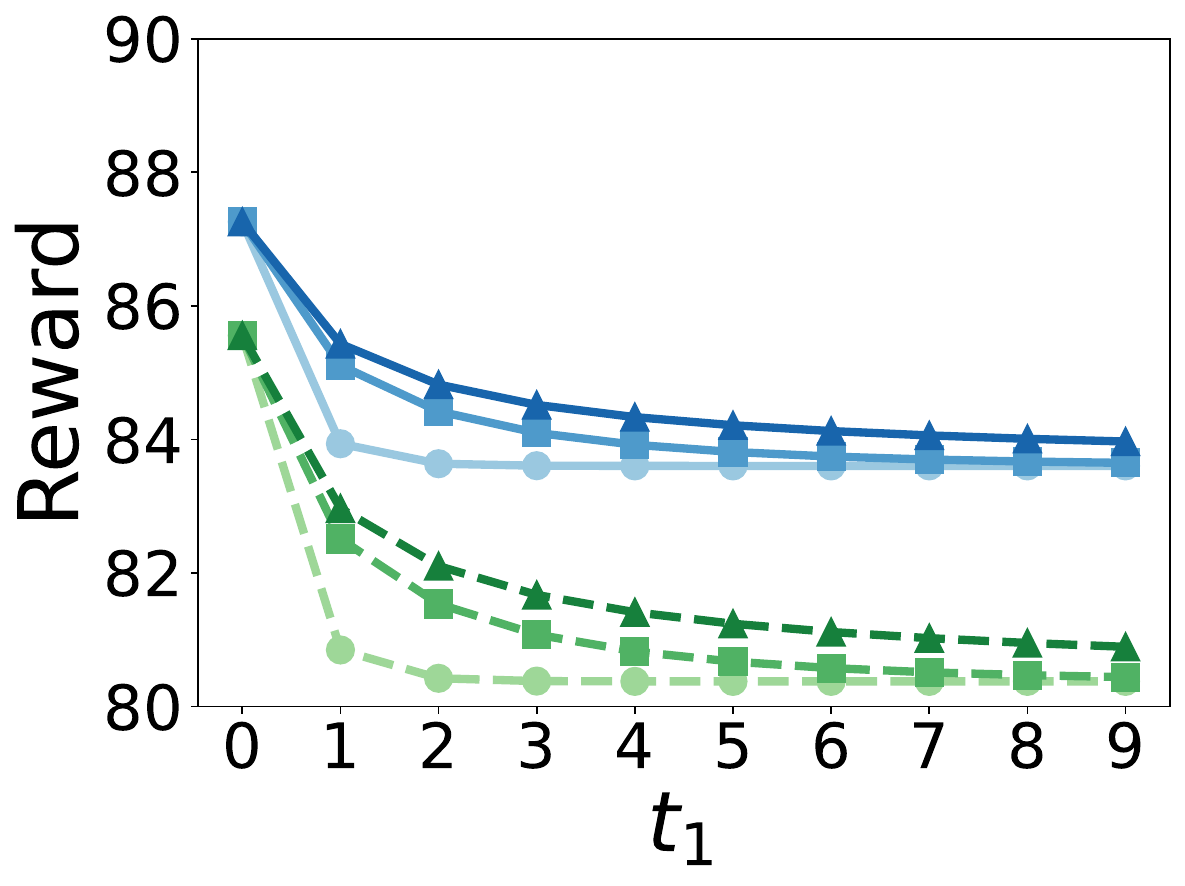} & 
         \includegraphics[width=0.31\columnwidth]{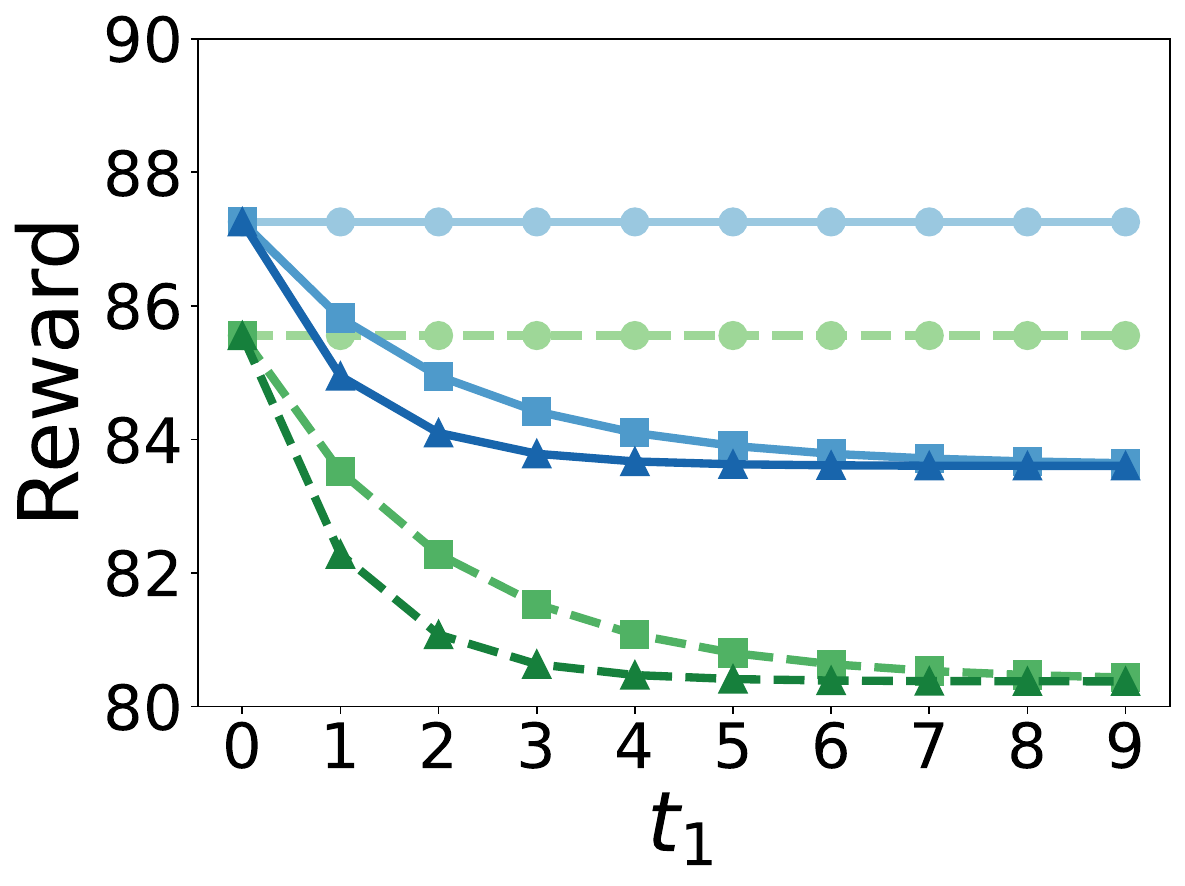} \\
         (a) $r_1^*, r_2^*$ with varying $\beta$ & (b) $r_1^*, r_2^*$ with varying $\gamma$ \\ 
         \includegraphics[width=0.31\columnwidth]{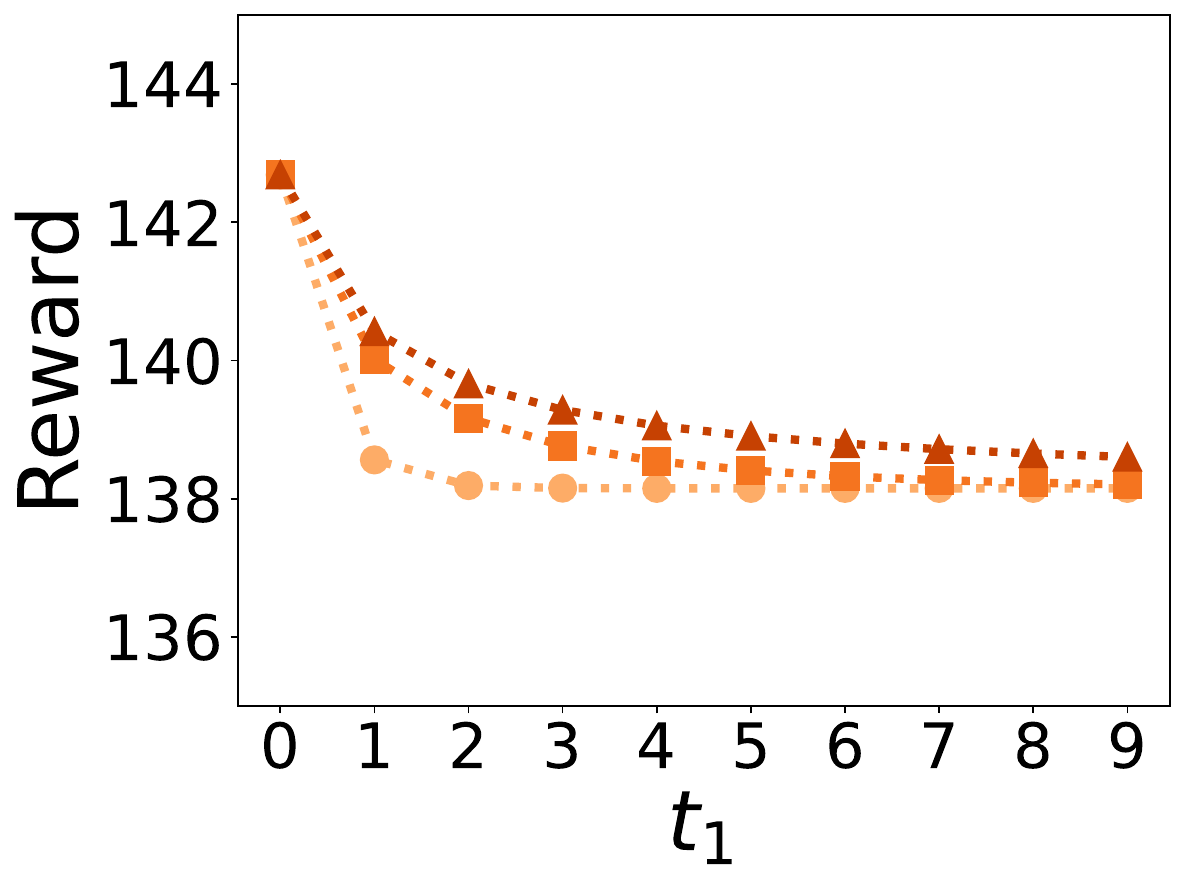} & 
         \includegraphics[width=0.31\columnwidth]{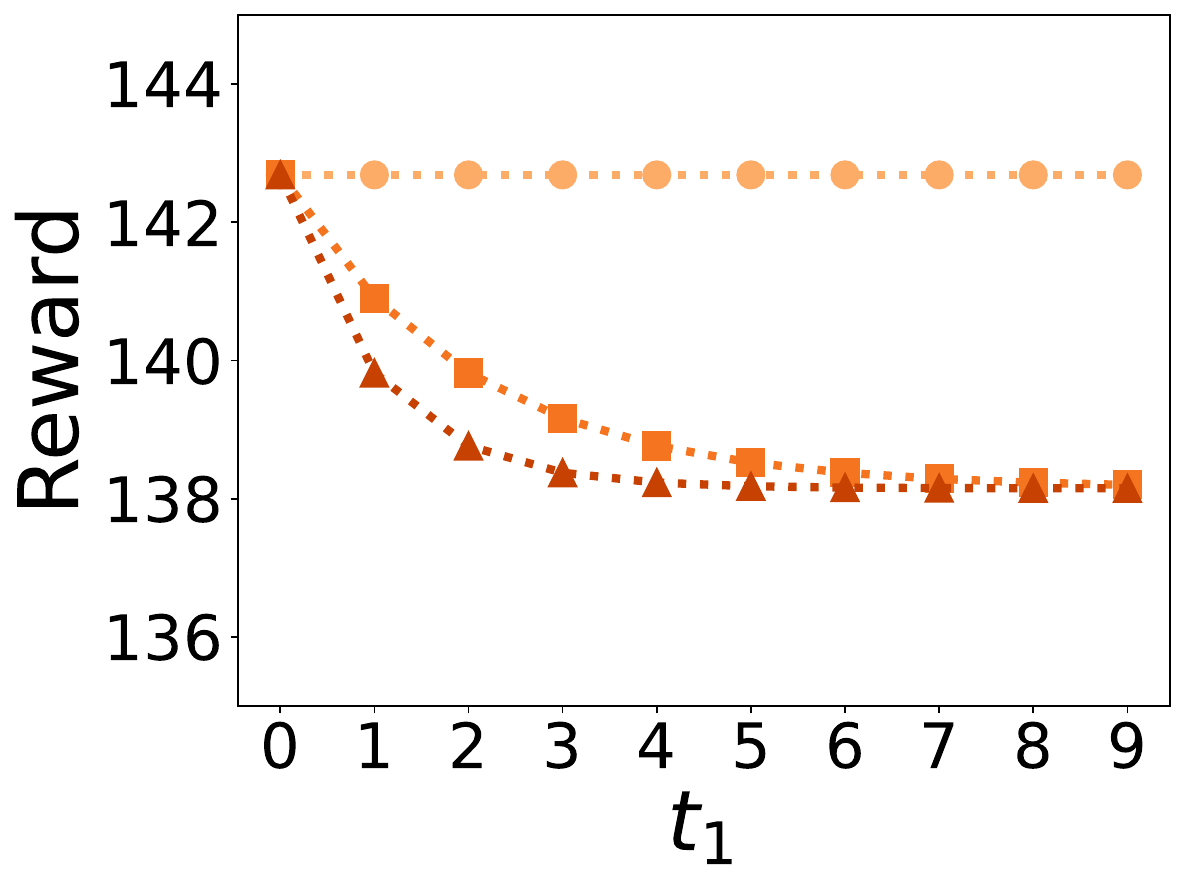} \\
         (c) $r_3^*$ with varying $\beta$ & (d) $r_3^*$ with varying $\gamma$ \\
    \end{tabular}
    \caption{Graphs of reward values $r_1^*$ (green), $r_2^*$ (blue), $r_3^*$ (orange) vs. $t_1$ using (a,c) time-aware reward cumulation and (b,d)~time-aware data valuation with the DiaP dataset.}
    \label{fig:diap-2-value-individual}
\end{figure}

\subsection{Reward Realization with Likelihood Tempering Method on the DiaP Dataset}\label{ap:subsec:temper}
We verify that when using likelihood tempering, models with higher reward values $r_i^*$ also have better predictive performance (lower MNLP). 
Thus, in Fig.~\ref{fig:ap-noise}, we observe that:
(i) Each party receives a model reward with lower MNLP than the model it trained alone. 
(ii) When a party delays its participation, it receives a model with worse MNLP. 
(iii) Parties with higher Shapley or data values receive models with better MNLP.
In addition, using a smaller $\beta$ and a larger $\gamma$ leads to a decrease in $r_1^*$ and an increase in MNLP.
We also note that the increasing trend of MNLP is not obvious in Fig.~\ref{fig:ap-noise}.
This is because the change in reward values is small, as explained in the App.~\ref{ap:diap-value}.

\begin{figure}[H]
    \centering
    \begin{tabular}{@{}cc@{}}
        \includegraphics[width=0.31\columnwidth]{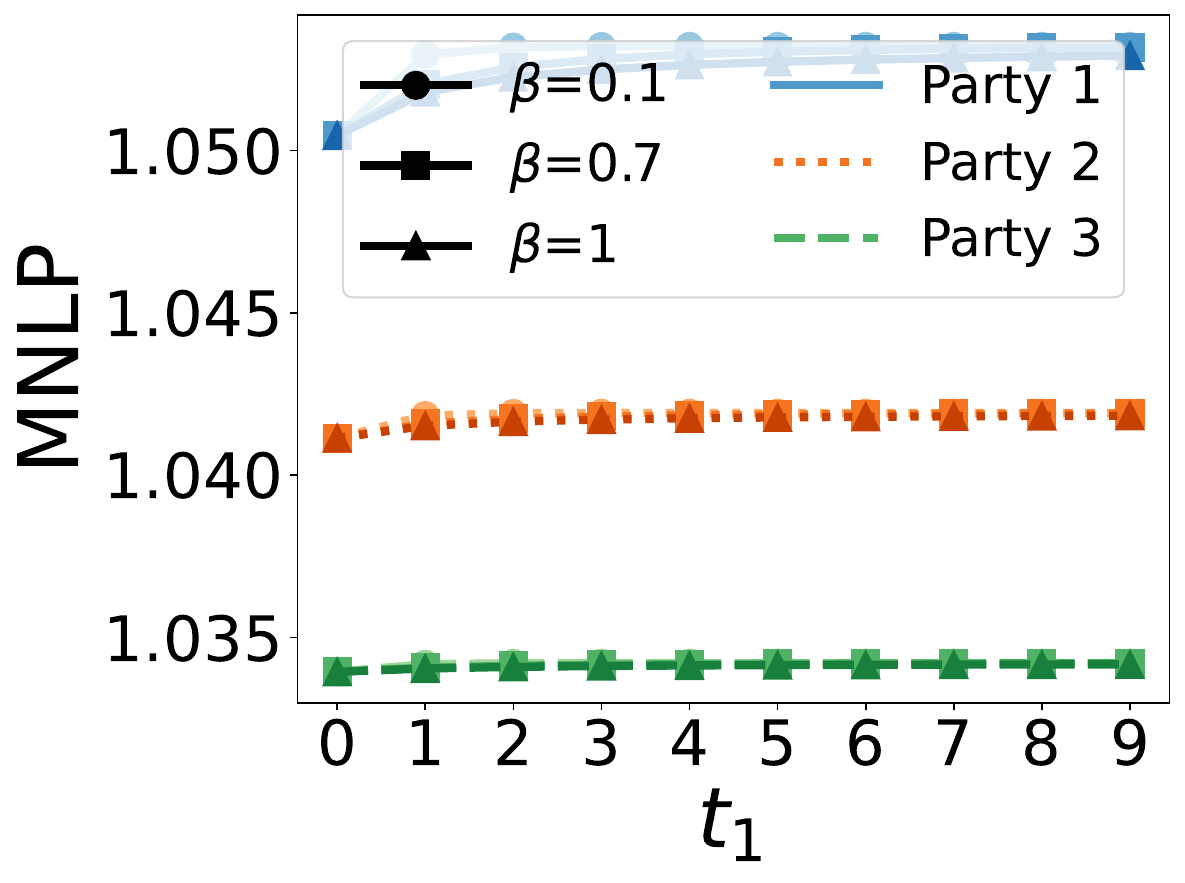} &
        \includegraphics[width=0.31\columnwidth]{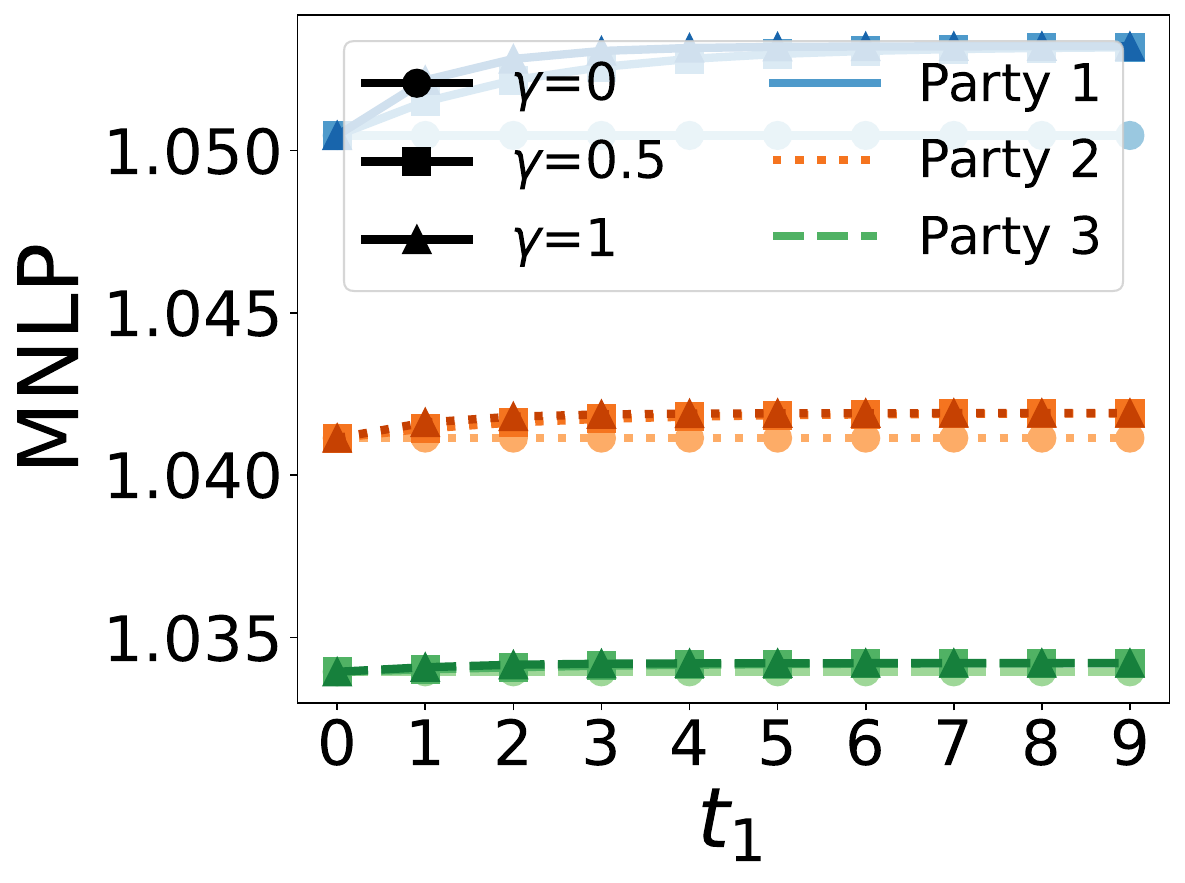} \\
        (a) MNLP with varying $\beta$ & (b) MNLP with varying $\gamma$ \\
    \end{tabular}
    \caption{Graphs of MNLP vs. $t_1$ using (a) time-aware reward cumulation and (b)~time-aware data valuation using likelihood tempering for reward realization on the DiaP dataset.}
    \label{fig:ap-noise}
\end{figure}

\subsection{Reward Realization with Subset Selection Method on the DiaP Dataset}
When using subset selection, we similarly observe that models with higher reward values $r_i$ have better or similar predictive performance.
Based on Fig.~\ref{fig:ap-subset}, we conclude that:
(i) Each party receives a model reward with lower MNLP than the model it trained alone. 
(ii) When a party delays its participation, it does not receive a model with better MNLP. 
(iii) Parties with higher Shapley or data values receive models with better MNLP.

\begin{figure}[H]
    \centering
    \begin{tabular}{@{}cc@{}}
        \includegraphics[width=0.31\columnwidth]{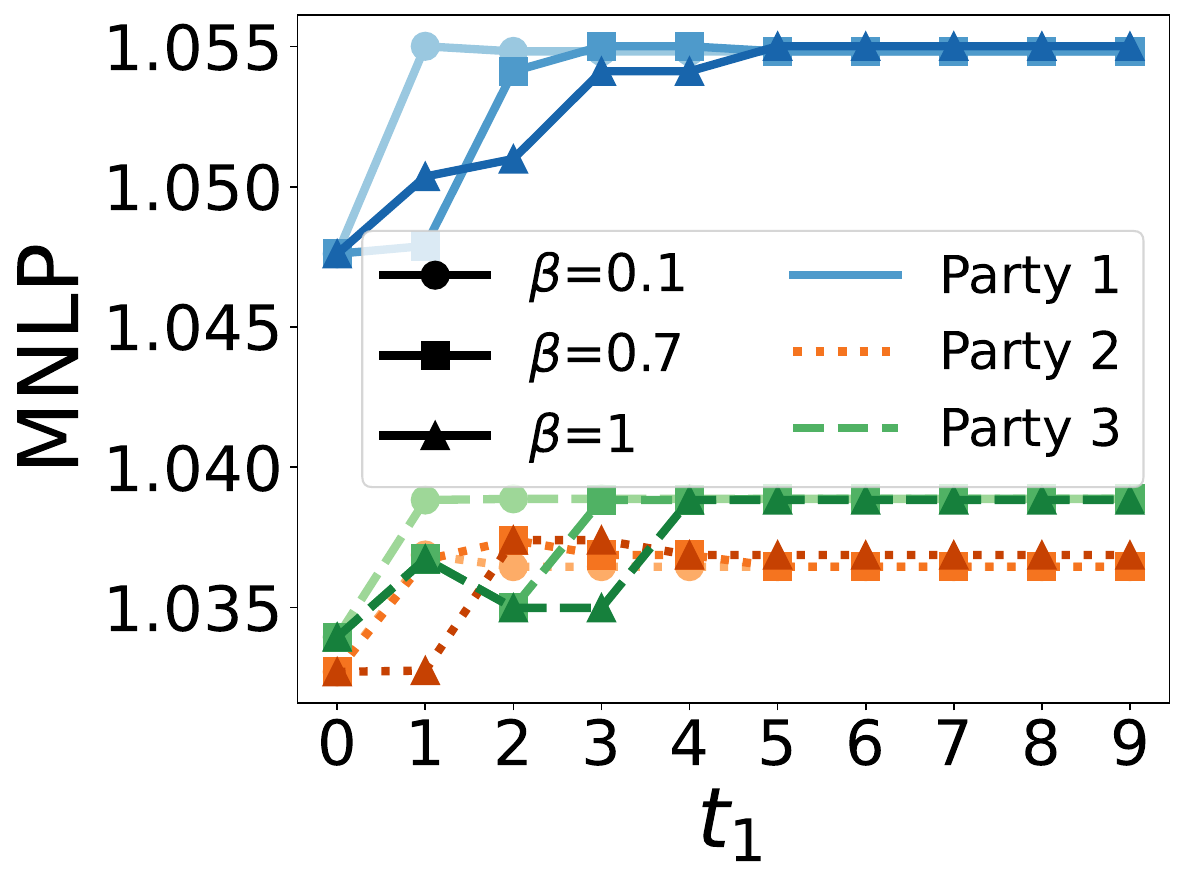} &
        \includegraphics[width=0.31\columnwidth]{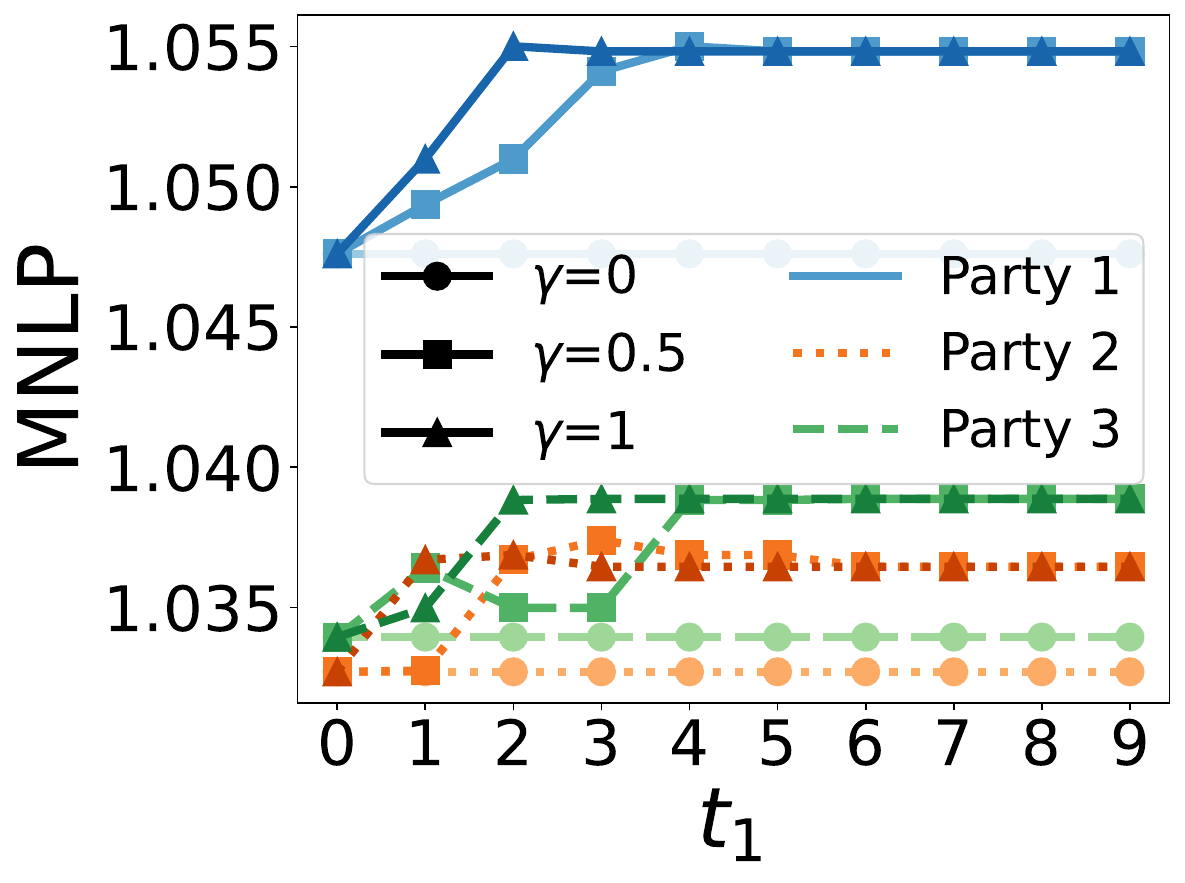} \\
        (a) MNLP with varying $\beta$ & (b) MNLP with varying $\gamma$ \\
    \end{tabular}
    \caption{Graphs of MNLP vs. $t_1$ using (a)~time-aware reward cumulation and (b)~time-aware data valuation using subset selection for reward realization on the DiaP dataset.}
    \label{fig:ap-subset}
\end{figure}

In Sec.~\ref{ap:subsec:temper}, we observe that the MNLP increases smoothly as $t_1$ increases. However, for subset selection, the MNLP exhibits less smooth behavior. This is because the reward values can only be realized approximately, and the process is sensitive to the quality of the data points included in the subset. For example, the MNLP would increase by a larger extent with the addition of an outlier or the removal of all data from a region of the input space.\footnote{In contrast, likelihood tempering always use all data and only vary their impact by varying the tempering factor and variance.}

\subsection{Reward Values with 10 Parties on the CIFAR-100 Dataset}
\label{app:cifar}
As in Sec.~\ref{sec:experiments}, we investigate how each party's reward changes with the joining time of Party 1. 
We set $\beta = \gamma = 1$ for both methods.
Fig.~\ref{fig:cifar} shows that Party 1's reward decreases with later joining times under both methods, consistent with our theoretical guarantees and serving as a disincentive for joining late.
More importantly, Fig.~\ref{fig:cifar} demonstrates that our approaches remain consistent when an efficient approximation method is adopted, even as the number of parties increases and a larger real-world dataset is used.

\begin{figure}[H]
    \centering
    \includegraphics[width=\linewidth]{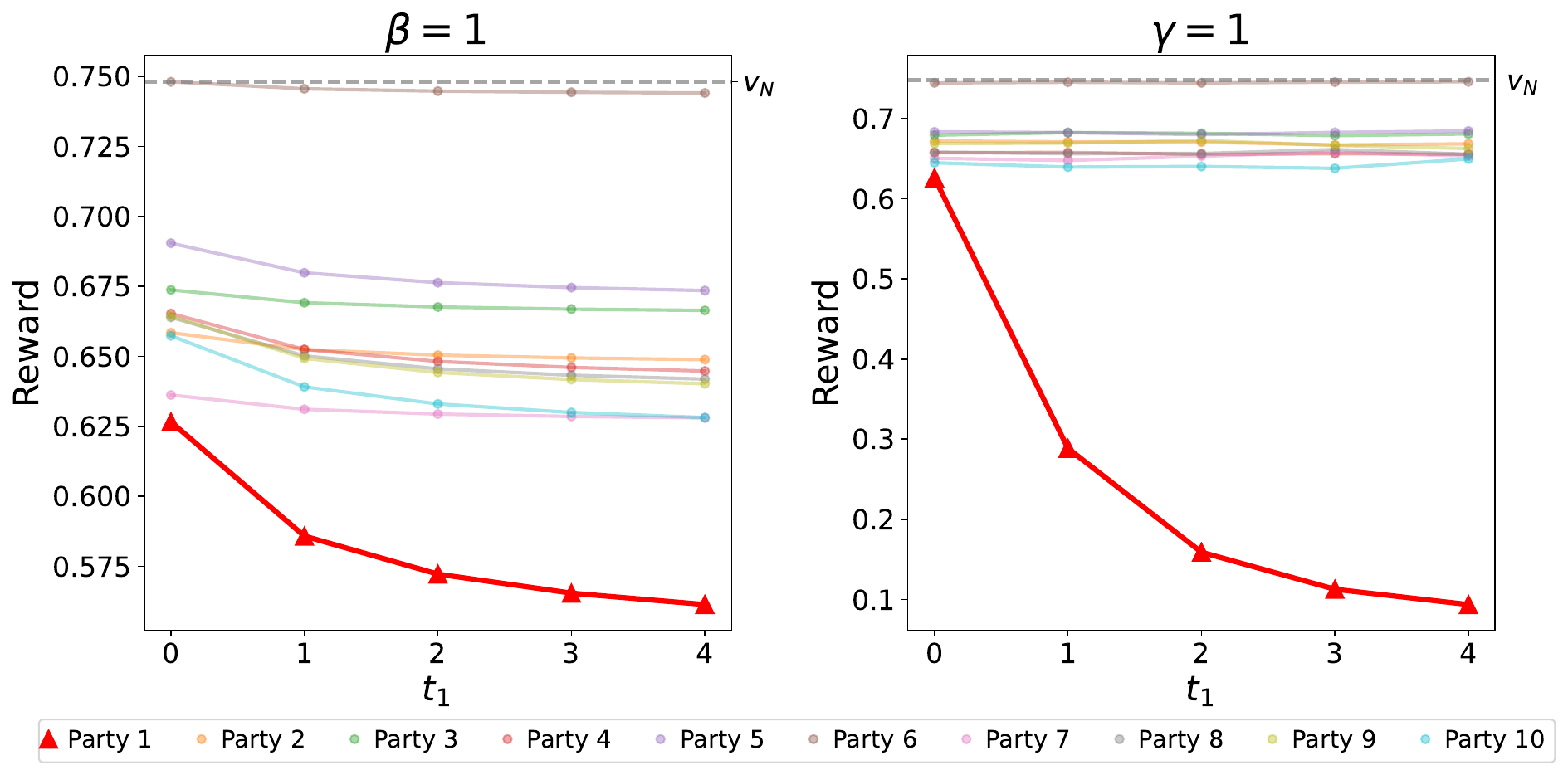}
    \caption{Graphs of reward values $r_i^*$ vs.~$t_1$ with the CIFAR-100 dataset using (a)~time-aware reward cumulation and (b)~time-aware data valuation.}
    \label{fig:cifar}
\end{figure}

\end{document}